\definecolor{deepblue}{rgb}{0,0,0.5}
\definecolor{deepred}{rgb}{0.6,0,0}
\definecolor{deepgreen}{rgb}{0,0.5,0}
\newcommand\pythonstyle{\lstset{
basicstyle=\ttfamily\footnotesize,
language=Python,
morekeywords={self, clip, exp, mse_loss, uniform_sample, concatenate, logsumexp},              %
keywordstyle=\color{deepblue},
emph={MyClass,__init__},          %
emphstyle=\color{deepred},    %
stringstyle=\color{deepgreen},
frame=single,                         %
showstringspaces=false
}}
\newcommand\pythoninline[1]{{\pythonstyle\lstinline!#1!}}
\newcommand{\y}{\mathbf{y}}
\newcommand{\by}{\mathbf{y}}
\newcommand{\bx}{\mathbf{x}}
\newcommand{\bz}{\mathbf{z}}
\newcommand{\ba}{\mathbf{a}}
\definecolor{blanchedalmond}{rgb}{1.0, 0.92, 0.8}
\definecolor{carmine}{rgb}{0.59, 0.0, 0.09}
\definecolor{lightblue}{rgb}{0.22,0.45,0.70}%
\newcommand{\scenario}[1]{\textbf{\textcolor{lightblue}{[C#1]}}}
\newtheorem{theorem}{Theorem}[section]
\newtheorem{lemma}[theorem]{Lemma}
\renewcommand{\mathbf}{\boldsymbol}
\def\Ddots{\mathinner{\mkern1mu\raise\p@
\vbox{\kern7\p@\hbox{.}}\mkern2mu
\raise4\p@\hbox{.}\mkern2mu\raise7\p@\hbox{.}\mkern1mu}}
\newcommand{\piref}{\pi_\mr{ref}}
\newcommand{\mr}{\mathrm}
\numberwithin{equation}{section}
\definecolor{amaranth}{rgb}{0.9, 0.17, 0.31}
\definecolor{antiquebrass}{rgb}{0.8, 0.58, 0.46}
\definecolor{antiquefuchsia}{rgb}{0.57, 0.36, 0.51}
\definecolor{chromeyellow}{rgb}{0.31, 0.47, 0.26}
\DeclareMathOperator*{\argmax}{arg\,max}
\def\mathcolor#1#{\@mathcolor{#1}}
\def\@mathcolor#1#2#3{%
  \protect\leavevmode
  \begingroup
    \color#1{#2}#3%
  \endgroup
}
\newtcolorbox{AIbox}[2][]{aibox,title=#2,#1}
\Crefname{assumption}{Assumption}{Assumptions}
    \let\Cref\crtCref
    \let\cref\crtcref
\title{{Preference Fine-Tuning of LLMs Should Leverage Suboptimal, On-Policy Data}}
\author[1*]{Fahim Tajwar}
\author[2*]{Anikait Singh}
\author[2]{Archit Sharma}
\author[2]{Rafael Rafailov}
\author[1]{Jeff Schneider}
\author[4]{Tengyang Xie}
\author[2]{Stefano Ermon}
\author[2]{Chelsea Finn}
\author[3]{Aviral Kumar}
\affil[*]{Equal contributions (ordered via coin-flip)}
\affil[1]{Carnegie Mellon University}
\affil[2]{Stanford University}
\affil[3]{Google DeepMind}
\affil[4]{UW-Madison}
\begin{abstract}
\vspace{-0.3cm}
Learning from preference labels plays a crucial role in fine-tuning large language models. There are several distinct approaches for preference fine-tuning, including supervised learning, on-policy reinforcement learning (RL), and contrastive learning. 
Different methods come with different implementation tradeoffs and performance differences, and existing empirical findings present different conclusions, for instance, some results show that online RL is quite important to attain good fine-tuning results, while others find (offline) contrastive or even purely supervised methods sufficient. This raises a natural question: \emph{\textbf{what kind of approaches are important for fine-tuning with preference data and why?}} In this paper, we answer this question by performing a rigorous analysis of a number of fine-tuning techniques on didactic and full-scale LLM problems. 
Our main finding is that, in general, approaches that use on-policy sampling or attempt to push down the likelihood on certain responses (i.e., employ a ``negative gradient'') outperform offline and maximum likelihood objectives. We conceptualize our insights and unify methods that use on-policy sampling or negative gradient under a notion of mode-seeking objectives for categorical distributions. Mode-seeking objectives are able to alter probability mass on specific bins of a categorical distribution at a fast rate compared to maximum likelihood, allowing them to relocate masses across bins more effectively. Our analysis prescribes actionable insights for preference fine-tuning of LLMs and informs how data should be collected for maximal improvement.
\end{abstract}
\begin{document}

\maketitle

\vspace{-0.5cm}
\section{Introduction}
\vspace{-0.2cm}

Pre-training endows a large language model (LLM) with knowledge about the world. Yet, it does not provide a lever to control responses from these models, especially when we want these solutions to optimize some task-dependent success criteria (e.g., align with human preferences, optimize correctness or compactness). To align LLMs with downstream success criteria, they are then fine-tuned with downstream objectives after pre-training. In this paper, we focus on fine-tuning problems that aim to optimize for binary preferences (from humans or other AI models). A plethora of methods have been proposed for this sort of fine-tuning, including supervised learning on filtered responses~\citep{gulcehre2023reinforced}, contrastive training~\citep{rafailov2023direct}, and on-policy reinforcement learning (RL)~\citep{ouyang2022training} on a reward function extracted from human preferences. 

In theory, while all of these methods aim to discover identical optimal policies, achieving this in practice would require full data coverage and infinite computation. These requirements are not met in practice, and hence, the choice of the loss function and the optimization procedure affects performance. However, a lack of a clear understanding of different approaches, coupled with different tradeoffs in implementation, has resulted in substantial confusion: practitioners are unsure as to: \textbf{(1)} whether RL~\citep{ouyang2022training} is required at all, or contrastive approaches~\citep{rafailov2023direct,2023arXiv231012036G}, supervised fine-tuning are good enough; and \textbf{(2)} whether preference data should be collected with models in the loop (i.e., in an ``on-policy'' fashion) or not.

Our goal is to provide clarity on these questions by performing a rigorous study to understand the behavior of existing methods when optimizing for preferences. Our study operates under assumptions typical in preference fine-tuning, including the existence of an underlying ground truth reward function that explains the preference data. We study methods that train an LLM policy to optimize a surrogate loss given by the expected reward under a model of the reward function (learned from preference data) penalized by the KL-divergence between the policy and a reference policy. 

\begin{figure}[t!]
    \centering
    \vspace{-0.4cm}
    \includegraphics[width=0.9\textwidth]{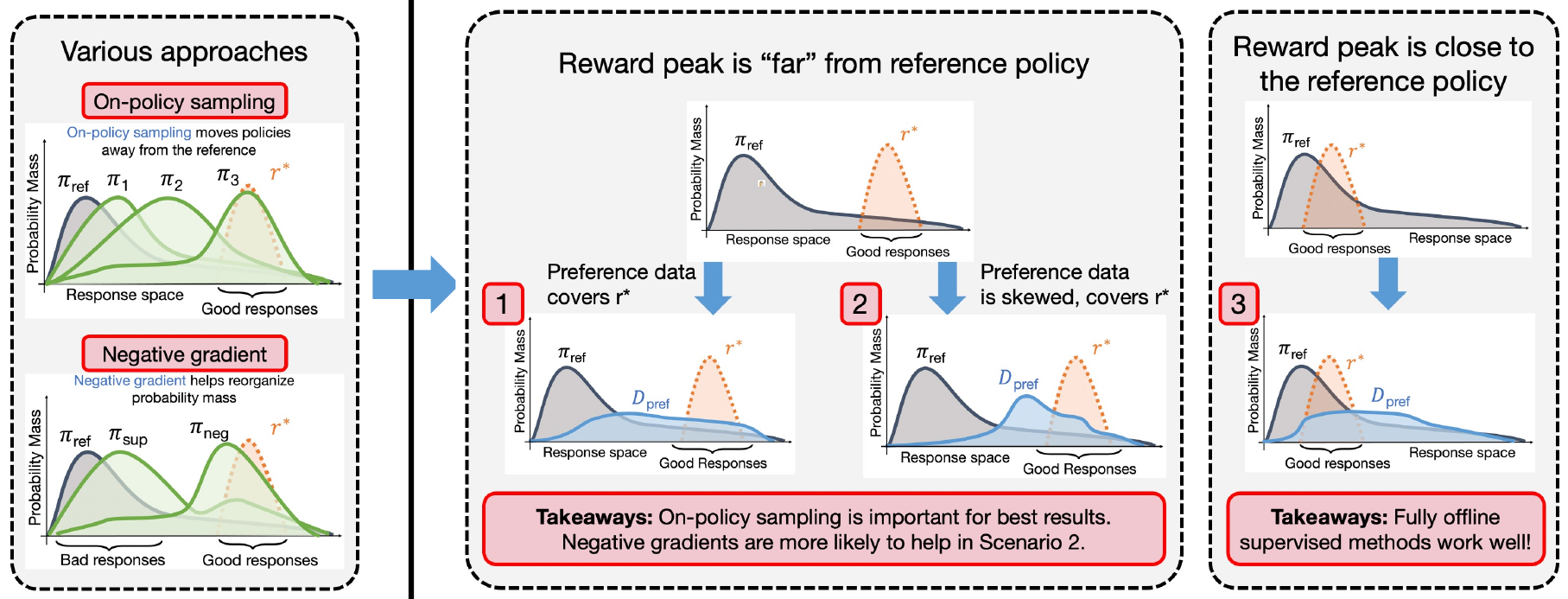}
    \vspace{-0.25cm}
    \caption{\footnotesize{\textbf{Left: an illustration of various fine-tuning techniques.} On-policy sampling gradually shifts policy mass from $\piref$ to $\pi_i$, moving it towards the peak in the reward function indicated by $r^*$. Offline methods that employ a negative gradient push down the likelihood of bad responses under the learned policy, resulting in a larger deviation of $\pi_\text{neg}$ compared to policies that only maximize some sort of likelihood, $\pi_\text{sup}$. \textbf{Right: our key takeaways for practitioners:} When the peak of the reward function lies in the less likely regions of $\piref$, on-policy sampling is generally beneficial. In conjunction, an explicit negative gradient approach (e.g., via RL objectives or via contrastive objectives) is beneficial when the preference data is skewed away from $\piref$ (case 2). When $r^*$ already lies in the high likelihood regions of $\piref$, offline supervised methods can work well. No on-policy sampling or negative gradients may be needed.}}
    \label{fig:teaser}
    \vspace{-0.35cm}
\end{figure}

To answer the above questions, we develop an analysis framework consisting of didactic bandit problems, synthetic LLM problems, and full-scale LLM problems, constructed out of AlpacaFarm~\citep{dubois2024alpacafarm} and UltraFeedback~\citep{2023arXiv231001377C}. We then study behaviors of different methods given coverage conditions and geometric relationships in the problem. \emph{\textbf{Our main observation}} {is that algorithms that use on-policy RL in a reward model or attempt to push-down likelihood on certain responses, i.e., utilize a negative gradient term as in contrastive objectives tend to outperform other offline supervised objectives} with no on-policy sampling or negative gradient. \textbf{This is surprising} because both on-policy and offline methods still utilize the same data for learning. We also find that \textbf{using on-policy sampling and negative gradients are especially important} when high-reward responses appear in less-likely regions of the reference policy distribution, and provide benefits complementary to each other. In particular, we find that supervised objectives such as Pref-FT  and Binary Feed-ME~\citep{dubois2024alpacafarm} are not able to effectively move probability mass from low reward responses to high-reward responses. Sampling on-policy responses for training, contrastive learning, or employing both on-policy sampling and contrastive training can accomplish this. 

\textbf{We theoretically show} that approaches that use on-policy RL or certain variants of contrastive training exhibit ``mode-seeking'' behavior, resulting in faster accumulation of probability mass on a subset of high-reward responses during learning. This behavior is in contrast to ``mode-covering'' supervised objectives that attempt to increase likelihood on all high-reward responses, and as a result, are unable to efficiently increase probability mass enough on one subset of high-reward responses. We then compare the behavior of a representative mode-seeking objective, the reverse KL-divergence, with the mode-covering forward KL-divergence to formalize this behavior for categorical distributions. Conceptually, this ability to commit to a certain subset of high-reward responses enables algorithms with on-policy sampling (and optionally, a negative gradient) to perform better than likelihood.

\textbf{Our work presents several actionable takeaways for practitioners.} First, we tie the performance of various methods to geometric conditions on the problem, which can inform practitioners which approach to use. Second, we observe a tradeoff between drawing more on-policy samples and performing more gradient steps with a different policy training objective. Understanding this tradeoff is useful for practitioners since on-policy sampling and training present different computational tradeoffs. Finally, since the performance of fine-tuning is tied to the data composition, we study the effect of conditions on the coverage of the preference data, which could inform data collection. 

\vspace{-0.3cm}
\section{Related Work}
\vspace{-0.2cm}

A dominant recipe for fine-tuning LLMs is to run supervised next token prediction (``supervised fine-tuning'') on a dataset of high-quality responses to obtain a good policy initialization. This is followed by fine-tuning on a dataset of human preferences~\citep{casper2023open,ouyang2022training}. This fine-tuning can use on-policy RL methods such as REINFORCE~\citep{NIPS1999_464d828b} or  PPO~\citep{2017arXiv170706347S} to maximize the predictions of a reward model obtained from the preference data, regularized with a KL constraint. Another approach~\citep{dubois2024alpacafarm} performs supervised fine-tuning on the filtered set of preferred completions in the preference dataset. A different family of methods runs supervised learning on preferred responses iteratively such as ReST~\citep{gulcehre2023reinforced}, RWR~\citep{2023arXiv230812050H}, and SuperHF~\citep{superhf}. Alternatively, methods such as DPO~\citep{rafailov2023direct}, IPO~\citep{2023arXiv231012036G}, SLiC-HF~\citep{2023arXiv230510425Z}, and KTO~\citep{HALOs2024} learn directly from human preferences, with no explicit reward model. Concurrent work also runs DPO iteratively~\citep{2024arXiv240110020Y,2024arXiv240101335C}. These methods come with different tradeoffs necessitating a study to understand their behaviors.

\textbf{Prior analysis work.} To understand the effect of preference fine-tuning, prior work attempts to uncover its effect on network parameters for a certain set of tasks~\citep{2023arXiv231112786J,2024arXiv240101967L}. Our analysis is complementary in that it studies conditions when different algorithms perform well, and is applicable to any downstream task. \citet{2023arXiv231006452K} study the contribution of RL fine-tuning on generalization to out-of-distribution prompts but this is complementary to our approach. \citet{2022arXiv221010760G, 2023arXiv231002743C, 2023arXiv231209244E} study reward over-optimization to better build reward models, which is complementary to the behavior of the policy optimization approach. 
\citet{agarwal2023gkd} develop a recipe that uses the mode-seeking KL divergence for knowledge distillation: this prior work is largely centered in the problem setting of distillation and does not study the optimization behavior of RL, contrastive, or supervised objectives.
Perhaps closely related to our work is \citet{singhal2023long}, which investigates the interplay between PPO and the composition of preference data, but this analysis is largely concentrated on studying the length bias of RL fine-tuning rather than developing insights into the behavior of fine-tuning algorithms. We do design didactic examples that use rewards dependent on length, but this is solely for analysis.

Concurrently, \citet{ahmadian2024back} show that REINFORCE may simply be enough for preference fine-tuning of LLMs and complex policy optimization methods such as PPO may not be needed. Our conclusions are mostly complementary, though we do observe that PPO is more robust to sample reuse than REINFORCE. Concurrently, \citet{sharma2024critical} compares contrastive and supervised fine-tuning on LLM-generated data, but this work does not study the role of coverage or geometric conditions. Nevertheless their conclusions that various approaches perform similarly when the peak in the reward function (i.e., oracle AI preferences) aligns with the likely regions in the data (i.e., responses generated from the same AI model), thus providing evidence to support our findings.

\vspace{-0.2cm}
\section{Characterizing And Unifying Preference Fine-Tuning Methods}
\label{sec:background}
\vspace{-0.15cm}

Typical preference fine-tuning methods use a variety of objectives including RL, maximum likelihood, and contrastive learning. While the huge number of fine-tuning methods inhibits us from empirically analyzing each of them, in this section we characterize several existing methods into different families and subsequently study a representative member from each family. 

\vspace{-0.2cm}
\subsection{Preliminaries and Notation}
\label{subsec:prelims}
\vspace{-0.2cm}

Typically, before training on preference data, a pre-trained model is fine-tuned on high-quality data from the task of interest via supervised fine-tuning (SFT), to obtain a ``reference'' model $\piref$. Then, to fine-tune $\piref$ with human preferences, usually a preference dataset $\mathcal{D}_{\text{pref}} = \{\bx^{(i)}, \by_w^{(i)}, \by_l^{(i)}\}$ is collected, where $\bx^{(i)}$ denotes a prompt and $\by_w^{(i)}, \by_l^{(i)}$ denote preferred and dispreferred responses, obtained typically by sampling from $\piref$. Given a preference dataset, most fine-tuning pipelines assume the existence of an underlying reward function $r^*(\bx, \cdot)$. One popular framework for this is the Bradley-Terry (BT) model~\citep{bradleyterry1952preferences}, assuming that human preferences can be written as:
\begin{align} 
\label{eq:bradley_terry}
    \textstyle p^*(\by_1 \succ \by_2 | \bx) = \frac{e^{r^*(\bx, \by_1)}}{e^{r^*(\bx, \by_1)} + e^{r^*(\bx, \by_2)}}
\end{align}
Given this reward function $r^*$, preference fine-tuning aims to find the optimum of the reward $r^*$. While the ultimate goal of preference fine-tuning is to find the \emph{unconstrained} optimum of the reward function, in practice, we often replace the reward function with a reward model. Since the reward model is erroneous, we apply  KL-constraint to prevent exploitation in the reward model. To align our results with typical preference fine-tuning procedures, we will consider such a KL-constrained reward optimization as our fine-tuning goal:
\begin{align} 
\label{eq:rlhf_objective}
    \!\!\!\!\max_{\pi_{\theta}}~ & \mathbb{E}_{\bx \sim \mathcal{D}_{\text{pref}}, \by \sim \pi_{\theta}(\cdot|x)} [r^*(\bx, \by)] - \beta \mathbb{D}_{\text{KL}}[\pi_{\theta}(\cdot|\bx) || \pi_{\text{ref}}(\cdot|\bx)] ~~ \text{\textcolor{red}{\textbf{(Surrogate fine-tuning goal)}}}
\end{align}
The regularizer, weighted by $\beta$, controls the deviation of $\pi$ from $\piref$ under the reverse KL divergence. 

\textbf{Reward model training.} In order to fine-tune an LLM policy $\pi_\theta(\by|\bx)$, \cref{eq:bradley_terry} provides a convenient way to learn a reward model either explicitly (i.e., by fitting a parametric reward model $r_\phi(\bx, \by)$) or implicitly (i.e., via direct preference optimization (DPO)~\cite{rafailov2023direct} or IPO~\citep{2023arXiv231012036G}, that re-purposes the log-likelihood $\log \pi_\theta(\by|\bx)$ of the policy to represent the reward $r_\theta(\bx, \by)$). Explicit reward models are trained using the following classification objective:
\begin{align} \label{eq:reward_learning}
\!\!\!\max_\phi~ \mathbb{E}_{(\bx, \by_w, \by_l) \sim \mathcal{D}_{\text{pref}}} \left[\log \sigma \left(r_\phi(\bx, \by_w) - r_\phi(\bx, \by_l) \right) \right],
\end{align}
where $\sigma$ is the logistic function. Contrastive learning objectives~\citep{rafailov2023direct,2023arXiv231012036G} on the other hand repurposes $\log \pi_\theta(\by|\bx)$ as the implicit reward $r_\theta(\bx, \by)$:
\begin{align}
\label{eq:contrastive_parameterization}
    r_\theta(\bx, \by) = \beta \left[ \log \pi_\theta(\by|\bx) - \log \piref(\by|\bx)\right].
\end{align}

\vspace{-0.2cm}
\subsection{Characterizing Fine-Tuning Methods}
\label{subsec:characterization}

With a reward model $r_\phi(\bx, \by)$, most fine-tuning approaches attempt to discover the policy $\pi_\theta(\by|\bx)$ which optimizes \cref{eq:rlhf_objective} by using $r_\phi$ as a surrogate for $r^*$. Since we cannot empirically investigate all of these methods, we group them into different categories (summary shown in \cref{table:characterization}). In particular, we are interested in whether these methods employ:

\vspace{-0.2cm}
\begin{enumerate}
\item \textbf{\emph{on-policy sampling}}: an explicit sampling of new responses from the policy (e.g., PPO, REINFORCE) or purely learning from offline data (e.g., RWR, DPO, IPO)
\item \textbf{\emph{on-policy sample reuse}:} for only those approaches that perform on-policy sampling, whether the approach makes more than one gradient update on a given prompt-response $(\bx, \by)$ pair (e.g., exactly 1 update for REINFORCE, $\geq 1$ for PPO, online RWR)
\item \textbf{\emph{negative gradient}:} whether the approach explicitly minimizes a loss that attempts to ``push-down'' likelihood on certain responses by multiplying the gradient of their likelihood with a negative coefficient (e.g., contrastive methods such as DPO; RL methods REINFORCE, PPO)
\end{enumerate}
\vspace{-0.3cm}

\textbf{On-policy RL} approaches such as PPO~\citep{schulman2017proximal} and REINFORCE~\citep{williams}
explicitly sample new responses from the current snapshot of the learned policy, $\by_i \sim \pi_\theta(\cdot|\bx_i)$, score them under the reward model, and perform a policy gradient update on parameters $\theta$, for example:
\begin{align}
    \label{eq:policy_grad}
    \theta' \leftarrow  \theta -  \eta \mathbb{E}_{\bx \sim \mathcal{D}_\text{pref}, \by \sim \pi_\theta(\cdot|\bx)} \left[\nabla_\theta \log \pi_\theta(\by|\bx) \cdot \bar{r}_\phi(\bx, \by)\right]~~~~~~~~~~ \text{(REINFORCE)},
\end{align}
is the gradient update employed by REINFORCE, where $\bar{r}_\phi(\bx, \by)$ corresponds to a normalized estimate of the reward model's predictions over a batch of samples drawn from the policy. As we discuss in more detail in \cref{subsection:reward_normalization}), using a normalized reward estimate instead of directly the raw reward value helps reduce the variance of the policy gradient estimate. High variance gradients slow down convergence and even sometimes lead to sub-optimal solutions in deep RL~\citep{mei2022role}.

Due to the use of normalized reward estimates, policy gradient approaches behave distinctly from maximum likelihood supervised learning: a policy gradient update also updates the parameters $\theta$ in a direction that attempts to push down likelihood $\log \pi_\theta(\by'|\bx)$ for samples $\by'$ on which normalized reward $\bar{r}_\phi(\bx, \by') < 0$. This means that on-policy RL also has a form of the \textbf{``negative gradient''}. 

\textbf{PPO differs from REINFORCE} because it employs \emph{sample reuse} in addition to on-policy sampling: unlike REINFORCE which only performs a single gradient update on a response sampled from the current policy, PPO can utilize a response for several policy updates. To prevent making updates on overly off-policy responses, there is a mechanism in place to filter responses by the magnitude of the importance ratio between the current policy $\pi_\theta(\by|\bx)$ and the data collection policy.

Finally, we also remark that while on-policy methods do generate new rollouts from the policy, these responses are still scored by a reward model (and not the ground truth reward function, i.e., humans). Since reward labels come from a reward model, on-policy preference fine-tuning approaches are instances of \textbf{offline model-based RL}~\citep{yu2021combo,yu2020mopo,kidambi2020morel} methods that run on-policy rollouts against a learned dynamics and reward model (due to the single step nature of preference fine-tuning, there is no dynamics model).

\begin{table}[h!]
\centering
\resizebox{0.95\textwidth}{!}{\begin{tabular}{c || c  c c} 
 \toprule
\textbf{Fine-Tuning Approach} & \textbf{On-Policy Sampling} & \textbf{Sample Reuse} & \textbf{Negative Gradient} \\
\midrule
PPO & $\checkmark$ & $\checkmark$ & $\checkmark$ \\
REINFORCE & $\checkmark$ & $\times$ & $\checkmark$ \\
\midrule
DPO, IPO, and variants & $\times$ & N/A & $\checkmark$ \\
\midrule
Pref-FT, Binary FeedMe & $\times$ & N/A & $\times$ \\
offline RWR, offline Best-of-N & $\times$ & N/A & $\times$ \\
\midrule
ReST, RWR, online Best-of-N &  $\checkmark$ & $\checkmark$ & $\times$ \\
\bottomrule
\end{tabular}}
\vspace{-0.25cm}
\caption{\footnotesize{\textbf{Grouping various fine-tuning methods} along the axes on-policy sampling, sample reuse, and negative gradient. Since offline methods do not collect on-policy data, the question of discarding or reusing on-policy samples is not applicable.}}
\label{table:characterization}
\end{table}

\textbf{On-policy supervised approaches} such as RAFT~\citep{dong2023raft}, ReST~\citep{gulcehre2023reinforced}, and SuperHF~\citep{superhf} iteratively minimize a weighted maximum likelihood loss inspired by \citet{peters2007reinforcement,korbak2022reinforcement}. For a given prompt $\bx_i$, these methods sample $N$ responses from the model: $\by^1_i, \cdots, \by^N_i \sim \pi_\theta(\cdots|\bx_i)$, then weight these responses by the exponentiated reward, $\exp(r_\phi(\bx_i, \by_i^j) / \beta)$ as in the case of reward-weighted regression (RWR) or obtain the subset of $K$ highest rewarding responses as in the case of ReST or Best-of-N. Finally, these methods train via supervised next-token prediction on these filtered or weighted responses. Given a weighting function, $F(\bx_i, \by_i^j |\by_i^{0\cdots N})$ that maps a response $\by_i^j$ for a given prompt $\bx_i$ to a scalar value conditioned on other responses $\by_i^k$ sampled from the model for the same prompt $\bx$, these methods maximize: 
\begin{align*}
    \max_\theta~ \mathbb{E}_{\bx \sim \mathcal{D}_\text{pref}, \by^{0 \cdots N} \sim \pi_{\theta^\text{old}}} \left[\log \pi_\theta(\by^i|\bx) \cdot F(\bx, \by^i |\by^{0\cdots N}) \right].\!\!\!
\end{align*}
These algorithms employ sample reuse because they operate in a \textbf{``batched'' online fashion}: instead of performing \textbf{\emph{exactly one}} gradient step on a given model sample; RWR, ReST, and SuperHF run more gradient updates, after which new samples are drawn. However, \textbf{since these methods only maximize likelihood (i.e., only positive multipliers), there is no negative gradient effect}. 

\textbf{Fully offline methods} like DPO and IPO run contrastive training on the preference dataset $\mathcal{D}_\text{pref}$ without any on-policy sampling. These methods train using variants of \cref{eq:reward_learning} (objective for IPO is shown \cref{subsection:ipo}) combined with \cref{eq:contrastive_parameterization} on responses $\by_w$ and $\by_l$ from the preference dataset $\mathcal{D}_\text{pref}$. Despite no on-policy sampling, contrastive loss between winning and losing responses explicitly attempts to reduce log-likelihood ratio $\log \left(\frac{\pi_\theta(\by|\bx)}{\piref(\by|\bx)}\right)$ for $\by_l$. Another offline method is Pref-FT~\citep{dubois2024alpacafarm} which runs supervised fine-tuning on preferred responses. These methods in general are akin to \textbf{offline model-free} methods, in that no reward model is utilized by these methods.

\vspace{-0.25cm}
\section{Research Questions and Analysis Setup}
\label{sec:questions}
\vspace{-0.25cm}

Our goal is to understand the behaviors of various procedures for fine-tuning language models. As discussed above, typically these methods differ along the use of on-policy sampling (with additional differences pertaining to sample reuse) and the presence of a negative gradient. We build a setup to understand these differences empirically by answering the following questions:

\textcolor{lightblue}{\textbf{Question 1:}} When does on-policy sampling improve over offline fine-tuning, even though on-policy samples are annotated by a reward model, which itself is learned from offline data? Is sample reuse useful or harmful for on-policy methods?

\textcolor{lightblue}{\textbf{Question 2:}} When does an explicit negative gradient help the discovery of effective policies compared to maximum likelihood approaches such as distilling the Best-of-N policy?

\textcolor{lightblue}{\textbf{Question 3:}} Does on-policy sampling offer complementary benefits to negative gradient, resulting in better performance with effective contrastive approaches (e.g., DPO)?

To gain practically useful and actionable insights, we must answer these questions in the context of coverage and geometric relations between the training data, reference policy, and the reward function. These relations affect the shape of the optimally fine-tuned policy and dictate the dynamics of various objectives under consideration. We consider specific conditions and relations that we discuss next.

\vspace{-0.25cm}
\subsection{Coverage Conditions and Geometric Relationships}
\label{sec:conditions}
\vspace{-0.2cm}

The dynamics of the KL-constrained surrogate optimization problem (\cref{eq:rlhf_objective}) depends on the geometric alignment between the ground-truth reward function $r^*$ and the reference policy initialization $\piref$ (see \cref{fig:teaser}). When the surrogate reward model $r_\phi$ is learned from the preference data, the coverage of the preference data used to train this reward model relative to the reference policy $\piref$ also dictates the correctness of reward estimates and hence controls the efficacy of the surrogate fine-tuning optimization. Likewise, the performance of purely offline methods (e.g., offline best-of-N or contrastive methods such as offline DPO) that do not use a reward model also depends on the relative geometric alignment between $r^*$ and $\piref$ (i.e., a smaller alignment would necessitate more deviation from $\piref$) and also on the relative coverage of preference data (i.e., the lower the coverage, the harder it is to discover high-reward responses). To understand the efficacy of various methods, we consider multiple scenarios that differ along these two factors: 
\begin{itemize}
\vspace{-0.2cm}
    \item \scenario{1}: the geometric alignment between the ground-truth reward function $r^*$ and the reference $\piref$, that can be measured in terms of any probabilistic divergence $\mathrm{D}(\piref, \exp(r^*))$. This concept is analogous to that of a \textbf{``concentrability coefficient''}~\citep{munos2008finite}.
    \item \scenario{2}: the coverage of the preference data used to train the surrogate reward model $r_\phi$ relative to the reference policy $\piref$, that can be measured in terms of the average density of the responses in the preference dataset under the reference policy initialization, $\piref$.
\vspace{-0.2cm}
\end{itemize}
Understanding the behavior of various approaches as a function of these factors will allow us to better understand the performance of various approaches on downstream fine-tuning in terms of problem geometry~\scenario{1} and statistical learning considerations~\scenario{2}.

\vspace{-0.3cm}
\subsection{Tasks and Datasets} 
\label{sec:tasks}
\vspace{-0.2cm}

We construct a variety of didactic and LLM tasks that allow us to gain intuition for performance of different methods under various scenarios grouped along relationships \scenario{1} and \scenario{2}.

\begin{wrapfigure}{r}{0.45\textwidth}
\centering
\vspace{-0.45cm}
\includegraphics[width=0.99\linewidth]{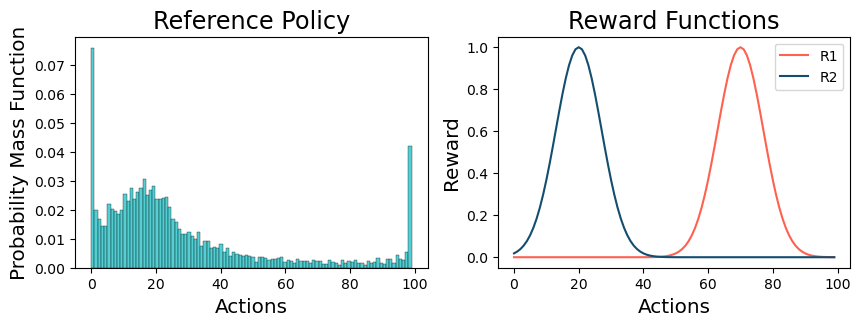}
\vspace{-0.3cm}
\caption{\label{fig:bandit_problem_setup} \footnotesize{\textbf{The didactic bandit problem} which we use for our analysis in this paper. Reference policy initialization and reward slice for each token (the total reward is a mean of token-level rewards). The optima of reward functions $\mathbf{R}_1$ and $\mathbf{R}_2$} occur in low-density and high-density regions respectively.}
\vspace{-0.35cm}
\end{wrapfigure}
\textbf{Didactic $N$-d bandit problems.} \cref{eq:rlhf_objective} poses preference fine-tuning as a KL-regularized contextual bandit problem over contexts $\bx$. Therefore, we develop a didactic $N$-dimensional contextual bandit problem. We use a set of tokens of size $V$ of size $100$. The context, $\bx$, is a single discrete token from $V$. A response $\ba$ is a sequence of $N = 10$ discrete tokens from $V$. We primarily study the effect of geometric relationship \scenario{1} and assume that the reward function is known exactly, therefore not accounting for the data coverage and training of the reward model. We consider two reward functions that differ in their relative geometry relative to the reference policy, as shown in \cref{fig:bandit_problem_setup}. Specifically, the difference lies in how perfectly the optimum of the reward function aligns with the high-density regions of the reference distribution. The optimum of the reward function $\mathbf{R}_1$ is located in low likelihood regions of the reference policy, whereas the optimum of $\mathbf{R}_2$ is roughly aligned with the mode of the reference policy. We hypothesize that on-policy sampling will be crucial to optimize reward function $\mathbf{R}_1$, whereas offline or maximum likelihood methods could be sufficient for the optimization of $\mathbf{R}_2$. 

\begin{wrapfigure}{r}{0.45\textwidth}
\centering
\vspace{-0.45cm}
\includegraphics[width=0.99\linewidth]{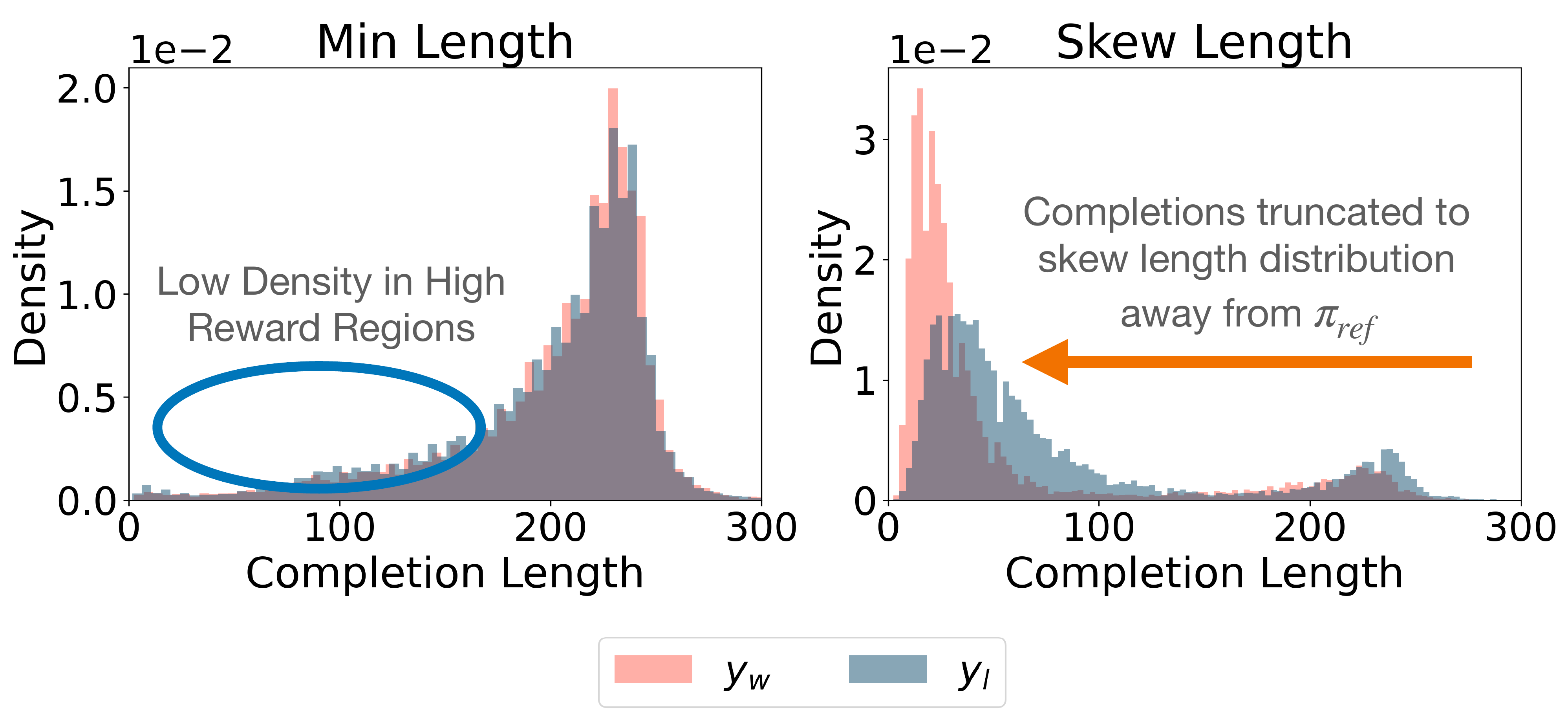}
\vspace{-0.7cm}
\caption{\footnotesize{\textbf{Word length distribution.} Above, we show the word length distribution for the preferred and dispreferred completions of the \textbf{Left:} min and \textbf{Right:} skew synthetic LLM datasets.}}
\label{fig:skew_dist}
\vspace{-0.5cm}
\end{wrapfigure}
\textbf{Synthetic LLM fine-tuning problems.}
Next, we will generalize our intuitions from bandit problems to the LLM setting. Instead of directly experimenting with human preferences, we first study two synthetic problems that utilize hand-crafted reward functions, which can be approximated via reward models. {Access to functional forms of these hand-crafted reward functions will enable us to track the ground-truth objective throughout training to see if our insights about various approaches under condition \scenario{1} will hold even when learning against a reward model. Subsequently, we run this experiment with an altered skewed preference data distribution (see Figure~\ref{fig:skew_dist}) to understand the effect of coverage conditions \scenario{2}. We consider two reward functions: \textbf{(1)} one that minimizes the response length (``\textbf{Min Length}''), analogous to $\mathbf{R}_1$ in the bandit problem, and \textbf{(2)} that attempts to anchor the response length to a pre-specified target value (``\textbf{Avg Length}''), which lies in the mode of the target distribution. This second condition exhibits similar characteristics to $\mathbf{R}_2$. The \textbf{Skew Length} scenario skews the preference data in the \textbf{Min Length} problem scenario.

\textbf{Full-scale LLM fine-tuning.} Finally, we scale up our study to full-scale LLMs, with real preference data. Recent work~\citep{singhal2023long} shows that preference labels are usually biased towards much longer responses, indicating that preference fine-tuning usually admits a geometric relationship where the mode of the reward function is distinct from the mode of human data (and hence, any reference policy). For the majority of our experiments, we use preference datasets from the AlpacaFarm benchmark~\citep{dubois2024alpacafarm}. We also scale up our experiments to UltraChat~\citep{ding2023enhancing}, a $\sim 10$ times larger dataset with responses from many strong LLMs such as GPT 4 and GPT-3.5.

\vspace{-0.2cm}
\subsection{A Generic Fine-Tuning Algorithm Encapsulating All Axes}
\vspace{-0.2cm}

To systematically analyze the behavior of fine-tuning methods that differ along the axes discussed in \cref{subsec:characterization}, in this section, we introduce a generic algorithm with different hyperparameters associated with each axes. With a generic algorithm of this sort, we will be able to answer our research questions by varying each hyperparameter. 
Our unified practical algorithm is shown \cref{alg:onpolicy}. While on-policy algorithms perform steps 1 and 2 of on-policy data collection with a reward model, purely offline methods (e.g., DPO and RWR) utilize preference data directly.

\begin{algorithm}
\caption{A Unified Fine-Tuning Algorithm}
\label{alg:onpolicy}
\begin{algorithmic}
\For{training iterations}
    \State  (1) Sample $B / C$ prompts $[\bx_1, \bx_2, \cdots, \bx_{B/C}]$.
    \State  (2) Generate dataset $\mathcal{D}$ with $C$ responses for $\frac{B}{C}$ prompts, from the policy\\ ~~~~~~~~~~~for online ($\by_i^1, \by_i^2, \cdots, \by_i^C \sim \pi_\theta(\cdot|\bx_i)$) or from an offline dataset ($\by_i^1, \by_i^2, \cdots, \by_i^C \sim \mathcal{D}_{\text{pref}}$).
    \State (3) If applicable, label the responses $\by_i^1, \by_i^2, \cdots, \by_i^C$, with rewards drawn \\ ~~~~~~~~~~~from the learned reward model $\widehat{r}_\phi(\by | \bx)$
    \For{$T$ inner iteration steps}
        \State (a) Divide $\mathcal{D}$ into mini-batches $\mathcal{D}_1, \ldots, \mathcal{D}_{N}$, each with $M$ prompts-response pairs
        \For{$i = 1, \ldots, N$}
            \State (i) Apply the gradient of the objective $\mathcal{L}(\theta; \mathcal{D}_i; \widehat{r}_\phi)$ prescribed by the fine-tuning method.
        \EndFor{}
    \EndFor{}
\EndFor{}
\end{algorithmic}
\end{algorithm}
\vspace{-0.2cm}

To study the impact of on-policy sampling, we vary the extent to which updates are made on data from the current policy. We can control this by two means in \cref{alg:onpolicy}: \textbf{(1)} by varying the total number of samples $|\mathcal{D}| = \frac{B}{C} \times C = B$ used for a given training iteration assuming the algorithm performs exactly one pass over all this sampled data while keeping the \textbf{minibatch size $M$ fixed}, and \textbf{(2)} by varying the number $T$ of gradient steps performed on a given set $\mathcal{D}$ of on-policy samples (i.e., a larger $T$ leads to more off-policy updates). In other words, approach \textbf{(1)} will perform more updates using stale data for large values of $|\mathcal{D}|$; and for small values of $|\mathcal{D}|$, approach \textbf{(2)} will make more off-policy updates if $T$ is larger. While both approaches enable us to control how on-policy an algorithm is, approach \textbf{(1)} does not reuse samples (since $\mathcal{D}$ is large), but approach \textbf{(2)} reuses samples for different number of gradient updates, controlled directly by $T$. By studying both approaches for inducing off-policyness, we can isolate the effect of sample reuse on on-policy methods. We also study offline methods with no on-policy sampling, such as DPO, and filtered supervised learning on the preferred response $\by_w$ in the dataset to understand the role of the negative gradient.

\vspace{-0.25cm}
\section{Empirical Analysis Results}
\vspace{-0.2cm}

In this section, we will present the results of our empirical study to answer our research questions. To answer each question, we will begin by studying the didactic bandit problem with the ground-truth reward function, followed by synthetic and then full-scale LLM fine-tuning problems.

\vspace{-0.2cm}
\subsection{Question 1: The Role of On-Policy Sampling}
\label{sec:question1}
\vspace{-0.15cm}

To understand the role of on-policy sampling, we will investigate if on-policy sampling can improve performance for several approaches followed by making conclusions regarding sample reuse. 

\vspace{-0.25cm}
\subsubsection{Takeaway 1: On-Policy Sampling in the Reward Model Improves Performance}
\vspace{-0.2cm}

We first study on-policy sampling as a function of the geometric relationship \scenario{1} in our bandit setting (see \cref{fig:bandit_problem_setup}), with no sampling error. Then, we will extend our conclusions to the LLM setting.

\begin{figure}[h!]
\vspace{-0.2cm}
    \centering
    \includegraphics[width=\columnwidth]{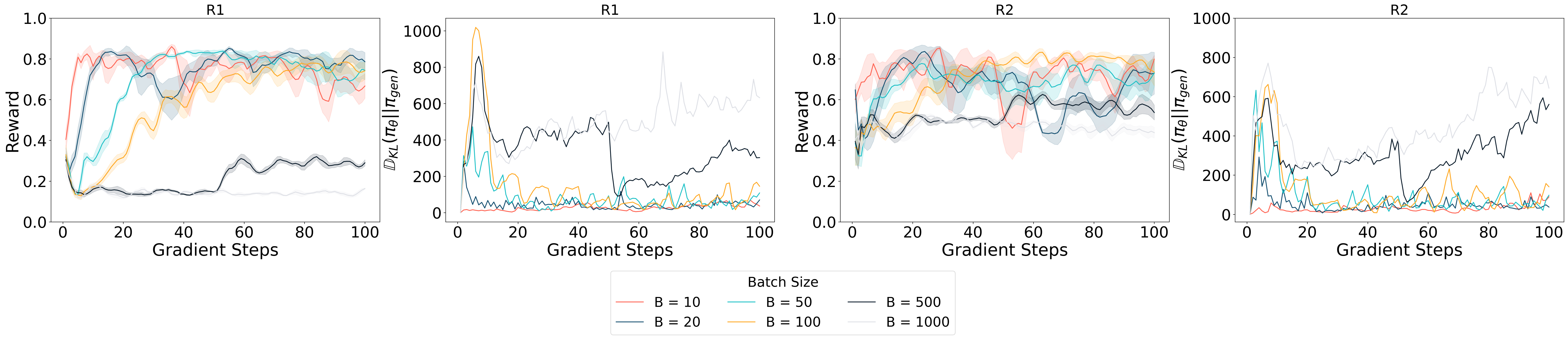}
    \vspace{-0.7cm}    \caption{\label{fig:transformer_batch_size_on_policy}\footnotesize{\textbf{On-policy sampling on bandit problems.} Performance of on-policy best-of-N as a function of the data sampled in each iteration. Larger batch sizes result in more off-policy updates. \textbf{Left}: (i) reward vs update step for $\mathbf{R}_1$, (ii) divergence between the policy parameters and data collection policy during training; \textbf{Right:} (i) reward vs update step for $\mathbf{R}_2$, (ii) KL divergence for $\mathbf{R}_2$. Observe the slow learning speed of more off-policy updates in $\mathbf{R}_1$, but less severe degradation for $\mathbf{R}_2$, where peaks in the reference policy and reward function are more aligned.}}
    \vspace{-0.25cm}
\end{figure}
\textbf{Didactic bandit problems.} ~\cref{fig:transformer_batch_size_on_policy} shows that given a fixed amount of total data budget, \emph{sampling data more frequently from more recent policies}, but in smaller batches, results in better performance with both $\mathbf{R}_1$ and $\mathbf{R}_2$. Doing so, naturally makes the algorithm more on-policy since each gradient update uses a mini-batch sampled from a more recent policy. This is also reflected in larger values of divergences between the sampling policy $\pi_\text{gen}$ and the policy $\pi_\theta$, $\mathbb{D}_{\text{KL}}(\pi_\theta || \pi_\text{gen})$, in~\cref{fig:transformer_batch_size_on_policy}. Concretely, larger $B$ results in higher peak values of this divergence during training indicating further deviation from the data at intermediate times during training. This means that being more on-policy corresponds to better performance and faster convergence for best-of-N. %

That said, we also note in \cref{fig:transformer_batch_size_on_policy} that the performance degradation with more off-policy updates is substantially milder for $\mathbf{R}_2$, indicating that when the peak in the reward function lies in the high likely regions of the reference policy, a higher degree of off-policy updates is tolerable.

\begin{table}[ht!]
\centering
\begin{tabular}{c || c  c} 
 \toprule
\scenario{1} $\downarrow$ ~~~ $\vert\vert$ ~~~ \scenario{2} $\rightarrow$  & high $\mathcal{D}_\text{pref}$ and $\pi_\text{ref}$ overlap & low $\mathcal{D}_\text{pref}$ and $\pi_\text{ref}$ overlap\\
\midrule
peaks of $r^*$ and $\pi_\text{ref}$ overlap & $\checkmark$ \textbf{Mode Length} & $\times$ \\
peaks of $r^*$ and $\pi_\text{ref}$ disjoint & $\checkmark$ \textbf{Min Length}& $\checkmark$ \textbf{Skew Length} \\
\bottomrule
\end{tabular}
\vspace{-0.25cm}
\caption{\footnotesize{\textbf{Coverage conditions and geometric relations} that we study with synthetic LLM fine-tuning data. The three settings we study differ in terms of overlap between $\pi_\text{ref}$, reward $r^*$, and the preference dataset, $\mathcal{D}_\text{pref}$.}}
\label{table:conditions}
\vspace{-0.25cm}
\end{table}
\textbf{Synthetic LLM problems.} In this problem setting, we optimize the policy against a reward model, which is learned from preference data. Per \cref{sec:tasks}, we construct three scenarios that differ along geometric (\scenario{1}) and coverage (\scenario{2}) conditions as depicted in \cref{table:conditions}. The peak of the reward in the \textbf{Min Length} scenario appears in the less likely regions of  $\pi_\text{ref}$, whereas the peak of the reward function in the \textbf{Mode Length} scenario appears in highly likely regions under $\pi_\text{ref}$. 

\begin{figure}[h!]
\vspace{0.05cm}
    \centering
    \includegraphics[width=0.8\columnwidth]{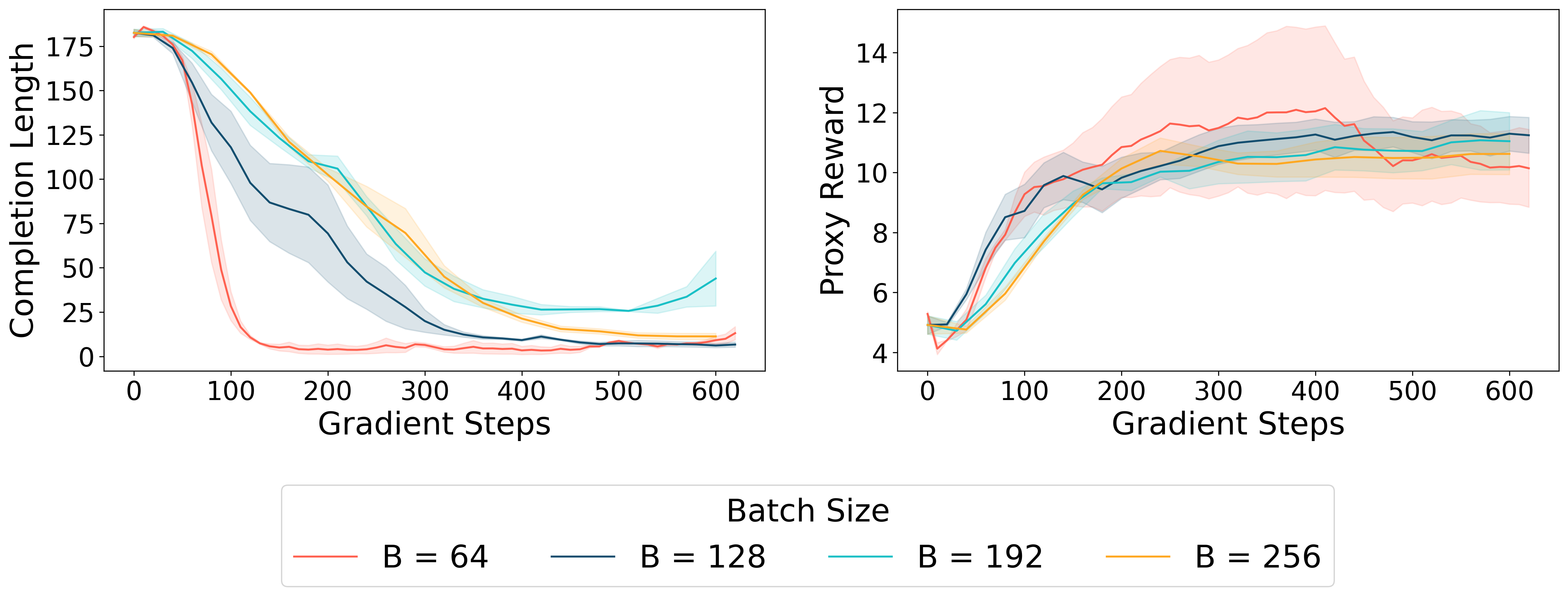}
    \vspace{-0.4cm}
\caption{\label{fig:min_length_batch_size_on_policy}\footnotesize{\textbf{On-policy sampling for PPO in the Min Length scenario.}  This plot keeps the minibatch size $M$ fixed to 64, but samples more stale data when $B$ is large. Increasing $B$ results in more off-policy updates and consequently slower convergence to ground-truth reward (i.e., a completion length of $0$). \textbf{Left}: average completion length (lower the better), and \textbf{Right}: proxy reward vs gradient steps. Being more on-policy results in better performance. The mini-batch size $M$ used for gradient updates is kept fixed to avoid confounders arising from the use of stochastic optimization procedures.}}
    \vspace{-0.2cm}
\end{figure}

We present our results for one algorithm in detail (in this case, PPO) (\cref{fig:min_length_batch_size_on_policy,fig:mode_length_batch_size_on_policy,fig:skew_length_batch_size_on_policy}) and then present a summary plot showing that our conclusions also transfer to other algorithms (such as REINFORCE and RWR) (\cref{fig:synthetic_llm_batch_size_on_policy}). Extending insights from the bandit problem, in the \textbf{Min Length} scenario, we find that \textbf{being more on-policy (i.e., a smaller $B$) leads to a lower completion length and hence a higher gold reward, despite potential inaccuracies in the proxy reward model} that PPO is actually optimizing (\cref{fig:min_length_batch_size_on_policy}). Akin to our bandit experiments, we also observe that smaller batch sizes ($B = 64$ and $B = 128$) optimize the proxy reward at a faster rate compared to $B = 192$ and $B = 256$. This indicates that with a significant overlap between the preference data and the reference policy, on-policy sampling still leads to better performance with fewer updates. 
We also find similar trends across on-policy variants of RWR and REINFORCE, where modulo training instabilities, being more on-policy results in better performance (\cref{fig:synthetic_llm_batch_size_on_policy}; \textbf{Min Length}).

\begin{figure}[h!]
\vspace{-0.15cm}
    \centering
    \includegraphics[width=0.8\columnwidth]{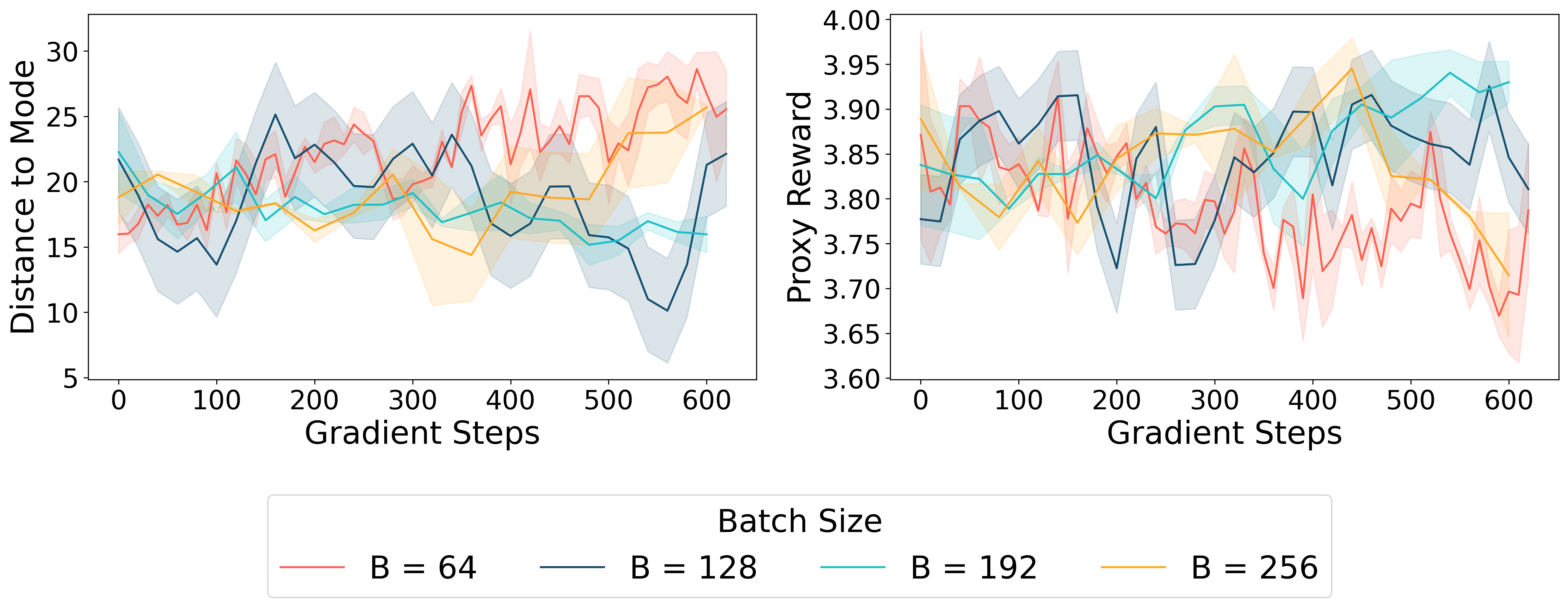}\vspace{-0.4cm}\caption{\label{fig:mode_length_batch_size_on_policy}\footnotesize{\textbf{On-policy sampling for PPO in the Mode Length scenario.} In this case, since the peak in the reward function and the highly likely regions of the reference policy are close, we find that the degree of on-policyness does not significantly affect performance. \textbf{Left}: distance to mode i.e., |completion length - average length in the dataset| (lower the better), \textbf{Right}: proxy reward vs gradient steps. As optimal policy $\pi^*$ and reference policy $\piref$ are very close to each other in this scenario, we don't see any significant performance gains from being on-policy. The mini-batch size $M$ used for the gradient update is kept fixed.}}
    \vspace{-0.35cm}
\end{figure}

In the \textbf{Mode Length} scenario, where the preferred response for each preference pair are those that are closest to the average length in the dataset (203), varying the degree of on-policy sampling by adjusting the sampling frequency largely does not affect either the proxy or gold reward for PPO (\cref{fig:mode_length_batch_size_on_policy}). We make similar observations for other algorithms: \cref{fig:synthetic_llm_batch_size_on_policy}; \textbf{Mode Length}: different degrees of on-policyness perform similarly, except the more on-policy runs sometimes exhibit instability. This is in agreement with the results from the bandit setting above: \textbf{when the peak in the reward function lies in highly likely regions under the reference policy,} \textbf{on-policy sampling has minor effect and more off-policy configurations of the algorithm can perform similarly too.}

\begin{figure}[h!]
\vspace{-0.2cm}
    \centering
    \includegraphics[width=0.8\columnwidth]{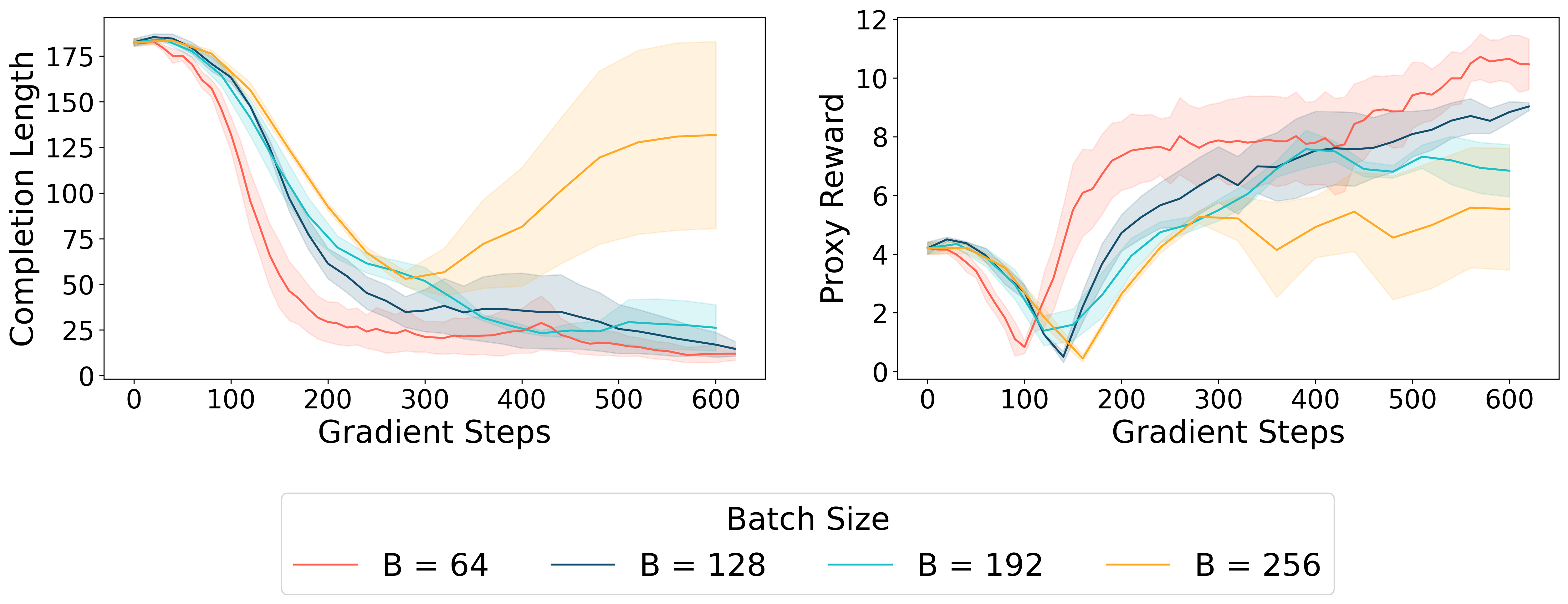}    \vspace{-0.4cm}\caption{\label{fig:skew_length_batch_size_on_policy}\footnotesize{\textbf{On-policy sampling for PPO on the Skew Length scenario.} Being more on-policy results in faster convergence and better performance. \textbf{Left}: average completion length (lower the better), and \textbf{Right}: proxy reward vs gradient steps. Being more on-policy results in better performance.}}
    \vspace{-0.2cm}
\end{figure}

Finally, to evaluate the robustness of these findings under more challenging coverage conditions, we deliberately skew the length distribution in the preference dataset to make it distinct from the reference policy (called \textbf{Skew Length}). Concretely, with a 95\% probability, we truncate the length of the response by sampling a length from an exponential distribution, which naturally leads to a shorter completion length. The remaining 5\% of samples are drawn from the standard SFT policy to simulate the broader coverage for the preference data. Overall, the resulting data admits a significantly skewed distribution over response lengths, as visualized in \cref{fig:skew_dist}. Not only does the peak in the reward function now appear in less likely regions of the reference policy, but to succeed, an optimization algorithm must now do the required heavy lifting to shift the probability mass to the low-density regions of the response space that maximize reward.

\begin{figure}[h!]
\vspace{-0.15cm}
    \centering
    \includegraphics[width=0.99\columnwidth]{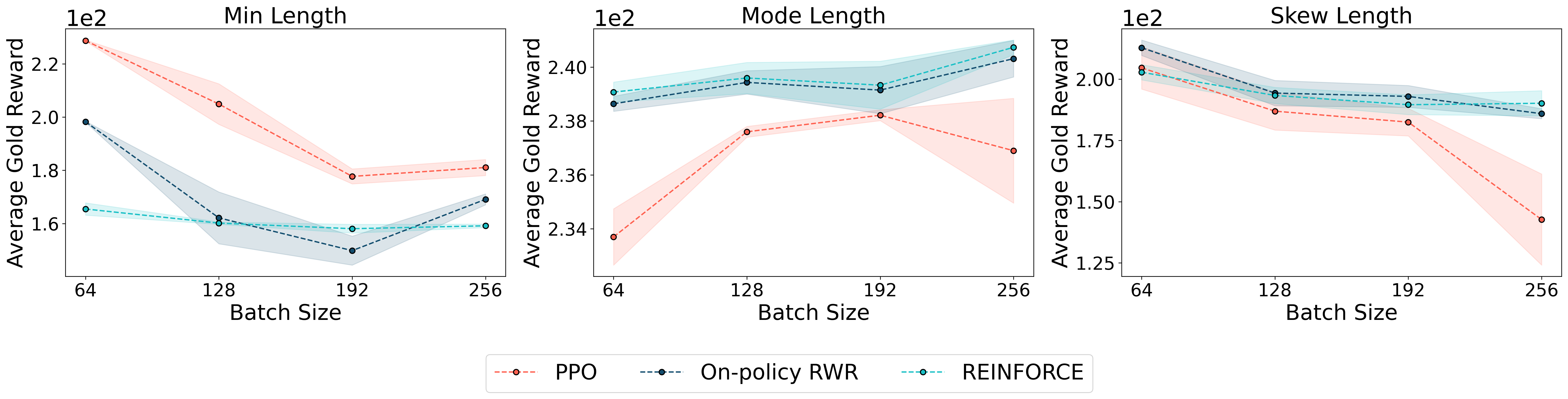}
    \vspace{-0.4cm}\caption{\label{fig:synthetic_llm_batch_size_on_policy}\footnotesize{\textbf{Summary: effect of on-policy sampling on synthetic LLM problems.} Average gold reward over the course of training for RWR, and REINFORCE with different $B$. For \textbf{Min Length} and \textbf{Skew Length}, generally being more on-policy (i.e., smaller batch size) leads to a higher gold reward. For \textbf{Mode Length}, all batch sizes perform close to each other (note that the range of the $y$-axis is small), with performance differences largely due to instability.}}
    \vspace{-0.15cm}
\end{figure}

Our detailed results of running PPO in this setting are shown in \cref{fig:skew_length_batch_size_on_policy}. In this setting, we still find that more on-policy updates lead to a higher gold reward with PPO. In addition, we also observe much larger gaps in proxy reward values attained at any given gradient step compared to the \textbf{Min Length} scenario, in favor of on-policy sampling. For other algorithms, we also observe strong and clear trends supporting that on-policy sampling with a smaller but frequently sampled batch results in better performance as shown in the summary plot (see \cref{fig:synthetic_llm_batch_size_on_policy}; \textbf{Skew Length}).

\begin{figure}[h!]
\vspace{-0.1cm}
    \centering
    \includegraphics[width=0.75\columnwidth]{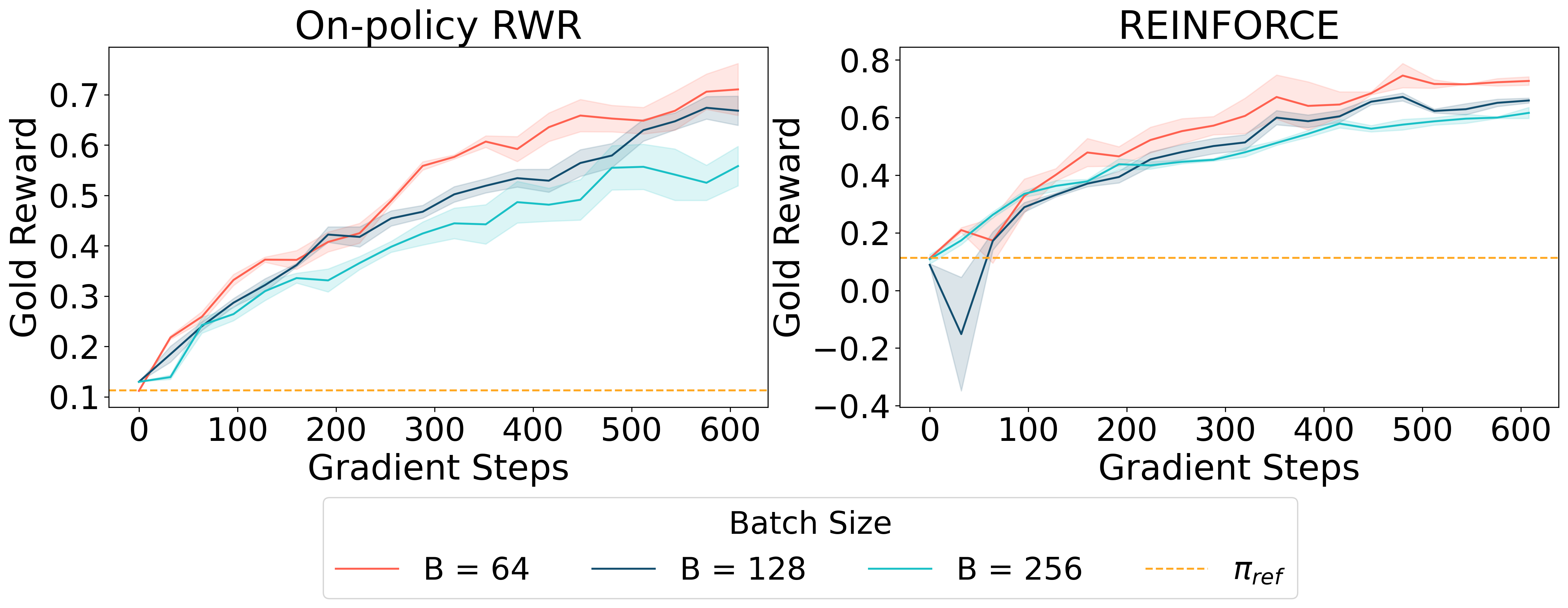}
    \vspace{-0.35cm}    \caption{\label{fig:alpacafarm_batch_size_ablation_rwr_plus_reinforce}\footnotesize{\textbf{Effect of on-policy sampling on AlpacaFarm} with a fixed mini-batch, but varying batch size $B$, for (\textbf{Left}) on-policy RWR and (\textbf{Right}) REINFORCE. Increasing $B$ makes updates more off-policy and this results in lower performance.}}
    \vspace{-0.2cm}
\end{figure}

\textbf{Full-scale LLM problems.} Finally, we evaluate if our insights transfer to the full-scale AlpacaFarm setup. We use a Pythia-1.4B model as our reference policy and generate two responses per prompt. We label the preferred and dispreferred responses with a gold reward model of human preferences from AlpacaFarm to construct a preference dataset. \cref{fig:alpacafarm_batch_size_ablation_rwr_plus_reinforce} shows that our intuitions from the simple bandit and synthetic LLM experiments transfer to this real preference learning task, as making updates on only on-policy samples leads to higher gold reward for both on-policy RWR and REINFORCE.

\vspace{-0.25cm}
\subsubsection{Takeaway 2: On-Policy Sample Reuse Can Enable Leveraging Off-Policy Data}
\vspace{-0.15cm}

In the previous section, exactly one gradient step was taken on a given sample and we found that making updates on stale data was not helpful due to off-policy updates. \textbf{Is there any scenario under which we can still attain good policy performance despite employing off-policy updates?} In this section, we will answer this question, and show that it might be possible to learn with off-policy updates for some algorithms if we are allowed to make more than one update on a given sample. Of course, a substantial amount of sample reuse is detrimental since it would lead to more off-policy updates, thus leading to statistical or even propensity overfiting~\citep{swaminathan2015self} for some methods, but it is reasonable to surmise that some amount of sample reuse can help. To study sample reuse, we compare methods when $T>1$ gradient steps can be made on a given sample. 

\begin{figure}[h]
\vspace{-0.2cm}
    \centering
    \includegraphics[width=0.99\columnwidth]{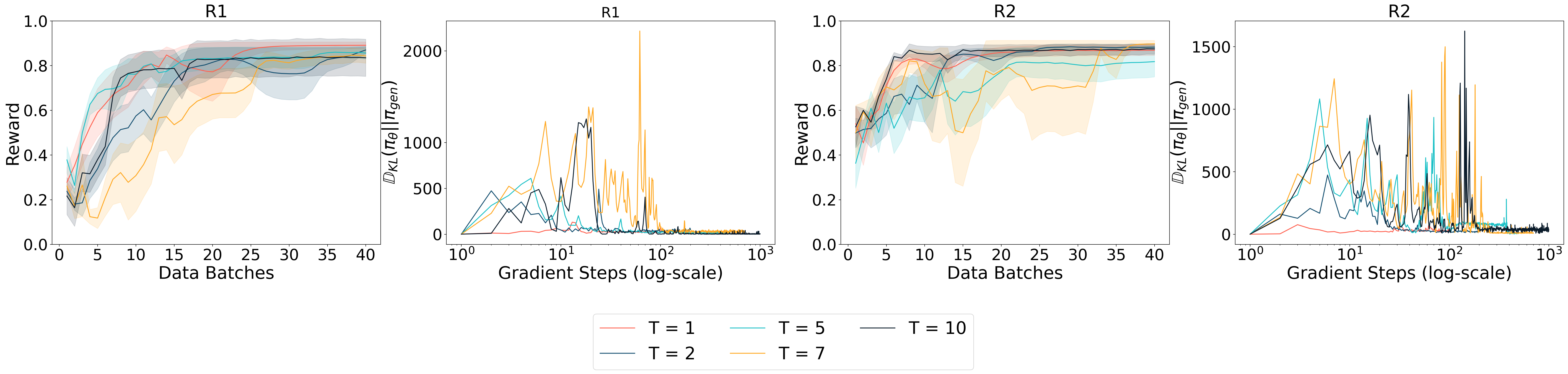}
    \vspace{-0.45cm}
\caption{\label{fig:transformer_on_policy_sampling_performance_vs_gradient_steps}\footnotesize{\textbf{Effect of on-policy sample reuse on bandit problems.} Reward vs gradient steps for a different number of inner iteration steps, $T$, on the same data batch for RWR. Increasing $T$ controls the number of gradient steps taken before collecting the new batch of on-policy samples. We observe non-monotonic performance trends while varying $T$.}}
    \vspace{-0.35cm}
\end{figure}

We study sample reuse for on-policy RWR in the bandit setting in \cref{fig:transformer_on_policy_sampling_performance_vs_gradient_steps}. While increasing $T$ can slow down convergence in general, we note that using a larger value of $T$ may be better (e.g., $T = 5$ learns faster than $T=2$; $T=10$ learns faster than $T=7$).

\textbf{Synthetic LLM problems.} We also evaluate the effect of sample reuse on synthetic LLM problems. In this case, we study two algorithms PPO and on-policy best-of-N to be able to understand the effect of sample reuse on multiple algorithms. In contrast to the performance degradation with off-policy updates induced due to stale samples in PPO, we find that off-policy updates induced due to sample reuse do not hurt performance (\cref{fig:llm_length_t_ablation}; PPO), with even $T=8$ performing similarly to $T=1$. On the other hand increasing $T$ from $1$ to $2$, i.e., performing two gradient updates on each sample improves the golden reward for best-of-N (\cref{fig:llm_length_t_ablation}; Best-of-N) within a given data sampling budget. 

\begin{figure}[h!]
\vspace{-0.2cm}
    \centering
    \includegraphics[width=0.99\columnwidth]{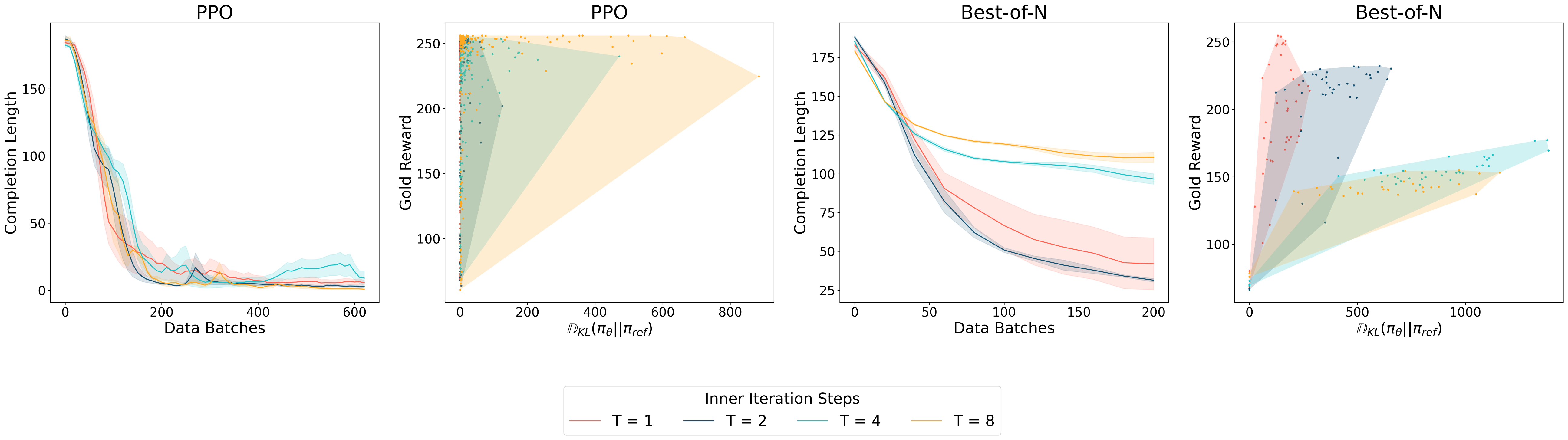}
    \vspace{-0.4cm}
\caption{\label{fig:llm_length_t_ablation}\footnotesize{\textbf{Effect of on-policy sample reuse in the Min Length scenario.} Average completion length (i.e., the lower the better) vs gradient steps for a different number of inner iteration steps, $T$, on the same data batch. A larger value of $T$ implies that the algorithm is more off-policy. Observe that some sample reuse can improve sample efficiency (T = 2 outperforms T = 1), but excessive sample reuse can hurt performance. Also note that algorithms with mechanisms to control off-policy updates such as PPO are suited to perform better in the off-policy sample reuse setting.}}
    \vspace{-0.2cm}
\end{figure}

\textbf{Why do PPO and best-of-N respond differently to sample reuse? }We believe that this is because PPO employs an off-policy correction, and hence, significantly off-policy samples do not contribute to the gradient, addressing the well-known challenge of propensity overfitting~\citep{swaminathan2015self}. This is not the case with on-policy best-of-N, where excessive sample reuse can hurt exploration, because training on old samples with a log-likelihood loss push the current policy to be close to the stale data-generating policy. That said, more than one gradient step can still be useful when presented with a fixed data budget, unless it bottlenecks exploration of high reward regions.

\begin{AIbox}{Takeaways for on-policy sampling}
On-policy sampling generally improves performance and efficiency, especially in cases when the peak of reward appears farther from the reference policy, even when the reward model is learned from the same preference dataset that methods without on-policy learning also use. In some cases, sample reuse can reduce the dependency on on-policy sampling of data, but it presents a tradeoff by reducing the exploration of the response space.
\end{AIbox}

\vspace{-0.2cm}
\subsection{Question 2: The Role of Negative Gradient}
\label{sec:question2}
\vspace{-0.15cm}

To understand the role of negative gradient, we will compare contrastive algorithms such as DPO and IPO with maximum likelihood methods such as RWR (or Pref-FT, which attempts to increase the likelihood of the preferred response only) and best-of-N in a fully offline setting, where no new on-policy samples are used. We will also aim to understand the mechanisms behind these methods.

\vspace{-0.2cm}
\subsubsection{Takeaway 1: Negative Gradient Enables Faster Convergence Amongst Offline Methods} \label{sec:negative_gradient}
\vspace{-0.1cm}

\begin{figure}[h]
    \centering
    \vspace{-0.2cm}
    \includegraphics[width=0.99\columnwidth]{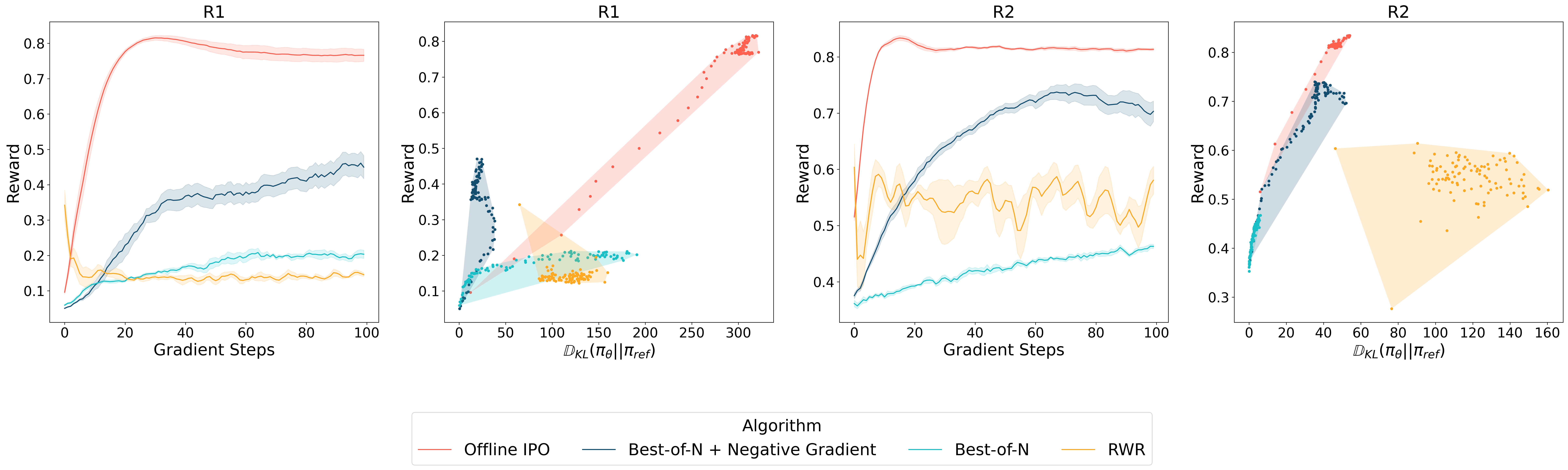}
    \vspace{-0.3cm}
    \caption{\label{fig:transformer_negative_gradients}\footnotesize{\textbf{Negative gradients on the didactic bandit problems.} Average reward during training and the KL-reward trade-off for four algorithms in the fully offline setting: best-of-N (no negative gradient), RWR (no negative gradient), best-of-N + an explicit negative gradient on dispreferred actions, and IPO (with negative gradient). Negative gradient helps find a better policy by aggressively pushing down the likelihood of bad actions, and this leads to larger KL values.}}
    \vspace{-0.1cm}
\end{figure}

We begin by comparing a representative set of offline algorithms on the didactic bandit problem. These methods include those that do not use a contrastive update on the didactic bandit problem, namely offline supervised approaches, Best-of-N and offline RWR, and offline IPO~\citep{2023arXiv231012036G}, a representative offline fine-tuning method which uses a contrastive negative gradient term. We also consider a variant of best-of-N where we explicitly add a term to the loss function that attempts to minimize the likelihood of the dispreferred response akin to unlikelihood~\citep{Welleck2020Neural} (see~\cref{section:banditAlgorithmsAppendix} for more details). In \cref{fig:transformer_negative_gradients}, we find that IPO and best-of-N + negative gradient learn a better policy from an offline dataset collected from sub-optimal $\piref$, compared to best-of-N and RWR. IPO achieves a better KL-reward trade-off in $\mathbf{R}_1$ (where high likelihood regions of $\piref$ and the peak in $r^*$ are far away from each other). While best-of-N attains a higher reward when the reward function is given by $\mathbf{R}_2$ (where the peaks in $\piref$ and $r^*$ overlap) compared to $\mathbf{R}_1$, it still underperforms IPO. We suspect that this is because maximizing likelihood on some responses alone is not enough to steer the learned policy away meaningfully away from $\piref$ towards the peak in the reward function, especially when this peak is far away from $\pi_\text{ref}$. Best-of-N + negative gradient significantly outperforms Best-of-N in both scenarios and closes the performance gap to IPO, which shows that explicitly adding a loss term to minimize the probability on dispreferred responses can provide a substantial performance improvement. That said, for reward function $\mathbf{R}_2$, we also observe a smaller gap between the best algorithm without a negative gradient (i.e., RWR) and offline IPO, indicating that when the peak in $\piref$ and $r^*$ exhibit more overlap, the performance benefits of contrastive training are smaller. We also investigated a simpler 1-token bandit problem where we found best-of-N to be better than IPO for $\mathbf{R}_2$. This is possibly due to the much smaller space of possible tokens and responses, where maximum likelihood methods perform well enough.

\begin{figure}[h!]
\vspace{-0.2cm}
    \centering
    \includegraphics[width=0.85\columnwidth]{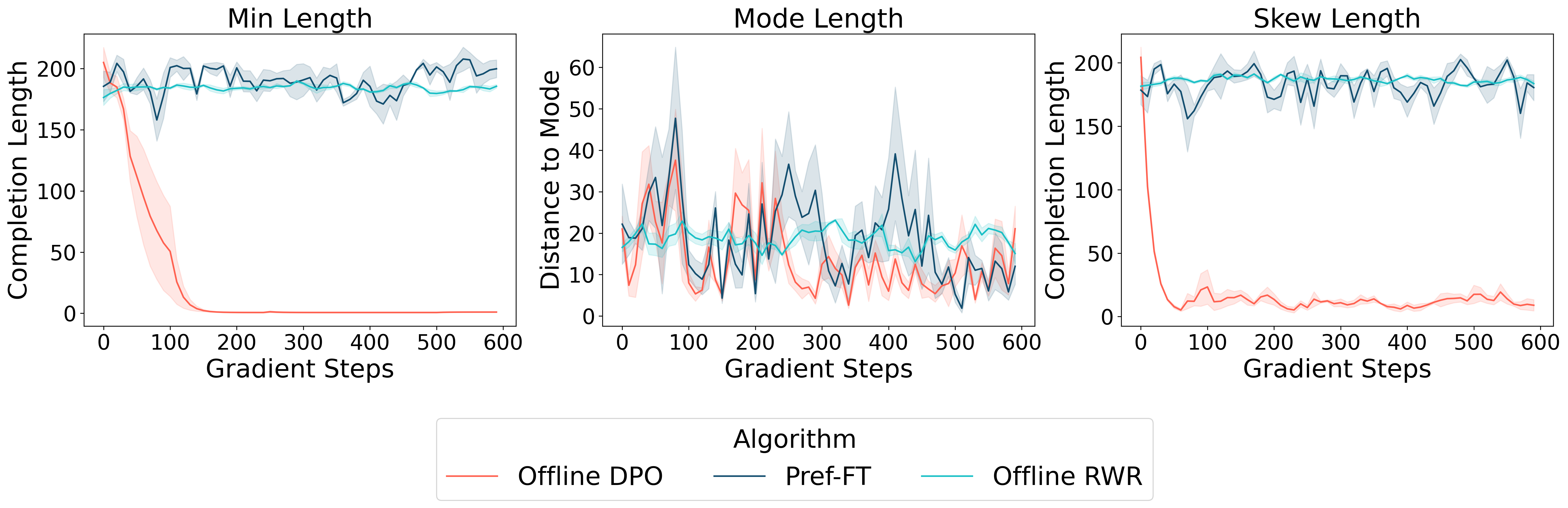}
    \vspace{-0.25cm}    
    \caption{\footnotesize{
    \textbf{Negative gradients in synthetic LLM problems.} Completion length (inverse of the true reward) for three offline algorithms. DPO outperforms Pref-FT and offline RWR in \textbf{Min Length} and the \textbf{Skew Length} settings, where the peak in $r^*$ and $\piref$ are misaligned. For the \textbf{Mode Length} setting, all of the algorithms perform similarly.}}
    \vspace{-0.25cm}
    \label{fig:neg_grad_length}
\end{figure}

\textbf{Synthetic LLM problems.} Our experiments in the synthetic LLM setting corroborate this finding. Here  we compare Pref-FT with DPO (with negative gradients). In the \textbf{Min Length} setting, we find in \cref{fig:neg_grad_length} that DPO significantly outperforms Pref-FT. On the other hand, when the peak in the ground-truth reward appears in high-likely regions of the reference policy and the preference data $\mathcal{D}_\text{pref}$ covers this region (\textbf{Mode Length}), we find both approaches to perform similarly. Finally, in the \textbf{Skew Length} scenario when $\piref$ and $\mathcal{D}_\text{pref}$ do not overlap significantly, but the peak in $r^*$ is covered by the preference dataset $\mathcal{D}_\text{pref}$, we also find that DPO is much more effective in driving the policy further from the reference initialization and outperforms Pref-FT.

\begin{wrapfigure}{r}{0.6\textwidth}
    \vspace{-0.4cm}
    \centering
    \includegraphics[width=0.99\linewidth,height=0.2\textwidth]{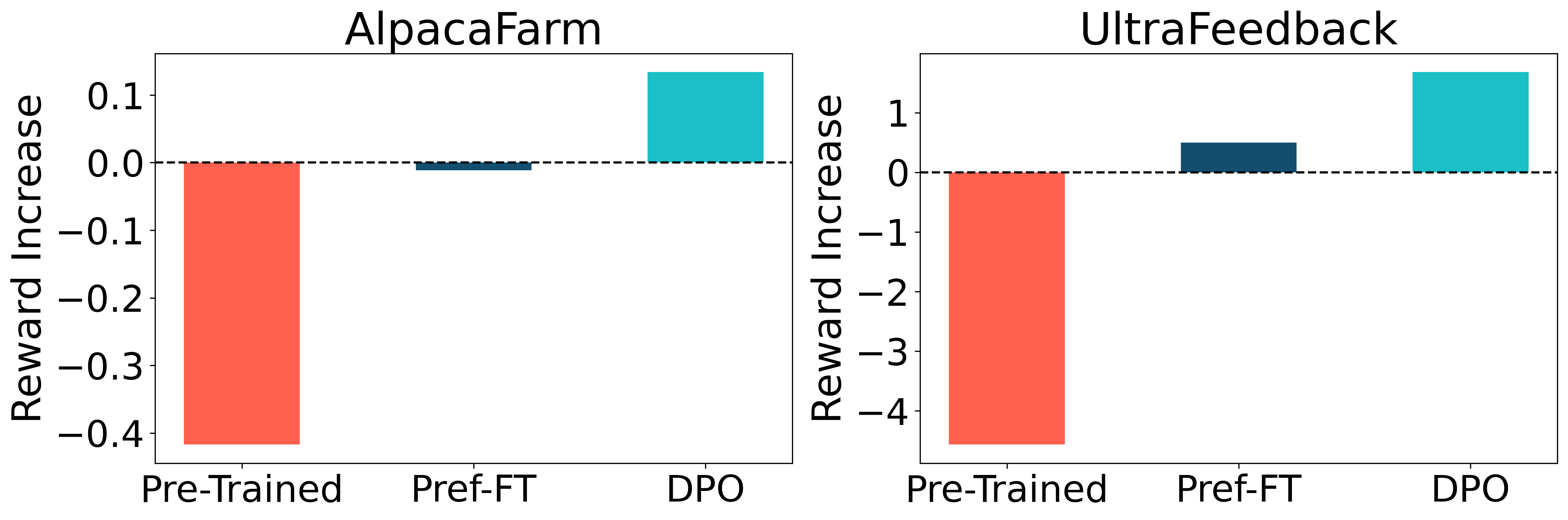}
    \vspace{-0.2cm}    
    \caption{\footnotesize{\textbf{Negative gradients in AlpacaFarm (left) and UltraFeedback (right) for offline methods.} We plot the increase in average gold reward \textbf{compared to the reference model} for different offline approaches. Algorithms with a negative gradient such as DPO outperform approaches such as Pref-FT not utilizing any negative gradient term. }}
    \vspace{-0.4cm}
    \label{fig:negative_gradients_full_scale}
\end{wrapfigure}
\textbf{Full-scale LLM fine-tuning.} Finally, we compare supervised Pref-FT and contrastive DPO when fine-tuning on actual preference data. In addition to AlpacaFarm, we also run experiments using the Ultra-Feedback~\citep{ding2023enhancing} dataset. For the Ultra-Feedback dataset, we use different models (GPT-3.5, GPT-4) to generate responses to various prompts. The resulting dataset has a broader preference
dataset distribution than $\piref$. We utilize a checkpoint of the Mistral7B model obtained by running supervised next-token prediction on a subset of UltraChat (comprising of GPT-3.5 responses) as the reference initialization. We use the UltraRM model with a LLaMA2-13B base architecture as our gold reward model.  
As shown in \cref{fig:negative_gradients_full_scale}, DPO which utilizes a negative gradient shows a much larger improvement over the reference policy $\piref$compared to methods that do not utilize a negative gradient (e.g., Pref-FT). 

\vspace{-0.2cm}
\subsubsection{Takeaway 2: Mechanisms Explaining the Behavior of the Negative Gradient}
\label{sec:mechanisms}
\vspace{-0.1cm}

Having seen that using a negative gradient leads to much better performance, we next attempt to understand the mechanism behind this better performance. To do so, we visualize the evolution of the log-likelihoods of the preferred response and the dispreferred response in a held-out dataset as multiple gradient steps are taken on an offline preference optimization loss. 

\textbf{Contrastive training increases the gap between the likelihoods of preferred and dispreferred responses.} Perhaps as expected, we find that DPO-style contrastive training is more effective at increasing the gap between the likelihoods of preferred and dispreferred responses compared to offline Pref-FT in several LLM settings: the synthetic LLM settings with \textbf{Min Length} and \textbf{Skew Length}, and full-scale AlpacaFarm and UltraFeedback settings (\cref{fig:negative_gradient_log_prob_difference}). More concretely, note that the margin for Pref-FT largely converges to 0, whereas offline DPO can enable a larger margin.

\begin{figure}[h!]
\vspace{-0.2cm}
    \centering
    \includegraphics[width=0.99\columnwidth]{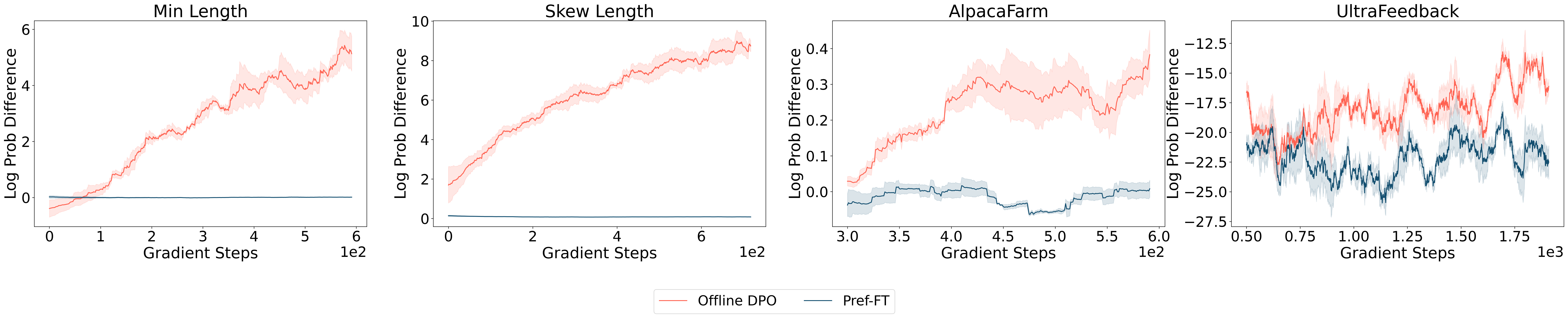}
    \vspace{-0.35cm}
\caption{\label{fig:negative_gradient_log_prob_difference}\footnotesize{\textbf{Difference in likelihoods of preferred and dispreferred responses.} DPO increases the log probability margin $\log \pi_\theta(\by_w|\bx) - \log \pi_\theta(\by_l | \bx)$ more compared to non-contrastive methods such as Pref-FT.}}
    \vspace{-0.25cm}
\end{figure}

\begin{wrapfigure}{r}{0.6\textwidth}
    \vspace{-0.2cm}
    \centering
    \includegraphics[width=0.99\linewidth,height=0.2\textheight]{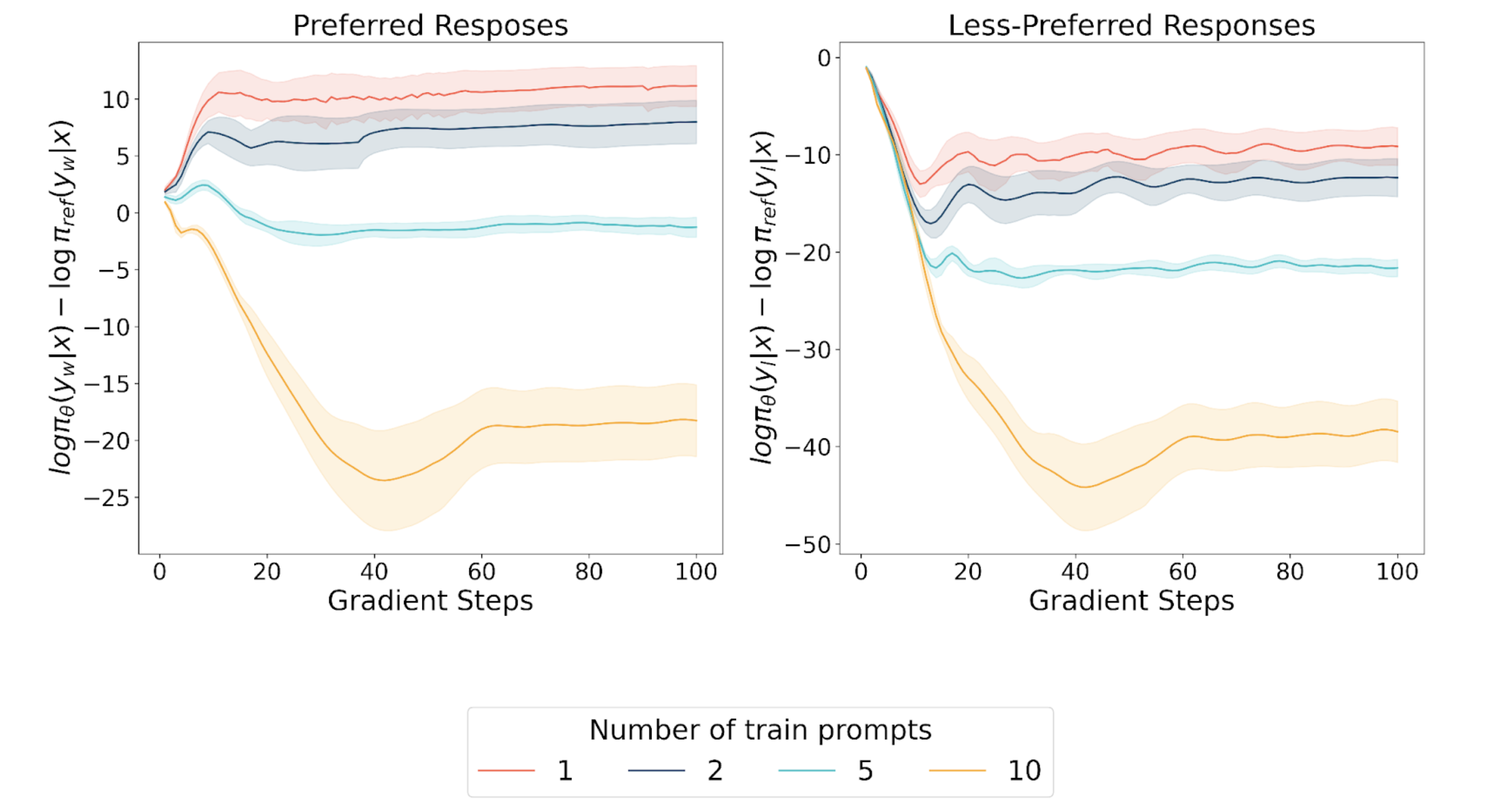}
    \vspace{-0.4cm}    
\caption{\label{fig:transformer_negative_gradients_dataset_size_ablation}\footnotesize{\textbf{DPO implicit reward during training.} We observe that with fewer prompts, contrastive methods can increase the implicit reward, $r_\theta(\bx, \by) = \log\left(\pi_\theta(\by|\bx)\right) - \log \piref(\by|\bx)$, of the preferred response while reducing this quantity for the dispreferred response, however as the number of data points grows, this may not be possible and the likelihood of both positives and negatives might reduce.}}
    \vspace{-0.55cm}
\end{wrapfigure}
\textbf{Changes in log likelihoods depend on model capacity, reference initialization, data size, and composition.} The natural next question is if DPO-like objectives use the probability mass recovered by increasing the reward margin between $\by_w$ and $\by_l$ to increase the probability mass on the preferred responses\footnote{Concurrent work~\citep{rafailov2024from} also studies the induced rewards for DPO and shows that when $\piref(\cdot|\bx)$ is \textbf{exactly} equal to the \textbf{\emph{empirical} distribution} of preferred responses $p(\by_w|\bx)$ in the dataset, then induced rewards will always decrease. This {does not contradict} our findings because this condition is not satisfied in typical fine-tuning pipelines where \emph{both} $\by_w$ and $\by_l$ are sampled from $\piref$. Furthermore, even if $\piref$ is obtained by first running supervised Pref-FT only on $\by_w$, it is unclear whether the parametric model representing $\piref(\cdot|\bx)$ will induce an identical probability distribution to the empirical distribution of preferred responses. That said, it is indeed the case that the likelihood of $\by_w$ decreases often when training with DPO even though the reference policy does not satisfy the condition highlighted in this concurrent work, implying that this phenomenon is a result of many factors (data size, similarity of $\by_w$ and $\by_l$, capacity). We also show in \cref{sec:theory} that with appropriate negatives, likelihoods might not decrease for some contrastive methods.}. We track the induced rewards $\log \pi_\theta(\by_w|\bx) - \log \piref(\by_w|\bx)$ and $\log \pi_\theta(\by_l|\bx) - \log \piref(\by_l|\bx)$ in expectation over prompts $\bx$ on the bandit problem while varying the size of the preference dataset. Following standard protocols, both $\by_l$ and $\by_w$ are sampled from $\piref$.  Observe in \cref{fig:transformer_negative_gradients_dataset_size_ablation} that when the dataset size is small relative to the model capacity, contrastive training via IPO can increase the likelihood of $\by_w$ while reducing the likelihood of $\by_l$. However, as the number of prompts increases, contrastive training counter-intuitively results in a decreasing value of $\log \pi_\theta(\by_w|\bx) - \log \piref(\by_w|\bx)$, although the loss attempts to push up this likelihood term. The recovered probability mass is instead used to increase the likelihood of other out-of-distribution responses. Thus, depending upon $\piref$, dataset size, and composition, contrastive objectives such as DPO extrapolate, and this extrapolation might produce good or bad responses.

We also observe a similar trend in full-scale LLM experiments in~\cref{fig:negative_gradient_reward_vs_gradient_steps}: we observe a decrease in the log-likelihoods of both the preferred and dispreferred responses throughout training on AlpacaFarm with small 1.4B Pythia policies. However, using a Mistral7B model to train a policy on the UltraFeedback dataset results in an increasing value of log-likelihood of $\pi_\theta(\by_w|\bx)$ and a decreasing value of $\pi_\theta(\by_l|\bx)$ when starting from an SFT model on the Ultrachat-200K dataset (same setup as Zephyr~\citep{tunstall2023zephyr}). We believe that these opposite trends are a consequence of the responses in that the UltraFeedback dataset are more semantically distinct from each other, as different responses come from models with different capabilities (e.g., a GPT-4 response is paired with a GPT-3.5 response) such that given enough model capacity, contrastive training can push up likelihoods of $\pi_\theta(\by_w|\bx)$ while pushing down $\pi_\theta(\by_l|\bx)$. In contrast, running Pref-FT increases the likelihoods of both $\by_w$ and $\by_l$ (\cref{fig:negative_gradient_reward_vs_gradient_steps}) despite training only on $\by_w$: this observation about Pref-FT was also noted by concurrent work such as~\citet{hong2024orpo,pang2024iterative}.

\begin{figure}[h!]
\vspace{-0.2cm}
    \centering
    \includegraphics[width=0.99\columnwidth]{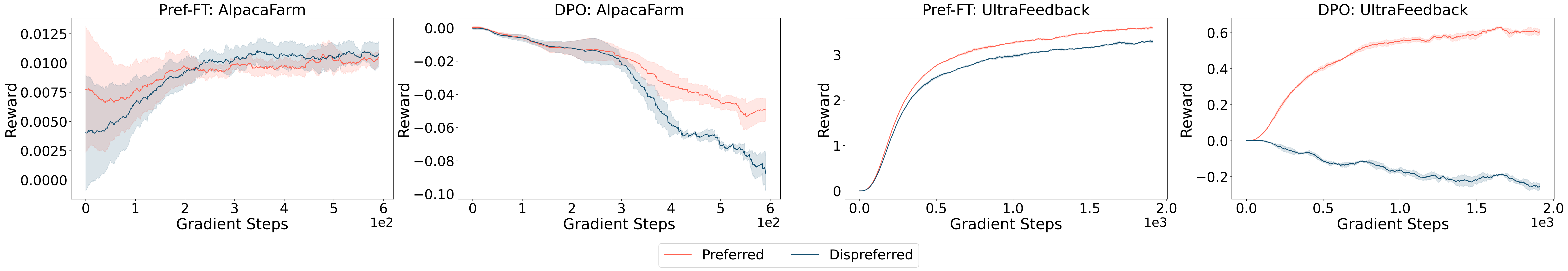}
    \vspace{-0.3cm}
\caption{\label{fig:negative_gradient_reward_vs_gradient_steps}\footnotesize{\textbf{DPO reward estimates for Pref-FT and DPO on AlpacaFarm and UltraFeedback.} For a Pythia-1.4B model trained on AlpacaFarm, DPO decreases the implicit reward, $r_\theta(\bx, \by) = \beta\left[\log \pi_\theta(\by|\bx) - \log \piref(\by|\bx)\right]$, for both $\by_w$ and $\by_l$, whereas Pref-FT increases both. For a Mistral-7B model trained on UltraFeedback, DPO is able to increase the reward for $\by_w$ and decrease the reward for $\by_l$, whereas Pref-FT increases both. In both cases, DPO leads to a higher margin than Pref-FT. }}
    \vspace{-0.25cm}
\end{figure}

\begin{AIbox}{Takeaways for negative gradients}
A negative gradient improves over offline supervised methods when the peak in the reward appears in less likely regions of $\piref$. It can increase the likelihood of $\by_w$ when $\by_l$ is sufficiently different from $\by_w$, model capacity is large, and $\piref$ is chosen appropriately. If not, the margin $\log \pi_\theta(\by_w|\bx) - \log \pi_\theta(\by_l|\bx)$ will still be larger when a negative gradient is used, but the recovered probability mass will go into increasing likelihoods of other responses, not $\by_w$.
\end{AIbox}

\vspace{-0.2cm}
\subsection{{Question 3: } On-Policy Sampling and Negative Gradients are Complementary}
\label{sec:complementarity}
\vspace{-0.1cm}
Based on our findings that both on-policy sampling and negative gradients are independently effective, we now study if combining them would provide any additional benefits. To understand this, we empirically study a straightforward on-policy variant of DPO/IPO: instead of utilizing the PPO or Best-of-N objective on on-policy samples, for each prompt $\bx$, we sample $N$ responses from the policy $\by_1, \ldots, \by_n \sim \pi_\theta(.|\bx)$, rank them according to a reward model $r_\phi$, and construct preference pairs by taking the higher reward completion as the preferred one and lower reward completion as the dispreferred one. This recipe is similar to concurrent works such as \citet{rosset2024direct}. Then we calculate the DPO/IPO loss on this preference dataset and update our model accordingly.  

\begin{figure}[htbp]
    \centering
    \includegraphics[width=0.99\columnwidth,height=0.2\textheight]{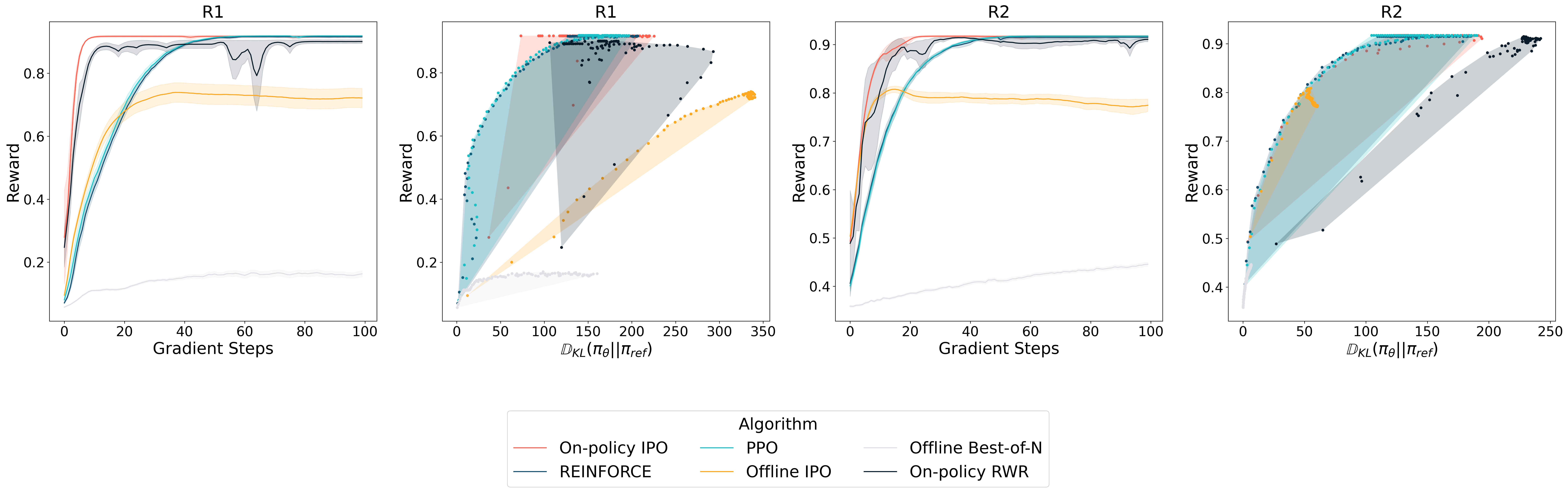}
    \vspace{-0.3cm}
    \caption{\label{fig:transformer_complimentary}\footnotesize{\textbf{On-policy sampling + negative gradients in bandit setup.} Complimentary benefit of on-policy sampling and negative gradients. Online IPO (using both on-policy sampling and negative gradients) performs better than offline IPO (negative gradients but no on-policy sampling) and RWR (on-policy sampling but no negative gradients).}}
    \vspace{-0.2cm}
\end{figure}

\textbf{Performance on bandit and synthetic LLM problems.} \cref{fig:transformer_complimentary} shows that the on-policy version of IPO achieves both faster convergence and better performance compared to the offline version, for both $\mathbf{R}_1$ and $\mathbf{R}_2$ in the didactic bandit problem. We also ran on-policy DPO in synthetic LLM problems we studied and found it to converge significantly faster and to a better solution than offline DPO, on-policy RL, and on-policy variants of supervised learning approaches as shown in \cref{fig:complimentarity_length}. We also find that on-policy versions of contrastive approaches exhibit favorable computational vs wall-clock time tradeoffs compared to purely on-policy RL methods and even offline contrastive methods that may not find as good solutions as their on-policy counterparts (see \cref{app:time_tradeoff}).

\begin{figure}[h!]
\vspace{-0.1cm}
    \centering
    \includegraphics[width=0.85\columnwidth]{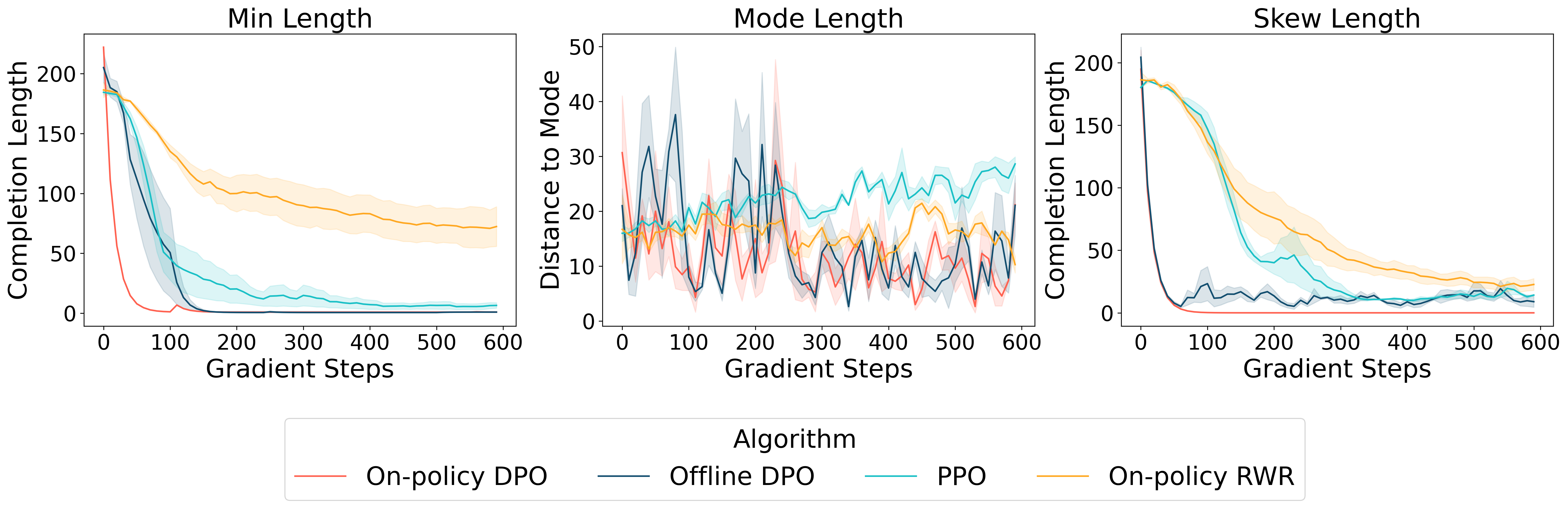}
    \vspace{-0.3cm}    \caption{\footnotesize{
    \textbf{On-policy sampling + negative gradients in LLM length experiments.} Complimentary benefit of on-policy sampling and negative gradients on the synthetic LLM length experiments. On-policy DPO performs the best where optimal policy and reference policy lies far from each other (min length and skew length), and all algorithms perform similarly when these two policies are close (mode length).}}
    \vspace{-0.3cm}
    \label{fig:complimentarity_length}
\end{figure}

\textbf{Why can on-policy versions of contrastive methods perform better than on-policy RL?} We saw in \cref{sec:negative_gradient} that offline contrastive training with a negative gradient was effective at quickly reorganizing probability mass to high-reward responses covered by the preference data. When combined with on-policy sampling, this behavior results in faster convergence: for any given batch of on-policy data, contrastive training with a negative gradient can quickly reconfigure the policy distribution within the support of the on-policy data obtained thus far (i.e., it provides a stronger, low-variance learning signal). Similarly to how best-of-N + negative gradient outperforms vanilla best-of-N but underperforms DPO in \cref{fig:transformer_negative_gradients}, PPO also improves over RWR without a negative gradient term (in the bandit setting this corresponds to a better reward-KL tradeoff in \cref{fig:transformer_complimentary} and in the synthetic LLM setting this appears in final performance), but it is still unable to match on-policy DPO in \cref{fig:complimentarity_length}. Note that this does not mean that on-policy DPO would always outperform PPO, but that it might be a good choice for users to experiment with on-policy versions of contrastive methods. 

\begin{AIbox}{Takeaways for on-policy sampling + negative gradient}
On-policy sampling and offline negative gradients present complementary benefits, in that the best offline loss function with negative gradients can be used to train on on-policy data, improving over on-policy RL or supervised learning. Conceptually, while sampling responses on policy provides coverage of the response space, an effective negative gradient loss provides a stronger learning signal given a set of samples. It can also result in computational benefits.
\end{AIbox}

\vspace{-0.2cm}
\section{Conceptual Unification and Theoretical Analysis}
\vspace{-0.2cm}
\label{sec:theory}

With empirical results showing the benefits of on-policy sampling and negative gradient for preference fine-tuning of LLMs, in this section, we attempt to conceptually understand the benefits by building a mental model. In this section, we will first unify these seemingly distinct notions of on-policy sampling and negative gradient into a unified notion of mode-seeking objectives, and contrast them against mode-covering maximum likelihood objectives. Then, we will contrast the learning dynamics of the reverse KL-divergence, a representative mode-seeking objective against the mode-seeking forward KL-divergence (i.e., the supervised learning loss) to intuitively explain some of our findings.

\vspace{-0.15cm}
\subsection{Seeking Modes Unifies On-Policy Sampling and Negative Gradients}
\label{sec:unification}
\vspace{-0.1cm}

In this section, we will show that the notion of mode-seeking divergences unifies on-policy sampling and negative gradients for the various objectives we investigated in the paper. Specifically, we show below that several on-policy RL methods that we studied optimize the reverse KL-divergence, and are hence mode-seeking, offline contrastive methods that employ a negative gradient are also mode-seeking, and finally, supervised weighted maximum likelihood approaches (e.g., offline Best-of-N, Pref-FT, Binary FeedMe) are mode-covering. 
First, we show that on-policy sampling leads to mode-seeking behavior. To do this, we prove that RL and supervised objectives combined on-policy sampling optimize the reverse KL divergence, which is known to be mode-seeking.

\begin{lemma}
\label{lemma:on_policy}
    On-policy RL and on-policy weighted-likelihood methods optimize a regularized version of a reverse KL-divergence with respect to the optimal policy and are hence mode seeking.
\end{lemma}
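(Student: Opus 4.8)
The plan is to establish the claim in two parts, corresponding to the two classes of methods mentioned: on-policy RL (REINFORCE/PPO-style policy gradients on a reward model) and on-policy weighted maximum likelihood (RWR, ReST, Best-of-N). In both cases the goal is to show the fixed point / the objective being optimized coincides with $\mathbb{D}_{\mathrm{KL}}\!\left(\pi_\theta(\cdot|\bx) \,\|\, \pi^*(\cdot|\bx)\right)$ up to a multiplicative constant and additive terms independent of $\theta$, where $\pi^*$ is the optimal policy of the KL-constrained objective \cref{eq:rlhf_objective}. The first step is to recall the standard closed form of this optimal policy: maximizing $\mathbb{E}_{\by\sim\pi_\theta}[r(\bx,\by)] - \beta\, \mathbb{D}_{\mathrm{KL}}(\pi_\theta\|\piref)$ over $\pi_\theta$ gives $\pi^*(\by|\bx) = \frac{1}{Z(\bx)}\piref(\by|\bx)\exp\!\big(r(\bx,\by)/\beta\big)$ with $Z(\bx)=\sum_{\by}\piref(\by|\bx)\exp(r(\bx,\by)/\beta)$. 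This is a one-line Lagrangian/Gibbs-variational computation and I would just cite it.

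For the on-policy RL part, I would rewrite the surrogate objective. Since $\mathbb{E}_{\by\sim\pi_\theta}[r(\bx,\by)] - \beta \mathbb{D}_{\mathrm{KL}}(\pi_\theta\|\piref) = \beta\big(\mathbb{E}_{\by\sim\pi_\theta}[r/\beta] - \mathbb{E}_{\by\sim\pi_\theta}[\log\pi_\theta - \log\piref]\big)$, substituting $r/\beta = \log\pi^* - \log\piref + \log Z(\bx)$ collapses this to $\beta\big(\log Z(\bx) - \mathbb{E}_{\by\sim\pi_\theta}[\log\pi_\theta - \log\pi^*]\big) = \beta\log Z(\bx) - \beta\, \mathbb{D}_{\mathrm{KL}}(\pi_\theta\|\pi^*)$. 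Since $Z(\bx)$ does not depend on $\theta$, maximizing the RL objective is \emph{exactly} minimizing the reverse KL to $\pi^*$, and the REINFORCE/PPO gradient \cref{eq:policy_grad} (with the KL penalty term folded into $\bar r_\phi$, or added explicitly as in PPO) is a stochastic gradient of this quantity. I would remark that the ``regularized'' qualifier in the statement refers precisely to the presence of $\beta$ and the role of $\piref$ inside $\pi^*$.

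For the on-policy weighted-likelihood part, the argument is a bit different: these methods do not directly descend the reverse KL but iterate an EM-style update. For RWR the per-iteration objective is $\max_\theta \mathbb{E}_{\bx}\mathbb{E}_{\by\sim\pi_{\theta^{\mathrm{old}}}}\!\big[\exp(r(\bx,\by)/\beta)\log\pi_\theta(\by|\bx)\big]$; I would show its (population) maximizer is the normalized reweighting $\pi_{\theta^{\mathrm{new}}}(\by|\bx)\propto \pi_{\theta^{\mathrm{old}}}(\by|\bx)\exp(r(\bx,\by)/\beta)$, which is a single step of a fixed-point iteration whose unique fixed point is $\pi^*$ (when initialized at $\piref$, one step already gives $\pi^*$; more generally the iteration contracts toward it). Equivalently, this update is the minimizer over $\pi_\theta$ of $\mathbb{D}_{\mathrm{KL}}(\pi_{\theta^{\mathrm{old}}} \cdot e^{r/\beta}/Z \,\|\, \pi_\theta)$, i.e. a moment-projection / information-projection that, at convergence, realizes the same Gibbs target as the reverse-KL optimum — hence mode-seeking in the sense that it commits mass to high-reward regions rather than spreading it. For Best-of-N / ReST the same goes through with the exponential weight replaced by the indicator of the top-$K$ set, whose idempotent fixed point is the reward-truncated-and-renormalized reference, again a mode-selecting target.

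The main obstacle is the weighted-likelihood case: unlike on-policy RL, these methods are genuinely not gradient descent on a single fixed divergence, so the honest statement is about the fixed point of their update operator rather than a loss they minimize at every step, and one must be careful about (i) the distinction between the population update and its sampled/parametric approximation, and (ii) in what precise sense "mode-seeking" is being asserted — I would pin it down as: the stationary policy is the Gibbs/reverse-KL optimum $\pi^*$, and, unlike forward-KL supervised fitting to a uniform-over-high-reward target, each update only places mass where the current policy already has support, producing the characteristic mass-concentration behavior. I would keep that discussion at the level of identifying the target and the one-step reweighting identity, and defer any contraction-rate claims to the subsequent subsection on reverse vs.\ forward KL dynamics.
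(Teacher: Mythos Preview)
Your treatment of on-policy RL is essentially identical to the paper's: both substitute the Gibbs identity $r(\bx,\by)=\beta\log Z(\bx)+\beta\log\big(\pi^*(\by|\bx)/\piref(\by|\bx)\big)$ into the KL-regularized objective and collapse it to $-\beta\,\mathbb{D}_{\text{KL}}(\pi_\theta\|\pi^*)$ plus a $\theta$-independent term.

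Where you diverge is the weighted-likelihood case. The paper does not analyze the RWR/ReST update operator separately at all: it simply asserts at the outset that on-policy weighted supervised learning \emph{also} optimizes the same loss $\mathcal{L}_{\text{RL}}$ (treating the iterated weighted-likelihood procedure as a surrogate for the RL objective, in the spirit of Peters \& Schaal and Korbak et al.), and then reuses the identical algebraic reduction. Your route---analyzing the per-step population update and its fixed point---is more ambitious, and your observation that one idealized RWR step from $\piref$ lands exactly on $\pi^*$ is correct and worth keeping.

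However, your general fixed-point claim is wrong, and this is a genuine gap. The map $\pi\mapsto \pi\cdot e^{r/\beta}/Z$ does \emph{not} have $\pi^*$ as a fixed point: any fixed point must satisfy $\pi(\by|\bx)\propto\pi(\by|\bx)\,e^{r(\bx,\by)/\beta}$, which forces $r(\bx,\cdot)$ to be constant on the support of $\pi$. Iterating this map from a full-support policy drives mass toward $\arg\max_{\by} r(\bx,\by)$, not toward $\pi^*$; so the ``unique fixed point is $\pi^*$'' and ``contracts toward it'' statements both fail, and the fixed-point route does not deliver the lemma as written. The paper's shortcut---grant that these iterative schemes target the RL objective and apply the reverse-KL identity once---is admittedly less explicit about \emph{why} the iteration optimizes that objective, but it sidesteps this pitfall and is all the lemma actually requires. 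If you want to retain your more careful stance, the honest fix is to argue (as you partly do in your last paragraph) that each RWR step is a forward-KL projection onto a reward-tilted target and that the \emph{overall} procedure is a standard EM/variational surrogate for $\mathcal{L}_{\text{RL}}$, rather than claiming $\pi^*$ is a fixed point of the bare reweighting map.
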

\vspace{-0.2cm}

A proof for Lemma~\ref{lemma:on_policy} is shown in \cref{appendix:on_policy_mode_seeking}. Next, we show that offline contrastive methods that employ a negative gradient are also mode-seeking. While these approaches do not optimize the reverse KL-divergence, we can still show that the probability mass obtained by minimizing density on negative responses $\by_l$ gets disproportionately utilized, far more for increasing the probability mass on the ``mode'' (i.e., highest probability categories under the current policy $\pi_\theta$) compared to other categories. When the offline dataset consists of multiple high-reward categories, this preference to put more probability mass on the mode of the current policy results in mode-seeking behavior, compared to increasing probability mass on all high-reward categories.

\begin{lemma}
\label{lemma:negative_grad}
    Let $\theta_t$ denote the parameters of the model at a given iteration $t$.  
    Consider contrastive approaches that induce a negative gradient under a functional form shown below:
    \begin{align}\label{eq:update_negative_gradient}
        \theta_{t+1} \leftarrow \theta_t + \eta~ \mathbb{E}_{\bx, \by_w, \by_l \sim \mathcal{D}} \left[\nabla_\theta \log \pi_\theta(\by_w|\bx) \cdot c_1(\bx, \by_w, \by_l) - \nabla_\theta \log \pi_\theta(\by_l|\bx) \cdot c_2(\bx, \by_w, \by_l)  \right] \Big\vert_{\theta_t},
    \end{align}
    where $c_1$ and $c_2$ are non-negative functions that depend on the reward value and the associated samples, $\by_w$ and $\by_l$. In contrast, weighted maximum likelihood without the negative gradient sets $c_2 = 0$. Define $\omega_{t} := \log \pi_\theta(\by_w|\bx) - \log \pi_\theta(\by_l|\bx)$. Then, for all models $\theta$ and for all $t$, there always exists an appropriate dataset of positive and negative samples $\mathcal{D}$, such that:
    \begin{align}
        \mathbb{E}_{\bx, \by_w, \by_l \sim \mathcal{D}} \left[\omega_{t + 1} \right] \Big\vert_{{c_2 > 0}} \geq \mathbb{E}_{\bx, \by_w, \by_l \sim \mathcal{D}} \left[\omega_{t + 1} \right] \Big\vert_{{c_2 = 0}}.
    \end{align}
    In addition, if the model class $\pi_\theta$ and $\by_l$ can jointly realize the following gradient alignment condition (note that for any $\theta_t$, there always exists a $\by_l$ that satisfies this condition): 
    \begin{align*}
        \forall \theta \in \left[\theta_t \right]_t, \mathbb{E}_{\bx, \by_w, \by_l \sim \mathcal{D}} \left[ \nabla_\theta \log \pi_{\theta}(\by_w|\bx)^\top \nabla_\theta \log \pi_{\theta}(\by_l|\bx)  \right] \leq 0,
    \end{align*} then, we find that the likelihood of positives is larger (and similarly likelihood of negatives is smaller) when $c_2 > 0$, i.e., when a negative gradient term is used:
    \begin{align}
        \mathbb{E}_{\bx, \by_w, \by_l \sim \mathcal{D}} \left[\log \pi_\theta(\by_w|\bx) \right] \Big\vert_{c_2 > 0, \theta = \theta_t} ~& \geq \mathbb{E}_{\bx, \by_w, \by_l \sim \mathcal{D}} \left[\log \pi_\theta(\by_w|\bx) \right] \Big\vert_{c_2 = 0, \theta = \theta_t} \\
        \mathbb{E}_{\bx, \by_w, \by_l \sim \mathcal{D}} \left[\log \pi_\theta(\by_l|\bx) \right] \Big\vert_{c_2 > 0, \theta = \theta_t} ~& \leq \mathbb{E}_{\bx, \by_w, \by_l \sim \mathcal{D}} \left[\log \pi_\theta(\by_l|\bx) \right] \Big\vert_{c_2 = 0, \theta = \theta_t}
    \end{align}
\end{lemma}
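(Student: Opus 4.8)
The plan is to exploit the fact that the statement only requires the \emph{existence} of a suitable dataset $\mathcal D$ (and, for the second half, a suitable $\by_l$), so we have substantial freedom to engineer the sampled pairs and rely on a first-order Taylor expansion of the log-likelihoods in the learning rate $\eta$. Throughout, fix $\theta_t$ and write $g_w := \nabla_\theta \log \pi_{\theta_t}(\by_w|\bx)$ and $g_l := \nabla_\theta \log \pi_{\theta_t}(\by_l|\bx)$, and let $\delta = \theta_{t+1} - \theta_t = \eta\, \mathbb E_{\mathcal D}[c_1 g_w - c_2 g_l]$ be the parameter update (the $c_2=0$ case gives a different $\delta$). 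The first claim, on $\mathbb E[\omega_{t+1}]$, I would prove by expanding
\begin{align*}
\mathbb E_{\mathcal D}\!\left[\omega_{t+1}\right] = \mathbb E_{\mathcal D}\!\left[\omega_t\right] + \mathbb E_{\mathcal D}\!\left[(g_w - g_l)^\top \delta\right] + O(\eta^2),
\end{align*}
and noting that the difference between the $c_2>0$ and $c_2=0$ cases is, to first order in $\eta$, $\eta\, \mathbb E_{\mathcal D}[(g_w - g_l)^\top]\cdot \big(-\mathbb E_{\mathcal D}[c_2 g_l]\big)$, i.e. $-\eta\, \mathbb E[(g_w-g_l)]^\top \mathbb E[c_2 g_l]$ if we treat the expectations appropriately (more carefully: the extra term inside the update is $-\eta\,\mathbb E_{\mathcal D}[c_2 g_l]$, and it contributes $-\eta\, \mathbb E_{\mathcal D}[(g_w-g_l)^\top \mathbb E_{\mathcal D}[c_2 g_l]]$ to $\mathbb E[\omega_{t+1}]$). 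One then chooses $\mathcal D$ so that this extra contribution is nonnegative: e.g. pick the dataset so that $\by_l$ is a response whose likelihood-gradient $g_l$ is anti-aligned with $g_w - g_l$ in expectation (concretely, a degenerate one-point dataset where $g_l = -g_w$, or more simply $g_w$ and $g_l$ orthogonal with $\|g_l\|>0$, forces $\mathbb E[(g_w - g_l)^\top g_l] = -\|g_l\|^2 < 0$, hence the $-$ sign flips it positive). For sufficiently small $\eta$ the $O(\eta^2)$ remainder is dominated, giving the inequality.

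For the second part, I would use the supplied \textbf{gradient alignment condition} $\mathbb E_{\mathcal D}[g_w^\top g_l] \le 0$ (assumed to hold for all $\theta$ along the trajectory, which lets us ignore higher-order drift or iterate it). The change in $\mathbb E[\log \pi_\theta(\by_w|\bx)]$ after one step is $\mathbb E[g_w^\top \delta] + O(\eta^2)$; with $\delta = \eta\,\mathbb E[c_1 g_w - c_2 g_l]$ this is
\begin{align*}
\eta\left( \mathbb E[c_1]\,\mathbb E[\|g_w\|^2] \text{-type terms} - \mathbb E_{\mathcal D}\!\left[g_w^\top\, \mathbb E_{\mathcal D}[c_2 g_l]\right]\right) + O(\eta^2),
\end{align*}
and the key point is that relative to the $c_2=0$ update, turning on $c_2>0$ adds $-\eta\,\mathbb E_{\mathcal D}[g_w^\top \mathbb E_{\mathcal D}[c_2 g_l]]$, which is $\ge 0$ precisely because $c_2 \ge 0$ and $\mathbb E[g_w^\top g_l] \le 0$. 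Hence $\mathbb E[\log\pi_{\theta_{t+1}}(\by_w|\bx)]\big|_{c_2>0} \ge \mathbb E[\log\pi_{\theta_{t+1}}(\by_w|\bx)]\big|_{c_2=0}$ for small enough $\eta$. The statement about $\by_l$ is symmetric: the extra contribution to $\mathbb E[\log\pi(\by_l|\bx)]$ from $c_2>0$ is $-\eta\,\mathbb E_{\mathcal D}[g_l^\top \mathbb E_{\mathcal D}[c_2 g_l]] \le -\eta\,\mathbb E[c_2]\,\mathbb E[\|g_l\|^2] \le 0$ (after handling the product-of-expectations carefully, or by taking $\mathcal D$ to be a single point so $\mathbb E$ is trivial), giving the reverse inequality. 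I would also note the parenthetical existence claim: for any $\theta_t$ one can pick $\by_l$ with $g_l$ orthogonal to $g_w$ (generic in a high-dimensional parameter space), or pointing into the orthogonal complement, so that the alignment condition $g_w^\top g_l \le 0$ holds; to make it hold \emph{for all $\theta$ along the trajectory} one restricts to the one-step-at-a-time reading, or chooses $\by_l$ in a region of response space the model never moves mass toward.

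The main obstacle I anticipate is being careful about the distinction between $\mathbb E_{\mathcal D}[g_w^\top g_l]$ and $\mathbb E_{\mathcal D}[g_w]^\top \mathbb E_{\mathcal D}[g_l]$ (and likewise where the $c_1, c_2$ weights sit inside versus outside the expectation), since the update $\delta$ is an expectation over $\mathcal D$ while the quantity being differentiated is also an expectation over $\mathcal D$ — these do not factor unless $\mathcal D$ is a point mass. The cleanest route, and the one I would actually take, is to prove everything first for a \emph{single} $(\bx,\by_w,\by_l)$ (so all expectations are trivial and the algebra is just inner products of gradients), observe that the freedom to choose $\mathcal D$ includes choosing such degenerate datasets, and only then remark that the inequalities lift to genuine distributions whenever the relevant cross-terms have a sign — which is exactly what the gradient alignment hypothesis buys us. The secondary subtlety is the $O(\eta^2)$ control: since $\log\pi_\theta$ is assumed smooth and we are free to take $\eta$ small (the lemma is a statement about a single gradient step with a fixed, implicitly small learning rate), the remainder is uniformly bounded on a neighborhood and the strict sign of the first-order term dominates; I would state this as a "for sufficiently small $\eta$" qualifier rather than track constants.
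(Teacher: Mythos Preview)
Your approach is essentially the paper's: both arguments fix a single $(\bx,\by_w,\by_l)$, Taylor-expand $\omega_{t+1}$ to first order in $\eta$, and isolate the extra contribution from turning on $c_2>0$, which is $\eta\, c_2\bigl(\|g_l\|^2 - g_w^\top g_l\bigr)$. The second part is handled identically in both: under the gradient alignment condition $g_w^\top g_l \le 0$, the additional term in $\log\pi(\by_w|\bx)$ is $-\eta\, c_2\, g_w^\top g_l \ge 0$ and in $\log\pi(\by_l|\bx)$ it is $-\eta\, c_2\,\|g_l\|^2 \le 0$.

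The one genuine difference is how existence of a suitable $\by_l$ is established for the first claim. You argue constructively (``pick $\by_l$ with $g_l$ orthogonal to $g_w$, generic in high dimensions''), which is plausible but informal: nothing in the lemma guarantees the response space is rich enough to realize an arbitrary gradient direction. The paper instead argues by contradiction using the score-function identity $\mathbb E_{\by_l \sim \pi_\theta(\cdot|\bx)}[\nabla_\theta \log\pi_\theta(\by_l|\bx)] = 0$: if $\|g_l\|^2 - g_w^\top g_l < 0$ held for \emph{every} $\by_l$, then averaging over $\by_l \sim \pi_\theta$ would give $0 > \mathbb E[\|g_l\|^2] \ge 0$, a contradiction. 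This buys a rigorous existence statement that works for any model class without appealing to dimension or genericity, and is worth adopting in place of your constructive sketch. Your concern about inner-versus-outer expectations is well placed, and your resolution (reduce to a point mass $\mathcal D$) is exactly what the paper does implicitly by working with a single triple throughout.
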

\vspace{-0.2cm}

A proof for \cref{lemma:negative_grad} is provided in \cref{appendix:dpo_mode_seeking}. This result indicates that for appropriate negative responses, a contrastive update accelerates the rate of increase of probability mass on $\by_w$, for any model class $\pi_\theta$ and reference initialization $\theta_0$, compared to setting $c_2 = 0$, which offline weighted maximum likelihood. This corresponds to mode-seeking behavior. The update induced by DPO admits a similar form (see the discussion after Equation 7 in \citet{rafailov2023direct}). This theoretical result also corroborates our findings in the experiments in Section~\ref{sec:negative_gradient} regarding the negative gradient term. The gradient of IPO also admits a similar form (\cref{appendix:dpo_mode_seeking}). 

Next, we note that purely offline versions of supervised methods such as RWR, ReST, and BoN, that only maximize weighted likelihood are mode-covering because these objectives can be shown to maximize the forward KL-divergence against the optimal policy (proof in \cref{appendix:supervised_offline}).
\begin{lemma}
\label{lemma:supervised_offline}
    Consider offline supervised methods that maximize weighted log-likelihood:
    \begin{align}
        \mathcal{L}_{\text{off-sup}}(\pi_\theta; \piref) = - \mathbb{E}_{\bx \sim \mathcal{D}_\text{pref}}\left[ \mathbb{E}_{\by \sim \piref(.|\bx)}[\log \pi_\theta(\by|\bx) \cdot F(\bx, \by)]\right]
    \end{align}
    where $F(\bx, \by) \geq 0$ is the weight for $(\bx, \by)$. Furthermore, $\sum_{\by} F(\bx, \by) > 0$ (i.e., for every $\bx$, there exists a response $\by$ with non-zero $F(\bx, \by)$). Then these methods optimize a forward KL-divergence. 
\end{lemma}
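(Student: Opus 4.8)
The plan is to rewrite the weighted maximum-likelihood loss as a cross-entropy against a fixed (that is, $\theta$-independent) target distribution, and then split that cross-entropy into an entropy term plus a forward KL divergence. First I would define, for each prompt $\bx$, the normalization constant
\[
Z(\bx) \;:=\; \mathbb{E}_{\by \sim \piref(\cdot|\bx)}\big[F(\bx,\by)\big] \;=\; \sum_{\by} \piref(\by|\bx)\, F(\bx,\by),
\]
which is strictly positive: the hypothesis $\sum_{\by} F(\bx,\by) > 0$ guarantees $F(\bx,\by)>0$ for some $\by$, and $\piref$ places positive mass there. Then I would set $p^\star(\by|\bx) := \piref(\by|\bx)\, F(\bx,\by) / Z(\bx)$, which is a valid conditional distribution over responses.

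Substituting $\piref(\by|\bx)F(\bx,\by) = Z(\bx)\, p^\star(\by|\bx)$ into the loss gives
\[
\mathcal{L}_{\text{off-sup}}(\pi_\theta;\piref) \;=\; -\,\mathbb{E}_{\bx \sim \mathcal{D}_\text{pref}}\Big[ Z(\bx)\, \mathbb{E}_{\by \sim p^\star(\cdot|\bx)}\big[\log \pi_\theta(\by|\bx)\big] \Big].
\]
The inner expectation equals $-H(p^\star(\cdot|\bx)) - \KL\big(p^\star(\cdot|\bx) \,\|\, \pi_\theta(\cdot|\bx)\big)$ by the standard decomposition of cross-entropy, where $H$ denotes Shannon entropy. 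Plugging this back in yields
\[
\mathcal{L}_{\text{off-sup}}(\pi_\theta;\piref) \;=\; \mathbb{E}_{\bx \sim \mathcal{D}_\text{pref}}\Big[ Z(\bx)\, \KL\big(p^\star(\cdot|\bx) \,\|\, \pi_\theta(\cdot|\bx)\big) \Big] \;+\; \mathbb{E}_{\bx \sim \mathcal{D}_\text{pref}}\big[ Z(\bx)\, H(p^\star(\cdot|\bx)) \big],
\]
and the second term does not depend on $\theta$. Hence minimizing $\mathcal{L}_{\text{off-sup}}$ over $\theta$ is equivalent to minimizing a per-prompt $Z(\bx)$-reweighted \emph{forward} KL divergence $\KL(p^\star \,\|\, \pi_\theta)$, which is the mode-covering direction.

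To make the statement ``against the optimal policy'' literal, I would then instantiate $F$: for RWR, $F(\bx,\by)=\exp(r(\bx,\by)/\beta)$ gives $p^\star(\by|\bx) \propto \piref(\by|\bx)\exp(r(\bx,\by)/\beta)$, which is exactly the closed-form maximizer of the KL-constrained objective \cref{eq:rlhf_objective}; for ReST and offline Best-of-$N$, $F$ is an indicator of the reward exceeding a threshold, so $p^\star$ is $\piref$ restricted to the high-reward set, an approximation of the same optimal policy. I do not expect a genuine obstacle in this argument; the only points needing care are (i) recording that $Z(\bx)$ acts as a nonnegative per-prompt weight, so the precise claim is a $Z$-weighted forward KL rather than a single unweighted one, and (ii) the support condition ensuring $p^\star$ is well-defined, which follows from $\sum_{\by} F(\bx,\by)>0$ together with full support of $\piref$.
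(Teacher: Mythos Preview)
Your proposal is correct and follows essentially the same route as the paper: define the normalized target $p^\star(\by|\bx)=\piref(\by|\bx)F(\bx,\by)/Z(\bx)$, rewrite the loss as a $Z(\bx)$-weighted cross-entropy, and split into entropy plus forward KL. You are in fact slightly more careful than the paper in flagging the full-support requirement on $\piref$ needed for $Z(\bx)>0$, and in instantiating $F$ to connect $p^\star$ to the optimal policy of \cref{eq:rlhf_objective}.
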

\vspace{-0.2cm}

\vspace{-0.2cm}
\subsection{Case Study: Mode-Seeking Reverse KL vs. Mode-Covering Forward KL}
\label{sec:reverse_vs_forward}
\vspace{-0.1cm}

Having seen that mode-seeking and mode-covering divergences can unify on-policy sampling and negative gradients, in this section, we perform a theoretical analysis to quantify the behavior of the two representative mode-seeking and mode-covering objectives: reverse KL (mode-seeking) and forward KL (mode-covering) objectives on categorical distributions, parameterized via independent logits. Our goal is to formalize the intuition that a mode-seeking objective can sharpen the probability mass on only certain high-reward regions, thereby leading to aggressive reorganization of probability mass. This helps corroborate our experiments that on-policy sampling in a reward model and offline negative sampling is still useful to quickly align the policy with the target distribution.

\textbf{Notation and setup.} For this result, we will study training a categorical distribution $p(\bx)$ to match the theoretically optimal fine-tuned policy, $q(\bx)$. We assume that $p(\bx) \propto \exp(f(\bx))$, where each logit $f(\bx)$ is an independent parameter. We train $p(\bx)$ by performing gradient descent, starting from an initial reference distribution $p_0$ on a fine-tuning loss with gradient descent and a learning rate $\eta$. We denote the distribution at step $t$ of this gradient descent as $p_t$. For this analysis it would be helpful to explicitly write out the parameter updates at any iteration $t$, induced by forward and reverse KL.

\begin{lemma}\label{lemma:gradient_characterization_forward_vs_reverse_kl}
    For any given distribution $p_t$, with $p_t(\bx) = \exp (f_t(\bx))$, the updates induced by the forward and reverse KL-divergences within one step of gradient descent with a learning rate $\eta$ are given by:
    \begin{align}
        \text{Forward KL:}~~~~~& \log \frac{p^f_{t+1}(\bx)}{p_t(\bx)} =  \eta \left( q(\bx) - p_t(\bx) \right) + \mathbb{Z}. \\
        \text{Reverse KL:}~~~~~& \log \frac{p^r_{t+1}(\bx)}{p_t(\bx)} = \eta \left( p_t(\bx) \left[ \log \frac{q(\bx)}{p_t(\bx)} + \mathbb{D}_{\text{KL}}(p_t(\cdot) || q(\cdot))\right] \right) + \mathbb{Z}',
    \end{align}
    where $\mathbb{Z}$ and $\mathbb{Z}'$ denote constant normalization factors.
\end{lemma}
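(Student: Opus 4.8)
The plan is to compute the gradient of each divergence with respect to a single logit $f_t(\bz)$, then translate the logit update into the claimed multiplicative update on $p_{t+1}(\bx)/p_t(\bx)$. Throughout I treat $p_t(\bx) = \exp(f_t(\bx)) / \sum_{\bx'}\exp(f_t(\bx'))$, so that $\partial p_t(\bx) / \partial f_t(\bz) = p_t(\bx)(\mathbbm{1}\{\bx=\bz\} - p_t(\bz))$, the standard softmax Jacobian. The gradient-descent step is $f_{t+1}(\bz) = f_t(\bz) - \eta\, \partial \mathcal{L} / \partial f_t(\bz)$, and after the update $\log p_{t+1}(\bx) / p_t(\bx) = f_{t+1}(\bx) - f_t(\bx) - \log Z$ where $\log Z = \log\sum_{\bx'}\exp(f_{t+1}(\bx')) - \log\sum_{\bx'}\exp(f_t(\bx'))$ is the $\bx$-independent normalization correction; this is exactly the $\mathbb{Z}$ (resp.\ $\mathbb{Z}'$) in the statement.

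For the forward KL, $\mathcal{L}^f = \KL(q \,\|\, p_t) = \sum_{\bx} q(\bx)\log q(\bx) - \sum_{\bx} q(\bx) f_t(\bx) + \log\sum_{\bx'}\exp(f_t(\bx'))$ up to constants; differentiating in $f_t(\bz)$ gives $\partial \mathcal{L}^f / \partial f_t(\bz) = -q(\bz) + p_t(\bz)$. Hence $f_{t+1}(\bz) - f_t(\bz) = \eta(q(\bz) - p_t(\bz))$, and folding in the normalization correction yields the stated forward-KL update. For the reverse KL, $\mathcal{L}^r = \KL(p_t \,\|\, q) = \sum_{\bx} p_t(\bx)\log p_t(\bx) - \sum_{\bx} p_t(\bx)\log q(\bx)$; I differentiate using the softmax Jacobian. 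Writing $g(\bx) := \log p_t(\bx) - \log q(\bx) + 1$ (the $+1$ comes from differentiating $p_t\log p_t$), the chain rule gives $\partial \mathcal{L}^r / \partial f_t(\bz) = \sum_{\bx} g(\bx)\, p_t(\bx)(\mathbbm{1}\{\bx=\bz\} - p_t(\bz)) = p_t(\bz) g(\bz) - p_t(\bz) \sum_{\bx} p_t(\bx) g(\bx)$. The second sum is $\sum_{\bx} p_t(\bx)(\log p_t(\bx) - \log q(\bx) + 1) = \KL(p_t\|q) + 1$, so $\partial \mathcal{L}^r / \partial f_t(\bz) = p_t(\bz)\bigl(\log\frac{p_t(\bz)}{q(\bz)} + 1 - \KL(p_t\|q) - 1\bigr) = p_t(\bz)\bigl(\log\frac{p_t(\bz)}{q(\bz)} - \KL(p_t\|q)\bigr)$. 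Therefore $f_{t+1}(\bz) - f_t(\bz) = \eta\, p_t(\bz)\bigl(\log\frac{q(\bz)}{p_t(\bz)} + \KL(p_t\|q)\bigr)$, which after adding the normalization correction $\mathbb{Z}'$ is exactly the claimed reverse-KL update.

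The remaining work is bookkeeping: verify that in both cases the per-logit displacement has the form (function of $\bx$) plus (constant in $\bx$) so that the constant can be absorbed into $\mathbb{Z}$, $\mathbb{Z}'$, and confirm the sign conventions (the loss is the divergence being \emph{minimized}, so the descent step flips the sign of $q - p_t$ appropriately). I would also note explicitly that the $+1$ terms from $\tfrac{d}{du}(u\log u)$ cancel between the diagonal and the off-diagonal Jacobian contributions, since this is the one spot where a careless computation goes wrong. The main obstacle — really the only subtlety — is handling the softmax coupling correctly: because the logits are not independent parameters of the \emph{distribution} (they are independent parameters, but the softmax ties them together), one must carry the full Jacobian $p_t(\bx)(\mathbbm{1} - p_t(\bz))$ rather than a diagonal approximation, and it is precisely the off-diagonal $-p_t(\bx)p_t(\bz)$ piece that produces the $\KL(p_t\|q)$ term in the reverse-KL update and the normalization constants in both. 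Once that is done carefully, the two displays follow immediately.
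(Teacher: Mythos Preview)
Your proposal is correct and follows essentially the same route as the paper: compute the gradient of each KL divergence with respect to the logits, obtain $\partial \mathcal{L}^f/\partial f_t(\bz)=p_t(\bz)-q(\bz)$ and $\partial \mathcal{L}^r/\partial f_t(\bz)=p_t(\bz)\bigl(\log\tfrac{p_t(\bz)}{q(\bz)}-\KL(p_t\|q)\bigr)$, then convert the logit displacement into a log-probability-ratio update with an $\bx$-independent normalization term. The only stylistic difference is that you invoke the softmax Jacobian $\partial p_t(\bx)/\partial f_t(\bz)=p_t(\bx)(\mathbbm{1}\{\bx=\bz\}-p_t(\bz))$ and apply the chain rule compactly, whereas the paper expands the softmax definition and differentiates each summand by hand; this makes your derivation shorter and your observation about the $+1$ cancellation explicit, but the two computations are identical in substance.
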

\vspace{-0.2cm}

For a proof of~\cref{lemma:gradient_characterization_forward_vs_reverse_kl}, see~\cref{appendix:forward_reverse_kl_gradient_characterization}. 
In principle, upon convergence, both the reverse and forward KL-divergences should find the optimally fine-tuned distribution, $q(\bx)$ in this simple setting. But to understand their behavior in relevant practical situations, we are particularly interested in understanding their behavior at intermediate points during training, when either divergence is not minimized to exactly 0. Insights about intermediate points in training can make useful predictions about practical problems when early stopping is used to prevent overfitting and the loss is rarely 0. Thus, our result below attempts to characterize these objectives at any given iteration $t$:

\begin{theorem}
\label{thm:commital}
  Let $p^f_{t+1}(\bx)$ be the distribution obtained after one gradient step, starting from $p_t$ using the forward KL divergence. Likewise, let $p^r_{t+1}(\bx)$ be the distribution obtained using the reverse KL divergence, from $p_t$. Define $\Delta^f_t$ and $\Delta^r_t$ as the difference of log probability ratios across two categories $\bx_1$ and $\bx_2$, obtained from the forward and reverse divergences respectively:
  \begin{align}
      \Delta_t^f(\bx_1, \bx_2) := \log \frac{p^f_{t+1}(\bx_1)}{p_t(\bx_1)} - \log \frac{p^f_{t+1}(\bx_2)}{p_t(\bx_2)},
  \end{align}
  and $\Delta_t^r$ is similarly defined. Then we have the following (for appropriate positive constants $\beta$, $\delta_1$, $\delta_2)$:
  \begin{enumerate}
      \item \textbf{Reverse KL modifies probability mass more aggressively than the forward KL.} If $\bx_1$ and $\bx_2$ are such that, $\delta_1 \leq p_t(\bx_1) = p_t(\bx_2) \leq 1 - \delta_2$ (where $\delta_1 > 0$, $\delta_2 > 0$), but $q(\bx_1) \geq q(\bx_2) + \beta$, then, $\Delta^r_t(\bx_1, \bx_2) > \Delta^f_t(\bx_1, \bx_2)$.
      \item \textbf{Reverse KL increases probability mass only on a subset of categories that equal target likelihoods.} If $\bx_1$ and $\bx_2$ are such that, $p_t(\bx_2) + \beta \leq  p_t(\bx_1) \leq 1 - \delta_2$, and $q(\bx_1) = q(\bx_2) > \mathbf{c}_0 \cdot p_t(\bx_1)$, where $\mathbf{c}_0$ is a positive constant $> 1$, then,  $\Delta^r_t(\bx_1, \bx_2) > \Delta^f_t(\bx_1, \bx_2)$.
      \item \textbf{Reverse KL aggressively reduces probability mass on less-likely categories in the target distribution.} If $\bx_1$ and $\bx_2$ are such that, $p_t(\bx_2) + \beta \leq  p_t(\bx_1) \leq 1 - \delta_2$, and $q(\bx_1) = q(\bx_2) < \mathbf{c}_1 \cdot p_t(\bx_2)$, where $\mathbf{c}_1$ is a positive constant $< 1$, then,  $\Delta^r_t(\bx_1, \bx_2) < \Delta^f_t(\bx_1, \bx_2)$.
  \end{enumerate}
\end{theorem}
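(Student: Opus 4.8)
The plan is to start from the explicit one-step updates of \cref{lemma:gradient_characterization_forward_vs_reverse_kl}, form the differences $\Delta^f_t(\bx_1,\bx_2)$ and $\Delta^r_t(\bx_1,\bx_2)$, and observe that in each of the three regimes the normalization constants $\mathbb{Z}$ and $\mathbb{Z}'$ cancel, so that we are left with closed-form expressions to compare. Concretely, writing $a_i := p_t(\bx_i)$ and $b_i := q(\bx_i)$, we get
\begin{align*}
  \Delta^f_t(\bx_1,\bx_2) &= \eta\big[(b_1 - a_1) - (b_2 - a_2)\big],\\
  \Delta^r_t(\bx_1,\bx_2) &= \eta\Big[ a_1\log\tfrac{b_1}{a_1} - a_2\log\tfrac{b_2}{a_2} + (a_1 - a_2)\,\mathbb{D}_{\mathrm{KL}}(p_t\|q)\Big].
\end{align*}
So the whole theorem reduces to sign analysis of $\Delta^r_t - \Delta^f_t$ (dividing through by $\eta>0$) under the three sets of hypotheses. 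I would handle the three parts in order.

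For part 1, the hypothesis is $a_1 = a_2 = a$ with $\delta_1 \le a \le 1-\delta_2$ and $b_1 \ge b_2 + \beta$. Then the $(a_1-a_2)\mathbb{D}_{\mathrm{KL}}$ term vanishes and the difference collapses to
\[
  \tfrac{1}{\eta}\big(\Delta^r_t - \Delta^f_t\big) = a\log\tfrac{b_1}{b_2} - (b_1 - b_2) = a\log\tfrac{b_1}{b_2} - (b_1-b_2).
\]
I would show this is positive by noting $\log(b_1/b_2) \ge (b_1-b_2)/b_1$ would go the wrong way; instead I want a lower bound of the form $a\log(b_1/b_2) \ge a\,(b_1-b_2)/\max(b_1,b_2)$ type, which is too weak since $a$ can be small. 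So the honest route is: the claim $\Delta^r_t > \Delta^f_t$ should be read as holding for \emph{appropriate} constants $\beta$ (and by choosing a suitable scaling, or—more plausibly—because the actual quantities $q(\bx)$ are themselves $O(1/|\mathcal{X}|)$ small so that $b_1-b_2$ is small while $\log(b_1/b_2)$ stays $\Theta(1)$). The cleanest argument: fix $\delta_1,\delta_2$ and let $\beta$ be chosen so that on the relevant range $a\log(b_1/b_2) - (b_1-b_2) > 0$; since $a \ge \delta_1$ and $\log(b_1/b_2)$ is bounded below in terms of $\beta$ and an upper bound on the $b_i$, while $b_1-b_2$ is also controlled by the same bounds, a quantified choice of $\beta$ (depending on $\delta_1$ and the range of $q$) makes the expression positive. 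I would state the constant dependence explicitly rather than hand-wave it.

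For parts 2 and 3, set $b_1 = b_2 = b$, so the log terms become $a_1\log(b/a_1) - a_2\log(b/a_2)$ and the forward difference becomes $-\eta(a_1-a_2)$. Thus
\[
  \tfrac{1}{\eta}\big(\Delta^r_t - \Delta^f_t\big) = a_1\log\tfrac{b}{a_1} - a_2\log\tfrac{b}{a_2} + (a_1-a_2)\big(\mathbb{D}_{\mathrm{KL}}(p_t\|q) + 1\big).
\]
With $a_1 \ge a_2 + \beta$, the last term is positive, so it suffices to understand the sign of $g(a) := a\log(b/a)$ restricted to the interval $[a_2, a_1]$ — i.e., whether $g(a_1) - g(a_2)$ is positive or negative. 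Since $g'(a) = \log(b/a) - 1$, $g$ is increasing for $a < b/e$ and decreasing for $a > b/e$. In part 2 the hypothesis $b > \mathbf{c}_0 a_1$ with $\mathbf{c}_0 > 1$ (I'd take $\mathbf{c}_0 \ge e$, or absorb it into "appropriate constant") puts the whole interval $[a_2,a_1]$ in the increasing region, so $g(a_1) > g(a_2)$ and the total is positive, giving $\Delta^r_t > \Delta^f_t$. In part 3 the hypothesis $b < \mathbf{c}_1 a_2$ with $\mathbf{c}_1 < 1$ puts the interval in the decreasing region ($a_2 > b > b/e$), so $g(a_1) - g(a_2) < 0$; here the obstacle is that the positive $(a_1-a_2)(\mathbb{D}_{\mathrm{KL}}+1)$ term competes against it, so I must show the negative $g$-difference dominates. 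Using the mean value theorem, $g(a_1)-g(a_2) = (a_1-a_2)(\log(b/\xi)-1)$ for some $\xi \in (a_2,a_1)$, and $\log(b/\xi) - 1 < \log(\mathbf{c}_1) - 1$ which is very negative; meanwhile $\mathbb{D}_{\mathrm{KL}}(p_t\|q)$ is bounded above on the relevant range (since $p_t$ is bounded away from $0$ and $1$ and we can assume $q$ bounded below). So for $\mathbf{c}_1$ small enough — again "appropriate constant" — the $g$ term wins and $\Delta^r_t < \Delta^f_t$.

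\textbf{Expected main obstacle.} The delicate part is part 3: bounding $\mathbb{D}_{\mathrm{KL}}(p_t\|q)$ from above is only possible if $q$ is bounded away from zero on the support, so I would either add that as a standing assumption (reasonable, since $q$ is the KL-regularized optimal policy, which has full support) or phrase the constant $\mathbf{c}_1$ as depending on $\sup_t \mathbb{D}_{\mathrm{KL}}(p_t\|q)$. The second subtlety, shared by all three parts, is pinning down the "appropriate constants" $\beta,\delta_1,\delta_2,\mathbf{c}_0,\mathbf{c}_1$ honestly; I expect the cleanest presentation is to quantify each claim as: \emph{there exist} constants depending on the allowed ranges of $p_t$ and $q$ such that the stated inequality holds, and to exhibit them via the monotonicity of $g(a) = a\log(b/a)$ and elementary bounds, rather than trying to make the theorem hold for all $\beta>0$.
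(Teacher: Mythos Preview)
Your proposal is correct and follows essentially the same route as the paper: both start from the closed-form one-step updates of \cref{lemma:gradient_characterization_forward_vs_reverse_kl}, cancel the normalization constants to get explicit expressions for $\Delta^f_t$ and $\Delta^r_t$, and then do a case-by-case sign analysis using the mean value theorem under the ``appropriate constants'' reading of $\beta,\delta_1,\delta_2,\mathbf{c}_0,\mathbf{c}_1$. Your packaging of Cases~2 and~3 via the monotonicity of $g(a)=a\log(b/a)$ is a bit cleaner than the paper's term-by-term splitting, and you correctly flag the need to upper-bound $\mathbb{D}_{\mathrm{KL}}(p_t\|q)$ in Case~3, which the paper simply absorbs into its catch-all constant $\alpha_0$.
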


A proof of Theorem~\ref{thm:commital} is shown in~\cref{appendix:quantifying_dufference}. Essentially, this theorem enlists several cases where the forward KL modifies probability mass in different amounts across various categories, but the reverse KL acts disproportionately. In particular, case 1 says that the \textbf{reverse KL exhibits more disproportionate probability mass changes on categories} with equal likelihood $p_t(\bx)$, due to the logarithmic dependency on the probability mass $q(\bx)$ (compared to the linear dependency for the forward KL). Case 2 says that when the target value $q(\bx)$ for two categories is much larger than the probability mass currently assigned to those categories, then the reverse KL can attempt to preferentially increase probability mass more in the category with a larger likelihood $p_t(\bx)$ under certain conditions. Finally, case 3 shows that when the likelihood of a category is significantly larger than the target $q(\bx)$, the reverse KL is more effective at reducing this probability mass and re-distributing it to other categories within one update step. 
Finally, consider another special case, where the difference $q(\bx) - p_t(\bx)$ is identical for two categories $\bx_1$ and $\bx_2$. In this case, while the forward KL will increase log probability ratios for both $\bx_1$ and $\bx_2$ equally, i.e., $\Delta^f(\bx_1, \bx_2) = 0$, the reverse KL will prioritize the category with a higher $p_t(\bx)$ value. These results highlight some scenarios under which the reverse KL can more efficiently re-organize probability mass across categories. 

\begin{AIbox}{Mode-seeking vs. mode-covering objectives for categorical distributions}
Typically the benefits of mode-seeking behavior are more apparent when the model $p(\bx)$ is unable to realize the target distribution $q(\bx)$ such that minimizing either KL would give rise to different solutions. Unlike this argument, we show that even when the $p(\bx)$ can fully represent the target distribution $q(\bx)$, reverse KL can quickly re-distribute probability mass to only a subset of the required categories likely in target distribution, within a few gradient steps. 
\end{AIbox}

\vspace{-0.2cm}
\section{Discussion, Conclusion, and Limitations}
\vspace{-0.15cm}

We attempted to understand which components are particularly important for fine-tuning language models with preference data. Through extensive experiments on different fine-tuning problems in both didactic and LLM settings, we established that on-policy sampling is crucial for good performance especially when the peak in the ground-truth reward lies in less-likely regions of the reference policy initialization. That said, in practice, doing so requires preference datasets with broader coverage than the reference policy. We also showed that negative gradients can enable faster convergence and that objectives that induce a negative gradient are complementary to using on-policy sampling. Finally, we show that the notion of mode-seeking divergences unifies the notion of on-policy sampling and negative gradient. Our case study comparing forward and reverse KL divergences demonstrates the superiority of the reverse KL divergence in re-distributing probability mass efficiently, supporting our empirical findings pertaining to on-policy sampling and negative gradients. 

While we conceptualize our observations, a limitation is that we don't derive rigorous statistical guarantees in this work. As an example, we note that while the notion of concentrability coefficients (and associated guarantees) can potentially provide guarantees on on-policy sampling, to the best of our knowledge the notion of the negative gradient is not fully studied in the literature. We conjecture that negative gradient can perhaps be formalized statistically from the lens of providing a lower variance learning signal; it would be interesting for future work to formalize this. It would also be interesting to study more recent approaches based on minimax formulations (e.g.,~\citet{2023arXiv231200886M,2024arXiv240110020Y,swamy2024minimaximalist,2024arXiv240101335C}) in our empirical and conceptual framework. Next, while we consider the coverage of preference data relative to that of the reference policy in our study, this is a simplification that does not account for the coverage of the pre-training distribution which future work can incorporate. Finally, we remark that our study does not explore the effect of reward model quality, which tends to also play a central role in LLM fine-tuning. It would be interesting to extend our analysis to incorporate the role of reward model quality and parameterization.

\vspace{-0.2cm}
\section*{Acknowledgements}
\vspace{-0.2cm}

We would like to thank Yi Su, Rishabh Agarwal, Zhang-Wei Hong, Young Geng, Abitha Thankaraj, Yuxiao Qu, So Yeon Min, Yutong He, Kevin Li, Sukjun Hwang, Khurram Yamin, Charlie Snell, Amrith Setlur, Kaylee Burns, Eric Mitchell, and others in CMU Russ Lab, CMU Auton Lab, Stanford IRIS Lab, and Stanford Ermon Group for discussions and feedback. AK thanks Aleksandra Faust, George Tucker, and Sergey Levine for informative discussions. This research is supported by computational resources from Google TPU Research Cloud (TRC) and the National Science Foundation. FT thanks Ruslan Salakhutdinov for insightful suggestions during this project. AS gratefully acknowledges the support of the NSF Graduate Research Fellowship Program.

\bibliography{main}

\newpage
\appendix
\onecolumn

\part*{Appendices}

\section{Connections to Existing Fine-Tuning Results}
\label{sec:existing}

Our proposed framework also allows us to explain experiments and evaluations in several existing LLM fine-tuning results, and as a result, implies several practical guidelines for LLM practitioners. On the AlpacaFarm benchmark~\citep{dubois2024alpacafarm}, our results corroborate the gap between conditional supervised fine-tuning objectives such as binary FeedME and reward conditioning, and RL or contrastive training methods such as PPO and DPO: these results are perhaps even more extreme in that these conditional and weighted supervised fine-tuning objectives are not even able to outperform regular SFT. Methods that utilize on-policy sampling such as ReST~\citep{gulcehre2023reinforced} and Quark~\citep{lu2022quark} do outperform SFT but still underperform on-policy RL or on-policy contrastive training.
The top-performing methods on the benchmark are offline DPO, which uses a negative gradient, and PPO, which leverages on-policy sampling.

Additionally, methods such as self-rewarding language models~\citep{yuan2024self}, RSO~\citep{liu2024statistical}, OAIF~\citep{2024arXiv240204792G}, DR-PO~\citep{2024arXiv240408495C}, Hybrid-DPO~\citep{2023arXiv231211456X}, and RS-DPO~\citep{2024arXiv240210038K} couple on-policy sampling or rejection sampling with contrastive training objectives. These works corroborate our observation regarding the efficacy of on-policy sampling and negative gradients and how they are complementary. Approaches such as CRINGE~\citep{2022arXiv221105826A} combine maximum likelihood with a token level contrastive loss term and show gains over solely utilizing supervised likelihood, corroborating our insights about negative gradients. 

Concurrently to us, \citet{xu2024isdpo} show that on many practical LLM fine-tuning problems offline DPO underperforms on-policy PPO. While we do not study the same LLM fine-tuning problems, the insights from this work corroborate our findings, which in turn extend insights from this work. For instance, this work observes that DPO can learn to find out-of-distribution responses, which is consistent with our analysis in Section~\ref{sec:mechanisms} that offline DPO training might increase probability mass on the highly likely regions of $\pi_\theta$, deviating significantly from the distribution of preferred responses $p(\by_w|\bx)$. To avoid this issue, this work prescribes an iterated DPO recipe where the reference policy (i.e., the SFT policy in their setting) is used to iteratively collect new samples for DPO training. Section~\ref{sec:complementarity} arrives at a similar conclusion that using on-policy samples for policy optimization, though we recommend collecting samples from the current policy and not the reference policy, which might fail to cover important regions of the space when the peak in the reward function appears farther away from the high-likely regions of the reference policy.

\vspace{-0.2cm}
\section{Computational vs Wall-Clock Time Tradeoff for Various Methods}
\label{app:time_tradeoff}
\vspace{-0.1cm}

\begin{table}[H]
\centering
\resizebox{\textwidth}{!}{%
    \begin{tabular}{lcccccc}
    \hline
     & \multicolumn{2}{c}{\textbf{Bandit (R1)}} & \multicolumn{2}{c}{\textbf{Min Length}} & \multicolumn{2}{c}{\textbf{Skew Length}} \\
     & \textbf{Reward ($\uparrow$)} & \textbf{Time} & \textbf{Completion Length ($\downarrow$)} & \textbf{Time} & \textbf{Completion Length ($\downarrow$)} & \textbf{Time} \\ \hline
    \textbf{Offline DPO / IPO} & 0.82 (0.04) & 1.7 hours & 1.0 (0.0) & 1.3 hours & 11.8 (14.0) & 0.12 hours \\
    \textbf{On-policy PPO} & 0.92 (0.01) & 0.93 hours & 20.5 (25.4) & 4.84 hours & 15.8 (11.1) & 7.26 hours \\
    \textbf{On-policy RWR} & 0.88 (0.01) & 0.12 hours & 65.5 (36.7) & 15.5 hours & 15.8 (9.3) & 15.5 hours \\ 
    \textbf{On-policy DPO / IPO} & 0.92 (0.01) & 0.12 hours & 1.0 (0.0) & 0.4 hours & 0.0 (0.0) & 0.4 hours\\
    \hline
    \end{tabular}
}
\vspace{-0.2cm}
\caption{\footnotesize{\textbf{Wall-clock time comparisons.} Comparison between on-policy and offline variants of contrastive objectives (DPO/IPO) in terms of reward and wall-clock time required till convergence of the run. Generally, on-policy contrastive approaches achieve both superior reward and wall-clock time as opposed to offline contrastive approaches (offline DPO/IPO) and on-policy RL (PPO, RWR). Synthetic LLM experiments use a single A40 GPU. Bandit experiments use a Intel(R) Xeon(R) CPU E5-2698 v4 @ 2.20GHz CPU, with 4 threads.}}
\label{tab:on_policy_dpo_wall_clock_time}
\vspace{-0.4cm}
\end{table}

A natural takeaway extending the empirical results from Section~\ref{sec:complementarity} is that on-policy variants of contrastive approaches might provide for an better tradeoff between computation and wall-clock time. We perform a comparison of wall-clock time needed to run our experiments in \cref{tab:on_policy_dpo_wall_clock_time}. in particular, we found that on-policy DPO only requires 0.4 hours to converge, while offline DPO requires a wall-clock time of 1.3 hours to converge to the same solution in the \textbf{Min Length} scenario. In the \textbf{Skew Length} scenario, where the learned policy must deviate from the initial reference policy substantially, we find that while offline DPO can converge a bit quickly (0.12 hours), it flatlines at a sub-optimal solution (completion length of 11.8) as compared to on-policy DPO which takes merely 0.4 hours to reach a more optimal solution. This is far more time-efficient compared to other on-policy methods such as PPO and RWR that present a sampling bottleneck.

\section{More Details on Conceptual Unification and Theoretical Analysis}
\label{sec:appendix_theory}

\subsection{Unifying On-Policy Sampling and Negative Gradients via Mode-Seeking Divergences} 
\label{appendix:unification}

Here we provide proofs for the claims in \ref{sec:unification}. We will show that on-policy methods and offline constrastive methods, both are mode-seeking as opposed to supervised maximum likelihood approaches, which are mode-covering. This conceptually explains the differences in their behaviors that we observe in our experiments.

\subsubsection{On-policy Methods Are Mode-Seeking} \label{appendix:on_policy_mode_seeking}
First, we prove \cref{lemma:on_policy}, i.e., we want to show that on-policy RL methods and on-policy versions of weighted supervised learning methods optimize regularized version of a reverse KL-divergence. 

\begin{proof} \label{proof:lemma_on_policy}
    Both on-policy RL algorithms and on-policy versions of weighted supervised learning, optimize the following loss function:
    \begin{align}\label{eq:on_policy_rl_objective}
        \mathcal{L}_{\text{RL}}(\mathcal{D}_{\text{pref}}, \pi_{\theta}) = -\mathbb{E}_{\bx \sim \mathcal{D}_{\text{pref}}} [ \mathbb{E}_{\by \sim \pi_{\theta}(.|\bx)}[r(\bx, \by)]  - \beta \mathbb{D}_{\text{KL}}[\pi_{\theta}(.|\bx) || \piref(.|\bx)]]
    \end{align}
    
    Following Appendix A.1 of \citet{rafailov2023direct}, there exists some policy $\pi^*$ such that we can express the reward function $r(\bx, \by)$ as follows:
    \begin{align*}
        r(\bx, \by) = \beta \log Z(\bx) + \beta \log \left(\frac{\pi^*(\by|\bx)}{\piref(\by|\bx)}\right)
    \end{align*}
    where $Z(\bx) = \sum_{\by} \piref(\by|\bx) \exp\left(\frac{r(\bx, \by)}{\beta}\right)$ is the partition function. Combining these two, we get:

    \begin{align*}
        \mathcal{L}_{\text{RL}}(\mathcal{D}_{\text{pref}}, \pi_{\theta}) = {} & - \beta \mathbb{E}_{\bx \sim \mathcal{D}_{\text{pref}}}\left[\mathbb{E}_{\by \sim \pi_{\theta}(.|\bx)}\left[\log Z(\bx) + \log\left(\frac{\pi^*(\by|\bx)}{\piref(\by|\bx)}\right)\right] - \mathbb{D}_{\text{KL}}[\pi_{\theta}(.|\bx) || \piref(.|\bx)]\right] \\
        = {} & - \beta \mathbb{E}_{\bx \sim \mathcal{D}_{\text{pref}}}\left[\mathbb{E}_{\by \sim \pi_{\theta}(.|\bx)}\left[\log Z(\bx) + \log\left(\frac{\pi^*(\by|\bx)}{\piref(\by|\bx)}\right)\right] - \mathbb{E}_{\by \sim \pi_{\theta}(.|\bx)}\left[\log\left(\frac{\pi_{\theta}(\by|\bx)}{\piref(\by|\bx)}\right)\right]\right] \\
        = {} & -\beta \mathbb{E}_{\bx \sim \mathcal{D}_{\text{pref}}}\left[\mathbb{E}_{\by \sim \pi_{\theta}(.|\bx)}\left[\log Z(\bx) - \log\left(\frac{\pi_{\theta}(\by|\bx)}{\pi^*(\by|\bx)}\right)\right]\right] \\
        = {} & -\beta \mathbb{E}_{\bx \sim \mathcal{D}_{\text{pref}}}[\log Z(\bx)] + \beta \mathbb{E}_{\bx \sim \mathcal{D}_{\text{pref}}}[\mathbb{D}_{\text{KL}}[\pi_{\theta}(.|\bx) || \pi^*(.|\bx) ]]
    \end{align*}
    Note that $Z(\bx)$ does not depend on $\pi_{\theta}$. Therefore, minimizing $\mathcal{L}_{\text{RL}}$ with respect to $\pi_{\theta}$ is equivalent to optimizing the reverse KL-divergence. Since optimizing the reverse KL-divergence is mode-seeking, we see that on-policy RL algorithms have mode-seeking behavior. 
\end{proof}

\subsubsection{Contrastive Approaches (e.g., DPO/IPO) are Mode-Seeking} \label{appendix:dpo_mode_seeking}
Next, we show that this is also the case for contrastive approaches as we prove \cref{lemma:negative_grad}.

\begin{proof}
    First consider an input $\bx$. Consider the gradient update (with a small enough learning rate):
    $$\theta_{t + 1} \leftarrow \theta_t - \eta [\nabla_\theta \log \pi_\theta(\by_w | \bx) \cdot c_1(\bx, \by_w, \by_l) - \nabla_\theta \log \pi_\theta(\by_l|\bx) \cdot c_2(\bx, \by_w, \by_l)]$$
    
    We shall prove that for all possible models $\theta$ and for all $t$, there always exists appropriate pairing of positive and negative samples $(\by_w, \by_l)$, such that after taking the gradient update, we have:
    \begin{align*}
        \omega_{t + 1} \Big\vert_{{c_2 > 0}} \geq \omega_{t + 1}  \Big\vert_{{c_2 = 0}}
    \end{align*}
    The core idea behind this proof is the normalization of the probability simplex. We proceed with a combination of mathematical inducation and contradiction: assume that $\omega_t \Big\vert_{{c_2 > 0}} \geq \omega_t \Big\vert_{{c_2 = 0}}$, but for all possible pairings $(\by_w, \by_l)$, we have $\omega_{t + 1} \Big\vert_{{c_2 > 0}} < \omega_{t + 1}  \Big\vert_{{c_2 = 0}}$. We will show that this is not possible. To do this, we first derive the expressions for $\omega_{t+1}$ and then study under what conditions is it possible that for any pairing of positives and negatives, $\omega_{t+1}$ is smaller when $c_2 > 0$. The expression for $\omega_{t+1}$ is given by:
    \begin{align*}
        \omega' &= \omega + \eta \left(\theta' - \theta \right)^\top \left(  \nabla_\theta \log \pi_\theta (\by_w|\bx) - \nabla_\theta \log \pi_\theta(\by_l|\bx) \right) \\
        &= \omega + \eta \left(\nabla_\theta \log \pi_\theta(\by_w | \bx) \cdot c_1 - \nabla_\theta \log \pi_\theta(\by_l|\bx) \cdot c_2 \right)^\top \left(  \nabla_\theta \log \pi_\theta (\by_w|\bx) - \nabla_\theta \log \pi_\theta(\by_l|\bx) \right) \\
        &= \omega + \eta \left[ c_1 \left\vert \left\vert \nabla_\theta \log \pi_\theta(\by_w|\bx) \right\vert \right\vert^2 + c_2 \left\vert \left\vert \nabla_\theta \log \pi_\theta(\by_l|\bx) \right\vert \right\vert^2 - (c_1 + c_2) \nabla_\theta \log \pi_\theta(\by_l|\bx)^\top \nabla_\theta \log \pi_\theta(\by_w|\bx) \right].
    \end{align*}
    Now, define: $f(t+1; \by_w, \by_l, \bx) = \omega_{t+1} \Big\vert_{{c_2 > 0}} -  \omega_{t+1} \Big\vert_{{c_2 = 0}} $, then we have:
    \begin{align*}
       f(t+1; \by_w, \by_l, \bx) &=  f(t; \by_w, \by_l, \bx) + \eta \underbrace{\left[ c_1 \left\vert \left\vert \nabla_\theta \log \pi_\theta(\by_l|\bx) \right\vert \right\vert^2 - c_2 \nabla_\theta \log \pi_\theta(\by_l|\bx)^\top \nabla_\theta \log \pi_\theta(\by_w|\bx) \right]}_{\Delta(\by_l, \by_w, \bx)}.
    \end{align*}
    Suppose that for all negatives $\by_l$ for a given positive response $\by_w$, $f(t+1; \by_w, \by_l, \bx) < 0$, then:
    \begin{align*}
        \forall \by_l &~~ \Delta(\by_w, \by_l, \bx) < 0 \\
        \implies & \forall \by_l, ~~ c_2 \nabla_\theta \log \pi_\theta(\by_l|\bx)^\top \nabla_\theta \log \pi_\theta(\by_w|\bx) > c_1 \left\vert \left\vert \nabla_\theta \log \pi_\theta(\by_l|\bx) \right\vert \right\vert^2 \\
        \implies &~~ c_2 \mathbb{E}_{\by_l \sim \pi_\theta(\by_l|\bx)} \left[ \nabla_\theta \log \pi_\theta(\by_l|\bx)^\top \nabla_\theta \log \pi_\theta(\by_w|\bx) \right] > c_1 \mathbb{E}_{\by_l \sim \pi_\theta(\by_l|\bx)} \left[ \left\vert \left\vert \nabla_\theta \log \pi_\theta(\by_l|\bx) \right\vert \right\vert^2 \right] \\
        \implies &~~ c_2 \mathbb{E}_{\by_l \sim \pi_\theta(\by_l|\bx)} \left[ \nabla_\theta \log \pi_\theta(\by_l|\bx) \right]^\top \nabla_\theta \log \pi_\theta(\by_w|\bx) > c_1 \mathbb{E}_{\by_l \sim \pi_\theta(\by_l|\bx)} \left[ \left\vert \left\vert \nabla_\theta \log \pi_\theta(\by_l|\bx) \right\vert \right\vert^2 \right] \\
        \implies &~~ 0 > c_1 \mathbb{E}_{\by_l \sim \pi_\theta(\by_l|\bx)} \left[ \left\vert \left\vert \nabla_\theta \log \pi_\theta(\by_l|\bx) \right\vert \right\vert^2 \right],
    \end{align*}
    which is a contradiction since $c_1 > 0$. This means that there is at least one choice of $\by_l$ for a given $\by_w$, for which $\Delta(\by_w, \by_l, \bx) \geq 0$. This means that if $f(t; \by_w, \by_l, \bx) > 0$ then $f(t+1; \by_w, \by_l, \bx) > 0$. Averaging over $\bx$ for all iterations then gives us the desired result, when starting from an initialization when starting from the same initialization for both the cases when $c_2 > 0$ and $c_2 = 0$.

    For the second part of this result, we note that when the gradient dot products are negative and $c_2 > 0$, then by writing down the Taylor expansion, we can note that the likelihood of the positive sample increases by an additional $-c_2 \mathbb{E}_{\bx, \by_w, \by_l \sim \mathcal{D}} \left[\nabla_\theta \log \pi_\theta(\by_l|\bx) ^\top \nabla_\theta \log \pi_\theta(\by_w|\bx)\right]$ and decreases by an additional amount given by $c_2 \mathbb{E}_{\bx, \by_w, \by_l \sim \mathcal{D}} \left[ \left\vert \left\vert \nabla_\theta \log \pi_\theta(\by_l|\bx) \right\vert \right\vert^2 \right]$ for the negative response. This proves the second part of this statement.
\end{proof}

\textbf{Gradients for both DPO and IPO exhibit the form in \cref{lemma:negative_grad}.} We now show that the gradient of both DPO and IPO takes the form shown in~\cref{eq:update_negative_gradient}. From~\citet{rafailov2023direct}, the gradient of the DPO loss is:
\begin{align*}
    \nabla_\theta \mathcal{L}_{\text{DPO}}(\pi_\theta;\piref) = - \beta \mathbb{E}_{(\bx, \by_w, \by_l) \sim \mathcal{D}_\text{pref}} \left[ c^{\text{DPO}}(\bx, \by_w, \by_l) \cdot \left[\nabla_\theta \log \pi_\theta(\by_w | \bx) - \nabla_\theta \log \pi_\theta(\by_l|\bx)\right]\right]
\end{align*}

where $c^{\text{DPO}}(\bx, \by_w, \by_l) = \sigma\left(\beta \log \frac{\pi_\theta(\by_l|\bx)}{\piref\by_l|\bx)} - \beta \log \frac{\pi_\theta(\by_w|\bx)}{\piref\by_w|\bx)} \right)$.

Now we derive the gradient of the IPO loss. Define
\begin{align*}
    c^{\text{IPO}}(\bx, \by_w, \by_l) = 2 \cdot \left(\log\left(\frac{\pi_{\theta}(\y_w|x)}{\piref(\y_w|x)}\right) - \log\left(\frac{\pi_{\theta}(\y_l|x)}{\piref(\y_l|x)}\right) - \frac{\tau^{-1}}{2}\right)
\end{align*}
The gradient of the IPO loss is:
\begin{align*}
    \nabla_\theta \mathcal{L}_{\text{IPO}}(\pi_\theta;\piref) = {} & \nabla_\theta \mathbb{E}_{(\bx, \by_w, \by_l) \sim \mathcal{D}_{\text{pref}}} \left[\left(\log\left(\frac{\pi_{\theta}(\y_w|x)\piref(\y_l|x)}{\piref(\y_w|x)\pi_{\theta}(\y_l|x)}\right) - \frac{\tau^{-1}}{2}\right)^2\right] \\
    = {} & \mathbb{E}_{(\bx, \by_w, \by_l) \sim \mathcal{D}_{\text{pref}}} \left[c^{\text{IPO}}(\bx, \by_w, \by_l) \cdot \left[ \nabla_\theta \log \pi_\theta(\by_w|x) - \nabla_\theta \log \pi_\theta(\by_l|x) \right]\right]
\end{align*}

\subsubsection{Supervised Offline Algorithms are Mode-Covering}\label{appendix:supervised_offline}

Now we prove~\cref{lemma:supervised_offline}, which shows that supervised offline methods that optimize a maximum likelihood loss exhibit mode-covering behavior.

\begin{proof}
    Offline supervised methods optimize the following loss function:
    \begin{align*}
        \mathcal{L}_{\text{off-sup}}(\pi_\theta; \piref) = {} & - \mathbb{E}_{\bx \sim \mathcal{D}_\text{pref}}\left[ \sum_{\by}\piref(\by|\bx) \log \pi_\theta(\by|\bx) \cdot F(\bx, \by)\right]
    \end{align*}

    Define a new distribution $$\tilde{\pi}(\by|\bx) = \frac{\piref(\by|\bx) \cdot F(\bx, \by)}{Z(\bx)}$$ Here $Z(\bx) = \sum_{\bz} \piref(\bz|\bx) \cdot F(\bx, \bz)$ is the normalization constant. It is easy to check that this a valid conditional distribution. This gives us:
    \begin{align*}
        \mathcal{L}_{\text{off-sup}}(\pi_\theta; \piref) = {} & - \mathbb{E}_{\bx \sim \mathcal{D}_\text{pref}}\left[ Z(\bx) \sum_{\by}\tilde{\pi}(\by|\bx) \log \pi_\theta(\by|\bx) \right] \\
        = {} & \mathbb{E}_{\bx \sim \mathcal{D}_\text{pref}}\left[ Z(\bx) \cdot \mathbb{E}_{\by \sim \tilde{\pi}(\by|\bx)}\left[\log \left(\frac{\tilde{\pi}(\by|\bx)}{\pi_\theta(\by|\bx)}\right) \right]\right] - \mathbb{E}_{\bx \sim \mathcal{D}_\text{pref}}\left[ Z(\bx) \cdot \mathbb{E}_{\by \sim \tilde{\pi}(\by|\bx)}\left[\log \tilde{\pi}(\by|\bx) \right]\right] \\
        = {} & \mathbb{E}_{\bx \sim \mathcal{D}_\text{pref}}\left[ Z(\bx) \cdot \mathbb{D}_{\text{KL}}(\tilde{\pi}(.|\bx) || \pi_\theta(.|\bx))\right] + \mathbb{E}_{\bx \sim \mathcal{D}_\text{pref}}\left[Z(\bx) \cdot H(\tilde{\pi}(.|\bx))\right]
    \end{align*}
    Hence offline supervised methods minimize the re-weighted forward KL-divergence.
\end{proof}

\subsection{Characterization of Gradients of Forward and Reverse KL} \label{appendix:forward_reverse_kl_gradient_characterization}

For simplicity, let $\mathcal{X}$ be our input space, and $\mathcal{Y} = \{1, \ldots, V\}$ be the output space consisting of $V > 1$ discrete tokens. Let $f: \mathcal{X} \rightarrow \mathbb{R}^V$ be our network that outputs $V$ real values logits for inputs $x \in \mathcal{X}$. With respect to the logits of $f$, we can define a probability distribution $p$ over discrete tokens $1, \ldots, V$ using the softmax function, namely:

\begin{align}
\label{eq:softmax_prob}
    p_i(x) = \frac{\exp(f_i(x))}{\sum_{k = 1}^V \exp(f_k(x))}
\end{align}

for any $x \in \mathcal{X}$.

Now assume that for some given input $x$, $q(x)$ is the true probability distribution over tokens that we want to learn. One way to do this would be to find $p$ that minimizes $\mathbb{D}_{\text{KL}}(q || p)$ via SGD:
\begin{equation*}
    p_0(x) \leftarrow p_{\text{ref}}(x)
\end{equation*}
\begin{equation*}
    p_{t + 1}(x) \leftarrow p_{t}(x) - \eta \nabla_{f(x)} \mathbb{D}_{\text{KL}}(q(x) || p(x))\big|_{p = p_{t}}
\end{equation*}

where $p_{\text{ref}}$ is reference distribution we start with, and  
$\eta$ is the learning rate. $\mathbb{D}_{\text{KL}}(q || p)$ is called the \textbf{forward-KL}, since $q$ is the true distribution of interest. In contrast, $\mathbb{D}_{\text{KL}}(p || q)$ is called the \textbf{reverse-KL}, and one can try to optimize $p$ by minimizing $\mathbb{D}_{\text{KL}}(p || q)$ in a similar fashion. 

We shall now prove~\cref{lemma:gradient_characterization_forward_vs_reverse_kl}. We break this lemma in two parts. First, let us investigate how the gradients $\nabla_{f(x)} \mathbb{D}_{\text{KL}}(q(x) || p(x))\big|_{p = p_{t}}$ and $\nabla_{f(x)} \mathbb{D}_{\text{KL}}(p(x) || q(x))\big|_{p = p_{t}}$ look like:

\begin{lemma}
The gradients of forward and reverse KL are given by: 
    \begin{align} \label{eq:derivative_forward_kl}
    \frac{\partial}{\partial f_j} \mathbb{D}_{\text{KL}}(q(x) || p(x)) = p_j(x) - q_j(x)
\end{align}
    \begin{align} \label{eq:derivative_reverse_kl}
    \frac{\partial}{\partial f_j} \mathbb{D}_{\text{KL}}(p(x) || q(x)) = p_j(x) \left[ \log \frac{p_j(x)}{q_j(x)} - \mathbb{D}_{\text{KL}}(p(x) || q(x))\right]
\end{align}

\end{lemma}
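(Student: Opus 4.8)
The plan is to differentiate both divergences directly through the softmax parameterization in~\cref{eq:softmax_prob}; the only ingredient beyond bookkeeping is the Jacobian of the softmax map. Fix an input $x$ and abbreviate $p_i := p_i(x)$, $q_i := q_i(x)$, $f_i := f_i(x)$. First I would record the elementary identity
\[
\frac{\partial p_i}{\partial f_j} = p_i\left(\delta_{ij} - p_j\right), \qquad\text{equivalently}\qquad \frac{\partial \log p_i}{\partial f_j} = \delta_{ij} - p_j,
\]
obtained by quotient-rule differentiation of $\exp(f_i)/\sum_k \exp(f_k)$. The two consequences of normalization I will use repeatedly are $\sum_i q_i(\delta_{ij}-p_j) = q_j - p_j$ and $\sum_i p_i(\delta_{ij}-p_j) = 0$, both immediate from $\sum_i q_i = \sum_i p_i = 1$.

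For the forward KL, write $\mathbb{D}_{\text{KL}}(q \| p) = \sum_i q_i \log q_i - \sum_i q_i \log p_i$. The first sum does not depend on $f$, so $\frac{\partial}{\partial f_j}\mathbb{D}_{\text{KL}}(q\|p) = -\sum_i q_i(\delta_{ij} - p_j) = p_j - q_j$, which is~\cref{eq:derivative_forward_kl}. For the reverse KL, write $\mathbb{D}_{\text{KL}}(p\|q) = \sum_i p_i \log p_i - \sum_i p_i \log q_i$ and apply the product rule in $f_j$ (noting that $\log q_i$ is a constant):
\[
\frac{\partial}{\partial f_j}\mathbb{D}_{\text{KL}}(p\|q) = \sum_i \frac{\partial p_i}{\partial f_j}\left(\log p_i - \log q_i\right) + \sum_i p_i\,\frac{\partial \log p_i}{\partial f_j}.
\]
The last sum equals $\sum_i p_i(\delta_{ij}-p_j) = 0$, and the first sum is $\sum_i p_i(\delta_{ij}-p_j)\log\frac{p_i}{q_i} = p_j\log\frac{p_j}{q_j} - p_j\sum_i p_i\log\frac{p_i}{q_i} = p_j\left[\log\frac{p_j}{q_j} - \mathbb{D}_{\text{KL}}(p\|q)\right]$, which is~\cref{eq:derivative_reverse_kl}.

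There is no real obstacle here: the computation is short, and the only place to slip is the reverse-KL product rule, where one must keep the ``self-entropy'' term $\sum_i p_i\,\partial_{f_j}\log p_i$ (it vanishes by normalization, but must be accounted for) and must not differentiate the constant $\log q_i$. With both gradients in hand, the update-rule form asserted in~\cref{lemma:gradient_characterization_forward_vs_reverse_kl} follows by substituting $f_{t+1}(\bx) = f_t(\bx) - \eta\,\nabla_{f(\bx)}\mathbb{D}_{\text{KL}}$ into~\cref{eq:softmax_prob}, taking $\log\frac{p_{t+1}(\bx)}{p_t(\bx)}$, and absorbing the $\bx$-dependent but coordinate-independent shift of the log-partition function into the constants $\mathbb{Z}, \mathbb{Z}'$.
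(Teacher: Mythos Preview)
Your proof is correct and reaches the same conclusions as the paper by the same underlying mechanism (direct differentiation through the softmax), but your organization is tighter. The paper expands $\mathbb{D}_{\text{KL}}(p\|q)$ explicitly in the logits as three terms---$\frac{\sum_i f_i e^{f_i}}{\sum_k e^{f_k}}$, $-\log\sum_k e^{f_k}$, and $-\frac{\sum_i \log q_i\, e^{f_i}}{\sum_k e^{f_k}}$---and differentiates each with the quotient rule before recombining. You instead isolate the softmax Jacobian $\partial p_i/\partial f_j = p_i(\delta_{ij}-p_j)$ once and push both computations through it via the chain rule, using the normalization identities $\sum_i q_i(\delta_{ij}-p_j)=q_j-p_j$ and $\sum_i p_i(\delta_{ij}-p_j)=0$ to collapse sums. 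The upshot is a two-line derivation for each gradient rather than a page of term-by-term bookkeeping; the paper's version is more explicit but yours is cleaner and less error-prone. Your closing remark about recovering \cref{lemma:gradient_characterization_forward_vs_reverse_kl} by substituting into \cref{eq:softmax_prob} and absorbing the log-partition shift into $\mathbb{Z},\mathbb{Z}'$ matches what the paper does immediately after this lemma.
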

\begin{proof}
    We start with the definition of KL-divergence:
    \begin{align*}
        \mathbb{D}_{\text{KL}}(q(x) || p(x)) = {} & \sum_{i}q_i(x)\log q_i(x) - \sum_i q_i(x) \log p_i(x) \\
        = {} & -H(q) - \sum_i q_i(x) \log\left(\frac{e^{f_i(x)}}{\sum_k e^{f_k(x)}}\right) \\
        = {} & -H(q) - \sum_i q_i(x)f_i(x) + \sum_i q_i(x) \log \left(\sum_k e^{f_k(x)}\right)
    \end{align*}
    Therefore, we have:
    \begin{align*}
        \frac{\partial}{\partial f_j} \mathbb{D}_{\text{KL}}(q(x) || p(x)) = {} & -q_j(x) + \sum_i q_i(x)\left( \frac{e^{f_j(x)}}{\sum_k e^{f_k(x)}}\right) \\
        = {} & -q_j(x) + \sum_i q_i(x) p_j(x) \\
        = {} & p_j(x) - q_j(x)
    \end{align*}
    This proves \cref{eq:derivative_forward_kl}.
    Similarly, we can write:
    \begin{align*}
        \mathbb{D}_{\text{KL}}(p(x) || q(x)) = {} & \sum_i p_i(x) \log p_i(x) - \sum_i p_i(x) \log q_i(x) \\
        = {} & \sum_i \left(\frac{e^{f_i (x)}}{\sum_k e^{f_k(x)}}\right)\left[f_i(x) - \log\left(\sum_k e^{f_k(x)}\right)\right] - \sum_i \log q_i(x) \left(\frac{e^{f_i (x)}}{\sum_k e^{f_k(x)}}\right) \\
        = {} & \frac{\sum_i f_i(x) e^{f_i(x)}}{\sum_k e^{f_k(x)}} - \left(\sum_i e^{f_i(x)} \right) \left(\frac{\log\left(\sum_k e^{f_k(x)}\right)}{\sum_k e^{f_k(x)}}\right) - \sum_i \log q_i(x) \left(\frac{e^{f_i (x)}}{\sum_k e^{f_k(x)}}\right) \\
        = {} & \frac{\sum_i f_i(x) e^{f_i(x)}}{\sum_k e^{f_k(x)}} - \log\left(\sum_k e^{f_k(x)}\right) -  \frac{\sum_i \log q_i(x)e^{f_i (x)}}{\sum_k e^{f_k(x)}}
    \end{align*}
    Now we calculate the partial derivative with respect to $f_j$:
    \begin{align*}
        \frac{\partial}{\partial f_j} \frac{\sum_i f_i(x) e^{f_i(x)}}{\sum_k e^{f_k(x)}} = {} & \frac{\left(\sum_k e^{f_k(x)}\right) \frac{\partial}{\partial f_j} \left(\sum_i f_i(x) e^{f_i(x)}\right) - \left(\sum_i f_i(x) e^{f_i(x)}\right) \left(\frac{\partial}{\partial f_j} \sum_k e^{f_k(x)}\right)}{\left(\sum_k e^{f_k(x)}\right)^2} \\
        = {} & \frac{\frac{\partial}{\partial f_j} \left(\sum_i f_i(x) e^{f_i(x)}\right)}{\sum_k e^{f_k(x)}} - \frac{e^{f_j(x)}\left(\sum_i f_i(x) e^{f_i(x)}\right)}{\left(\sum_k e^{f_k(x)}\right)^2} \\
        = {} & \frac{e^{f_j(x)} + f_j(x)e^{f_j(x)}}{\sum_k e^{f_k(x)}} - \frac{e^{f_j(x)}}{\sum_k e^{f_k(x)}} \left(\sum_i f_i(x)\left(\frac{e^{f_i(x)}}{\sum_k e^{f_k(x)}}\right)\right) \\
        = {} & p_j(x) + f_j(x)p_j(x) - p_j(x)\left(\sum_i f_i(x)p_i(x)\right)
    \end{align*}
    \begin{equation*}
     \frac{\partial}{\partial f_j} \log\left(\sum_k e^{f_k(x)}\right) = \frac{e^{f_j(x)}}{\sum_k e^{f_k(x)}} = p_j(x)
    \end{equation*}

    And for the third term,
    \begin{align*}
        \frac{\partial}{\partial f_j} \frac{\sum_i \log q_i(x)e^{f_i (x)}}{\sum_k e^{f_k(x)}} = {} &  \frac{\left(\sum_k e^{f_k(x)}\right) \frac{\partial}{\partial f_j} \left(\sum_i \log q_i(x)e^{f_i (x)}\right) - \left(\sum_i \log q_i(x)e^{f_i (x)}\right) \frac{\partial}{\partial f_j} \left(\sum_k e^{f_k(x)}\right)}{\left(\sum_k e^{f_k(x)}\right)^2} \\
        = {} & \frac{e^{f_j(x)}\log q_j(x)}{\sum_k e^{f_k(x)}} - \left(\frac{e^{f_j(x)}}{\sum_k e^{f_k(x)}}\right) \left(\sum_i \log q_i(x) \frac{e^{f_i(x)}}{\sum_k e^{f_k(x)}}\right) \\
        = {} & p_j(x) \log q_j(x) - p_j(x) \sum_i p_i(x) \log q_i(x)
    \end{align*}

Putting it all together, we obtain:
\begin{align*}
    \nabla_{f_j} \mathbb{D}_{\text{KL}}(p(x) || q(x)) = {} & p_j(x) \cdot \left(f_j(x) - \log q_j(x) \right) - p_j(x) \cdot \left( \sum_{i} p_i(x) \cdot \left(f_i(x) - \log q_i(x) \right) \right) \\
    = {} & p_j(x) \cdot \log \frac{p_j(x)}{q_j(x)} - p_j(x) \sum_{i} p_i(x) \cdot \log \frac{p_i(x)}{q_i(x)} \\
    = {} & p_j(x) \left[ \log \frac{p_j(x)}{q_j(x)} - \mathbb{D}_{\text{KL}}(p(x) || q(x))\right]
\end{align*}
completing our proof.
\end{proof}

\textbf{Proof for Lemma~\ref{lemma:gradient_characterization_forward_vs_reverse_kl}}. Now, if the logits $f_t$ are being updated with gradient descent on loss $\mathcal{L}$, the distribution at the next step $p^{t+1}$ is given by:
\begin{align*}
    p^{t+1}_j(x) = {} & \exp (f^{t+1}_j(x)) / \sum_{i} \exp (f^{t+1}_i(x)) \\ 
    = {} & \frac{\exp (f^t_j(x)) - \eta \nabla_{f^t_j} \mathcal{L})}{\sum_i \exp (f^t_i(x))} \cdot \frac{\sum_i \exp (f^t_i(x))}{ \sum_{i} \exp (f^t_i(x) - \eta \nabla_{f^t_i} \mathcal{L})}\\
    = {} & p^t_j(x) \cdot \frac{\exp \left( -\eta \nabla_{f^t_j} \mathcal{L} \right)}{\sum_i p^t_i(x) \exp \left( -\eta \nabla_{f^t_i} \mathcal{L} \right) }
\end{align*}

Let's consider what the characterization of $p^{t+1}$ for the forward kl:
\begin{align*}
    p^{t+1}_j(x) &= p^t_j(x) \cdot \frac{\exp \left( -\eta \left(p_j^t(x) - q_j(x)\right) \right)}{\sum_i p^t_i(x) \exp \left( -\eta \left(p_i^t(x) - q_i(x)\right) \right) }
\end{align*}

Noticing that the denominator is just a normalization constant, we can write this as:
    \begin{align} \label{eq:update_ratio_forward_kl}
     \frac{p^{t+1}_j(x)}{p^t_j(x)} \propto  \exp \left( -\eta \left(p_j^t(x) - q_j(x)\right) \right)
\end{align}

Similarly the characterization of $p^{t+1}$ for the reverse KL looks like:
    \begin{align} \label{eq:update_ratio_reverse_kl}
    \frac{p^{t+1}_j(x)}{p^t_j(x)} &\propto  \exp \left( -\eta \left(p^t_j(x)\left[\log \frac{p_j^t(x)}{q^t(x)} - \mathbb{D}_{\text{KL}}(p^t(x) || q(x))\right]\right) \right)
    \end{align}

This completes the proof of~\cref{lemma:gradient_characterization_forward_vs_reverse_kl}.

\subsection{Quantifying the Differences Between Forward and Reverse KL} \label{appendix:quantifying_dufference}

In this section we will prove~\cref{thm:commital}: specifically, we will study certain special cases to explain the differences between approaches that optimize the forward and reverse KL divergences. We drop the subscript $t$ from all terms to prevent notational clutter.

\begin{proof}
    We prove these statements case by case. First we prove the result for Case 1. In this scenario, we have the following:
    \begin{align*}
        \Delta^f(\bx_1, \bx_2) &= \eta \left(q(\bx_1) - q(\bx_2) \right) \\
        \Delta^r(\bx_1, \bx_2) &= \eta p(\bx_1) \left[ \log q(\bx_1) - \log q(\bx_2) \right].
    \end{align*}
    The gap between $\Delta^f$ and $\Delta^r$ is now given by:
    \begin{align*}
        \Delta^r(\bx_1, \bx_2) - \Delta^f(\bx_1, \bx_2) &= \eta \left[ \log q(\bx_1) - \log q(\bx_2) - \frac{q(\bx_1) - q(\bx_2)}{p(\bx_1)} \right].
    \end{align*}
    Now, we note by mean-value theorem, that there exists a $c_0 \in [q(\bx_2), q(\bx_1)]$ such that, 
    \begin{align*}
        \log q(\bx_1) - \log q(\bx_2) = \frac{d \log p}{dp} \Big\vert_{p = c_0} \cdot \left( q(\bx_1) - q(\bx_2) \right).
    \end{align*}
    Since $d \log p / d p = 1/ p > 1$ for $c_0 \in (0, 1)$, we have that:
    \begin{align*}
        \Delta^r(\bx_1, \bx_2) - \Delta^f(\bx_1, \bx_2) = \eta \cdot \left( q(\bx_1) - q(\bx_2) \right) \cdot \left[ \frac{1}{c_0} - \frac{1}{p(\bx_1)} \right].
    \end{align*}
    This quantity is positive when $p(\bx_1) > c_0 = \delta_1$. This shows the result for Case 1.

    \textbf{Next we prove Case 2.} In this setting we are given $q(\bx_1) = q(\bx_2) \geq p(\bx_1) \geq p(\bx_2) + \beta$. In this case, the expressions for $\Delta^f$ and $\Delta^r$ are given by:
    \begin{align*}
        \Delta^f(\bx_1, \bx_2) = -\eta \left( p(\bx_1) - p(\bx_2) \right) \leq - \eta \beta.
    \end{align*}
    On the other hand, the expression for $\Delta^r(\bx_1, \bx_2)$ is given by:
    \begin{align}
 \Delta^r(\bx_1, \bx_2) &= \eta \underbrace{\left[ p(\bx_1) - p(\bx_2) \right] \log q(\bx_1)}_{(a)} - \eta \underbrace{\left[ p(\bx_1) \log p(\bx_1) - p(\bx_2) \log p(\bx_2) \right]}_{(b)} \nonumber \\
 &~~~~~~~~~~~~~~~~~~~~~~~~~~~~+\eta \mathrm{D}_{\text{KL}}(p, q) \underbrace{\left(p(\bx_1) - p(\bx_2) \right)}_{\geq 0}. \label{eq:tmp2}
    \end{align}
    Now we analyze each sub-term independently. First, we note the following expression for term (b):
    \begin{align*}
        (b) \coloneqq &~ p(\bx_1) \log p(\bx_1) - p(\bx_2) \log p(\bx_2) \\
        = &~ p(\bx_1) \log p(\bx_1) - p(\bx_2) \log p(\bx_1) + p(\bx_2) \log p(\bx_1) - p(\bx_2) \log p(\bx_2) \\
        = &~ \left( p(\bx_1) - p(\bx_2) \right) \cdot \log p(\bx_1) + p(\bx_2) \cdot \left( \log p(\bx_1) - \log p(\bx_2) \right).
    \end{align*}
    Combining $(a)$ and $(b)$, we get:
    \begin{align}
        (a) + (b) &= \eta \left[ p(\bx_1) - p(\bx_2) \right] \cdot \left[ \log q(\bx_1) - \log p(\bx_1) \right] - \eta p(\bx_2) \left[ \log p(\bx_1) - \log p(\bx_2) \right)] \nonumber \\
        &= \eta \left( \left[p(\bx_1) - p(\bx_2) \right] \cdot \left[ \log q(\bx_1) - \log p(\bx_1) - p(\bx_2) \cdot \frac{1}{c'} \right] \right), \label{eq:tmp1}
    \end{align}
    where $c'$ is obtained by applying the mean value theorem on the difference $\log p(\bx_1) - \log p(\bx_2)$. Now, since $q(\bx_1) \geq \mathbf{c}_0 \cdot p(\bx_1)$, $\log q(\bx_1) - \log p(\bx_1) \geq \log \mathbf{c}_0$. Hence, if $p(\bx_2)$ is upper bounded (i.e., when $\beta$ is large enough), then this difference $(a) + (b)$ in Equation~\ref{eq:tmp1} is positive. Combining with Equation~\ref{eq:tmp2}, we note that: $\Delta^r(\bx_1, \bx_2) > 0$, although $\Delta^f(\bx_1, \bx_2) < 0$. This concludes the proof. 

    \textbf{Next, we prove Case 3.} Similar to the previous case, here $\Delta^f(\bx_1, \bx_2) = - \eta (p(\bx_1) - p(\bx_2)) \leq -\eta \beta < 0$. In this case, expanding upon the expression of $\Delta^r(\bx_1, \bx_2)$ similarly as Case 2, in order to show the desired inequality $\Delta^r(\bx_1, \bx_2) < \Delta^f(\bx_1, \bx_2)$, we need to prove that:
    \begin{align*}
        \left( p(\bx_1) - p(\bx_2) \right) \cdot \log q(\bx_1) ~&\leq p(\bx_1) \log p(\bx_1) - p(\bx_2) \log p(\bx_2) + \alpha_0,
    \end{align*}
    where $\alpha_0$ subsumes the terms $- \beta$ and $\mathrm{D}_{\text{KL}}(p, q) \cdot (p(\bx_1) - p(\bx_2))$. By applying mean value theorem, on the RHS of this equation, we note that:
    \begin{align*}
        p(\bx_1) \log p(\bx_1) - p(\bx_2) \log p(\bx_2) = \left(1 + \log c'' \right) \cdot \left(p(\bx_1) - p(\bx_2)\right), ~~~ c'' \in [p(\bx_2), p(\bx_1)].
    \end{align*}
    Then, to attain the desired inequality, we need:
    \begin{align*}
        \left[ p(\bx_1) - p(\bx_2) \right] \cdot \left[ \log q(\bx_1) - 1 - \log c'' \right] \leq \alpha_0.
    \end{align*}
    Note that since $c'' \geq p(\bx_2)$, as long as there exists a sufficiently small constant $\mathbf{c}_1 < 1$, such that:
    \begin{align*}
        q(\bx_1) &\leq \mathbf{c}_1 \cdot p(\bx_2) \leq c_1 \cdot c''\\
        &\implies \log q(\bx_1) \leq \log \mathbf{c}_1 + \log c'',
    \end{align*}
    the LHS of this equation will be smaller than the RHS $\alpha_0$. This proves the result for this case.
\end{proof}

\vspace{-0.2cm}

\section{Additional Algorithmic Details}

\subsection{Score/Reward Standardization} \label{subsection:reward_normalization}
Online methods such as PPO or RWR that uses a learned reward model can suffer from gradient variance issues due to the differences in the reward score. In particular, adding or subtracting a baseline $b$ from the reward $r_\phi(\bx, \by)$ does not change the relative order of preferred or dispreferred responses; however, it can change the variance of the gradients, leading to instability of the optimization routine. To mitigate this, prior work~\citep{ziegler2020finetuning} often normalizes the reward to have zero mean and unit variance. This can be done during the training process by computing the mean and variance of the reward from an online batch. Formally, let $\{\bx^{(i)}, \by^{(i)}\}_{i = 1}^\mathcal{B}$ be a batch of data with batch size $\mathcal{B}$ sampled from policy $\pi_{\theta}$: one calculates the standardized reward $\bar{r}_\phi(\bx^{(i)}, \by^{(i)})$ as:
\begin{equation}
    \bar{r}_\phi(\bx^{(i)}, \by^{(i)}) = \frac{r_\phi(\bx^{(i)}, \by^{(i)}) - \hat{\mu}}{\hat{\sigma}}
\end{equation}
where $\hat{\mu} = \frac{1}{\mathcal{B}}\sum_{i = 1}^\mathcal{B} r_\phi(\bx^{(i)}, \by^{(i)})$, $\hat{\sigma} = \sqrt{\frac{1}{\mathcal{B} - 1} \sum_{i = 1}^\mathcal{B} (r_\phi(\bx^{(i)}, \by^{(i)})^2 - \hat{\mu})^2}$.

\subsection{IPO}\label{subsection:ipo}

IPO~\citep{2023arXiv231012036G} is a contrastive algorithm similar to DPO. The key difference between them is their loss function: DPO optimizes the negative log-sigmoid loss whereas IPO optimizes an MSE-type objective. Formally, the IPO objective is:
\begin{equation} \label{eq:ipo_objective}
    \mathcal{L}_{\text{IPO}}(\pi_{\theta};\piref) = \mathbb{E}_{(\bx, \by_w, \by_l) \sim \mathcal{D}_{\text{pref}}} \left(\log\left(\frac{\pi_{\theta}(\y_w|x)\piref(\y_l|x)}{\piref(\y_w|x)\pi_{\theta}(\y_l|x)}\right) - \frac{\tau^{-1}}{2}\right)^2
\end{equation}

where $\tau$ is a hyperparameter controlling how much the learned policy $\pi_\theta$ deviates from the reference policy $\piref$.

\section{Method Hyperparameters} \label{subsection:hyperparam}
We did an extensive sweep over hyperparameters for individual offline and online algorithms for the language model experiments. We built our algorithm implementations off of the Huggingface TRL implementation~\citep{vonwerra2022trl}.

\subsection{Standardized Parameters (Consistent for all Methods)}
\begin{table}[H]
\centering
\caption{Algorithm Agnostic Hyperparamters}
\label{tab:algorithm_agnostic_hyperparams}
\catcode`,=\active
\def,{\char`,\allowbreak}
\begin{tabular}{p{3cm} p{2.5cm} p{8cm}}
\toprule
\textbf{Hyperparameters} & \textbf{Values} & \textbf{Description} \\
\midrule
$B$ & 64 & Batch Size \\
$B_{mini}$ & 8 & Mini-Batch Size \\
$G$ & 8 & Gradient Accumulation Steps \\
$\hat{\pi}_{\theta}$ & Pythia1.4B, Mistral-7b & Policy Architecture \\
$\hat{R}_{\theta}$ & Pythia410M, Mistral-7B & Reward Model Architecture \\
optimizer & Adam & Gradient Optimizer \\
\bottomrule
\end{tabular}
\end{table}

\begin{table}[H]
\centering
\caption{Sampling Hyperparamters}
\label{tab:generation_kwargs}
\catcode`,=\active
\def,{\char`,\allowbreak}
\begin{tabular}{p{3cm} p{2.5cm} p{8cm}}
\toprule
\textbf{Hyperparameters} & \textbf{Values} & \textbf{Description} \\
\midrule
top\_k           & 0.0   & Disables top-k sampling \\
top\_p           & 1.0   & Disables nucleus sampling \\
do\_sample       & True  & Enables sampling \\
max\_new\_tokens & 256   & Maximum number of new tokens to generate \\
temperature      & 1.0   & Sets sampling temperature (1.0 for default) \\
use\_cache       & True  & Uses past key/values attentions if supported by the model \\
\bottomrule
\end{tabular}
\end{table}

\subsection{DPO \citep{rafailov2023direct}}
\begin{table}[H]
\centering
\caption{DPO Hyperparameters}
\label{table:hyper_dpo} 
\catcode`,=\active
\def,{\char`,\allowbreak}
\renewcommand\arraystretch{1.2}
\begin{tabular}{p{3cm} p{3.5cm} p{7cm}}
  \toprule
    \textbf{Hyperparameters} & \textbf{Values} & \textbf{Description} \\   
  \midrule
    lr & 1e-7, 5e-7, 1e-6, 5e-6, 1e-5 & learning rate \\
    $\beta$ & $0.01$, $0.05$, $0.1$, $0.5$ & KL weight \\
  \bottomrule
\end{tabular} 
\end{table}

\subsection{Pref-FT \citep{dubois2024alpacafarm}}
\begin{table}[H]
\centering
\caption{Pref-FT/Binary FeedMe Hyperparameters}
\label{table:hyper_feedme} 
\catcode`,=\active
\def,{\char`,\allowbreak}
\renewcommand\arraystretch{1.2}
\begin{tabular}{p{3cm} p{3.5cm} p{7cm}}
  \toprule
    \textbf{Hyperparameters} & \textbf{Values} & \textbf{Description} \\  
  \midrule
    $\eta$ & 1e-7, 5e-7, 1e-6, 5e-6 & learning rate \\
  \bottomrule
\end{tabular} 
\end{table}

\subsection{PPO \citep{2017arXiv170706347S}}

\begin{table}[H]
\centering
\caption{PPO Hyperparameters}
\label{table:hyper_ppo} 
\catcode`,=\active
\def,{\char`,\allowbreak}
\renewcommand\arraystretch{1.2}
\begin{tabular}{p{3cm} p{3.5cm} p{7cm}}
  \toprule
    \textbf{Hyperparameters} & \textbf{Values} & \textbf{Description} \\    
  \midrule
    $\eta$ & 1e-7, 5e-7, 1e-6, 5e-6, 1e-5 & Learning rate. \\
    vf\_coef & 0.1 & Coefficient for the value function loss. \\
    adap\_kl\_ctrl & True & Enables adaptive KL penalty control. \\
    init\_kl\_coef & 0.2 & Initial coefficient for KL penalty. \\
    target\_kl & 0.1 & Target KL divergence for policy updates. \\
    $N$ & 1 & actions per prompt \\
  \bottomrule
\end{tabular} 
\end{table}

\subsection{RWR}

\begin{table}[H]
\centering
\caption{RWR Hyperparameters}
\label{table:hyper_rwr} 
\catcode`,=\active
\def,{\char`,\allowbreak}
\renewcommand\arraystretch{1.2}
\begin{tabular}{p{3cm} p{3.5cm} p{7cm}}
  \toprule
    \textbf{Hyperparameters} & \textbf{Values} & \textbf{Description} \\
  \midrule
    $\eta$ & 1e-7, 5e-7, 1e-6, 5e-6, 1e-5 & learning rate \\
    $\beta$ & $0.1$, $1$, $10$, $20$ & temperature \\
    $N$ & 1 & actions per prompt \\
  \bottomrule
\end{tabular} 
\end{table}

\subsection{Iterated Best-of-N \citep{superhf}}

\begin{table}[H]
\centering
\caption{Iterated BofN Hyperparameters}
\label{table:hyper_iterated_bofn} 
\catcode`,=\active
\def,{\char`,\allowbreak}
\renewcommand\arraystretch{1.2}
\begin{tabular}{p{3cm} p{3.5cm} p{7cm}}
  \toprule
    \textbf{Hyperparameters} & \textbf{Values} & \textbf{Description} \\ 
  \midrule
    $\eta$ & 1e-7, 5e-7, 1e-6, 5e-6, 1e-5 & learning rate \\
    $N$ & $4$, $10$ & actions per prompt \\
  \bottomrule
\end{tabular} 
\end{table}

\section{Code For Running Experiments} \label{section:reproducibility}

We have made the code for this project public in this \href{https://github.com/Asap7772/understanding-rlhf}{repository}. The additional datasets used in our experiments are listed below:

\begin{itemize}
    \item \href{https://huggingface.co/datasets/Asap7772/relabeled_alpacafarm_pythiasft_20K_preference_data_minlength}{Min Length}
    \item \href{https://huggingface.co/datasets/Asap7772/relabeled_alpacafarm_pythiasft_20K_preference_data_modelength}{Mode Length} 
    \item \href{https://huggingface.co/datasets/Asap7772/alpaca_skewexp_minlength_merged}{Skew Length}
    \item \href{https://huggingface.co/datasets/Asap7772/relabeled_alpacafarm_pythiasft_20K_preference_data}{Relabelled AlpacaFarm}
\end{itemize}

We gratefully acknowledge the following codebases: \href{https://github.com/huggingface/trl}{TRL}~\citep{vonwerra2022trl}, \href{https://github.com/ContextualAI/HALOs}{HALOs}~\citep{ethayarajh2023halos}, 
\href{https://github.com/karpathy/minGPT}{minGPT}~\citep{min_gpt},
\href{https://github.com/facebookresearch/drqv2}{DrQ-v2}~\citep{yarats2021drqv2,yarats2021image} and \href{https://github.com/tajwarfahim/proactive_interventions}{PAINT}~\cite{xie2022paint}.

\section{More on Didactic Bandit Problems} \label{section:banditSetupAppendix}

\subsection{Problem Setup}

Here we present details of our didactic bandit problem. The reference policy shown in \cref{fig:bandit_problem_setup} is obtained by collecting 10000 samples from a Cauchy distribution with location $x_0 = -0.7$, scale $\gamma = 0.4$. Next, we clip this samples between the interval $(-1, 1)$, and divide the interval into 100 equally spaced bins. Starting from $-1$, we label these bins $0, \ldots, 99$ sequentially, and calculate the frequency of samples that fell into each bin. Finally, we define,
\begin{equation*}
    \piref(a_i) = \frac{\text{Freq}(bin_i)}{10000}
\end{equation*}

The reward functions $\mathbf{R}_1$ and $\mathbf{R}_2$ are defined as:
\begin{equation*}
    \mathbf{R}_1(a) = \exp\left(-\left(\frac{a - 70}{10}\right)^2\right)
\end{equation*}
and
\begin{equation*}
    \mathbf{R}_2(a) = \exp\left(-\left(\frac{a - 20}{10}\right)^2\right)
\end{equation*}

\subsection{Algorithmic Details} \label{section:banditAlgorithmsAppendix}

In the bandit setting, we consider five algorithms: (1) Best-of-N, (2) IPO, (3) REINFORCE, (4) PPO and (5) RWR.

\subsubsection{Best-of-N}
Best-of-N is similar to SuperHF~\citep{superhf}/ReST~\citep{gulcehre2023reinforced} and in some way their simplification for the bandit setting. Best-of-N collects $N$ actions/responses for a prompt/state $\bx$, namely $\by_1, \by_2, \ldots, \by_N$. Next, we collect the rewards $\{\mathbf{R}(\bx, \by_i)\}_{i = 1}^N$, and based on these rewards, choose the best action $\by_{\text{best}} = \argmax_{\by_i}  \mathbf{R}(\bx, \by_i)$. Finally, the loss function is the negative log-likelihood of this best action.

\begin{equation*}
    \mathcal{L}_{\text{bofn}}(\pi_{\theta}; \bx, \by_1, \ldots, \by_N) = -\log \pi_{\theta}(\by_{\text{best}} | \bx)
\end{equation*}

In both the online and offline setting, we have a fixed set of prompts $\mathcal{D}_{\text{prompts}}$, and we also always start with $\pi_{\theta}$ initialized to $\piref$. Formally, given a policy $\pi$, we can form a training set as:
\begin{equation*}
    \mathcal{D}_{\text{train}}(\mathcal{D}_{\text{prompts}}, \pi) = \{(\bx, \by): \bx \in \mathcal{D}_{\text{prompts}}, \by  = \argmax_{\by_i}  \mathbf{R}(\bx, \by_i) \text{ where } \by_1, \by_2, \ldots, \by_N \sim \piref(.|\bx)\}
\end{equation*}
In the offline setting, we collect a fixed training dataset where actions are sampled from $\piref$, namely $\mathcal{D}_{\text{train}}(\mathcal{D}_{\text{prompts}}, \piref)$. In the online setting, we collect a new training dataset by sampling actions from the current policy $\pi_{\theta}$, namely $\mathcal{D}_{\text{train}}(\mathcal{D}_{\text{prompts}}, \pi_\theta)$, after every $T$ gradient steps, and discard the previous dataset.

To show the efficacy of negative gradient, we can also directly add a term to this loss function minimizing log probability on dispreferred actions. Explicitly, we consider the following loss function:

\begin{align*}
    \mathcal{L}_{\text{bofn + neg-grad}}(\pi_{\theta}; \bx, \by_1, \ldots, \by_N) = -\log \pi_{\theta}(\by_{\text{best}} | \bx) + \beta \sum_{\by_j \neq \by_{\text{best}}} \log \pi_{\theta}(\by_j | \bx)
\end{align*}

where $\beta$ is a hyperparameter that we usually set to $1.0$. We note that in practice this loss can quickly become unstable and proceed to $-\infty$, in practice we only minimize the probability of dispreferred actions if it is above a certain threshold.

\subsubsection{IPO}

In contrast, IPO uses the loss function defined in \cref{eq:ipo_objective}. While regular IPO is an offline algorithm that uses a fixed preference dataset $\mathcal{D}_{\text{pref}}$, since we have access to the true reward function in the bandit setup, we create an online version of this algorithm as well. Here we also have a fixed set of prompts $\mathcal{D}_{\text{prompts}}$, and given a policy $\pi$, we can generate a preference dataset as follows: for each prompt $\bx \in \mathcal{D}_{\text{prompts}}$, we can generate completions $\by_1, \by_2, \ldots, \by_N \sim \pi(.|\bx)$. For any $i \neq j$, without loss of generality, assume $\mathbf{R}(\bx, \by_i) > \mathbf{R}(\bx, \by_j)$. Then $\by_i$ and $\by_j$ are the preferred and dispreferred completions respectively, and we can form a preference dataset with all such $(\bx, \by_w, \by_l)$ tuples.

In the offline setting, the preference dataset is collected by generating samples from the reference policy $\piref$, and kept fixed during training. In the online setting, we generate the preference dataset from the current policy $\pi_\theta$, after every $T$ gradient steps, and discard the previous dataset.

\subsubsection{REINFORCE}

For REINFORCE, we sample $\by \sim \pi_{\theta}(.|\bx)$, calculate the normalized reward $\overline{\mathbf{R}(\bx, \by)}$, and use the following loss:
\begin{align*}
    \mathcal{L}_{\text{REINFORCE}}(\pi_{\theta}; \mathcal{D}_\text{prompts}) = -\mathbb{E}_{\bx \in \mathcal{D}_\text{prompts}} [\mathbb{E}_{\by \sim \pi_{\theta}} [\log \pi_{\theta}(\by|\bx) \overline{\mathbf{R}(\bx, \by)}]]
\end{align*}

\subsubsection{PPO}

For PPO, let $\pi_{\text{gen}}$ be the policy used to generate the responses, and define $r(\bx, \by) = \frac{\pi_{\theta}(\by|\bx)}{\pi_{\text{gen}}(\by|\bx)}$. Then we use the following loss function:
\begin{align*}
    \mathcal{L}_{\text{PPO}}(\pi_{\theta}; \mathcal{D}_\text{prompts}) = -\mathbb{E}_{\bx \in \mathcal{D}_\text{prompts}} \left[\mathbb{E}_{\by \sim \pi_{\theta}} \left[\max\left( r(\bx, \by)\overline{\mathbf{R}(\bx, \by)}, \text{Clip}\left(r(\bx, \by), 1 - \epsilon, 1 + \epsilon\right) \overline{\mathbf{R}(\bx, \by)}\right)\right]\right]
\end{align*}

where $\epsilon > 0$ is a hyperparameter that controls how much we clip off-policy updates.

\subsubsection{RWR}

For RWR, we use the following loss function:
\begin{align*}
    \mathcal{L}_{\text{REINFORCE}}(\pi_{\theta}; \mathcal{D}_\text{prompts}) = -\mathbb{E}_{\bx \in \mathcal{D}_\text{prompts}} \left[\mathbb{E}_{\by \sim \pi_{\theta}} \left[\log \pi_{\theta}(\by|\bx) \exp\left(\frac{\overline{\mathbf{R}(\bx, \by)}}{\beta}\right)\right]\right]
\end{align*}

where $\beta$ is a hyperparameter, usually $\beta = 0.1$ in our experiments unless otherwise noted.

\subsection{Experiment Details}

For all experiments, we use $N = 10$. For negative gradient experiments, we are in the fully offline setting and vary the size of the prompt dataset $\mathcal{D}_{\text{prompts}}$, with $T = 100$ number of gradient steps performed. For on policy sampling experiments, we hold $|\mathcal{D}_{\text{prompts}}| = 10$ randomly sampled prompts from tokens $\{0, \ldots, 99\}$, and vary $T$. We also a new training dataset from the current policy after each $T$ gradient steps, and perform this data collection step $100$ times for all experiments. We set $\tau = 0.05$ for IPO, and search for the optimal learning rate from 0.3, 0.1, 0.03, 0.01, 0.003, 0.001, 0.0003, 0.0001, 0.00003, and 0.00001 for each experiment and use an Adam~\citep{kingma2017adam} optimizer for all experiments. We run each experiment for 5 seeds, and the shaded region in the plots refers to the standard error of the mean obtained from these runs. Finally, to initialize $\pi_\theta$ to $\piref$, we minimize the KL divergence between $\pi_\theta$ and $\piref$ with an Adam optimizer with a learning rate of $0.01$.

For all experiments, we use a small GPT~\citep{radford2018improving,brown2020language}-like transformer architecture (named `GPT-Nano') with 0.9M parameters. We took the implementation from this public repository: \href{https://github.com/karpathy/minGPT}{minGPT}~\citep{min_gpt}.

\section{Additional Experiments on Synthetic LLM Setup}

\subsection{Performance of Various Algorithms on the Mode Length Setting}

\cref{fig:synthetic_llm_mode_length} shows the performance of various algorithms in the mode length setup. We see that all algorithms perform similarly here.

\begin{figure}[h!]
\vspace{-0.2cm}
    \centering
    \includegraphics[width=0.5\columnwidth]{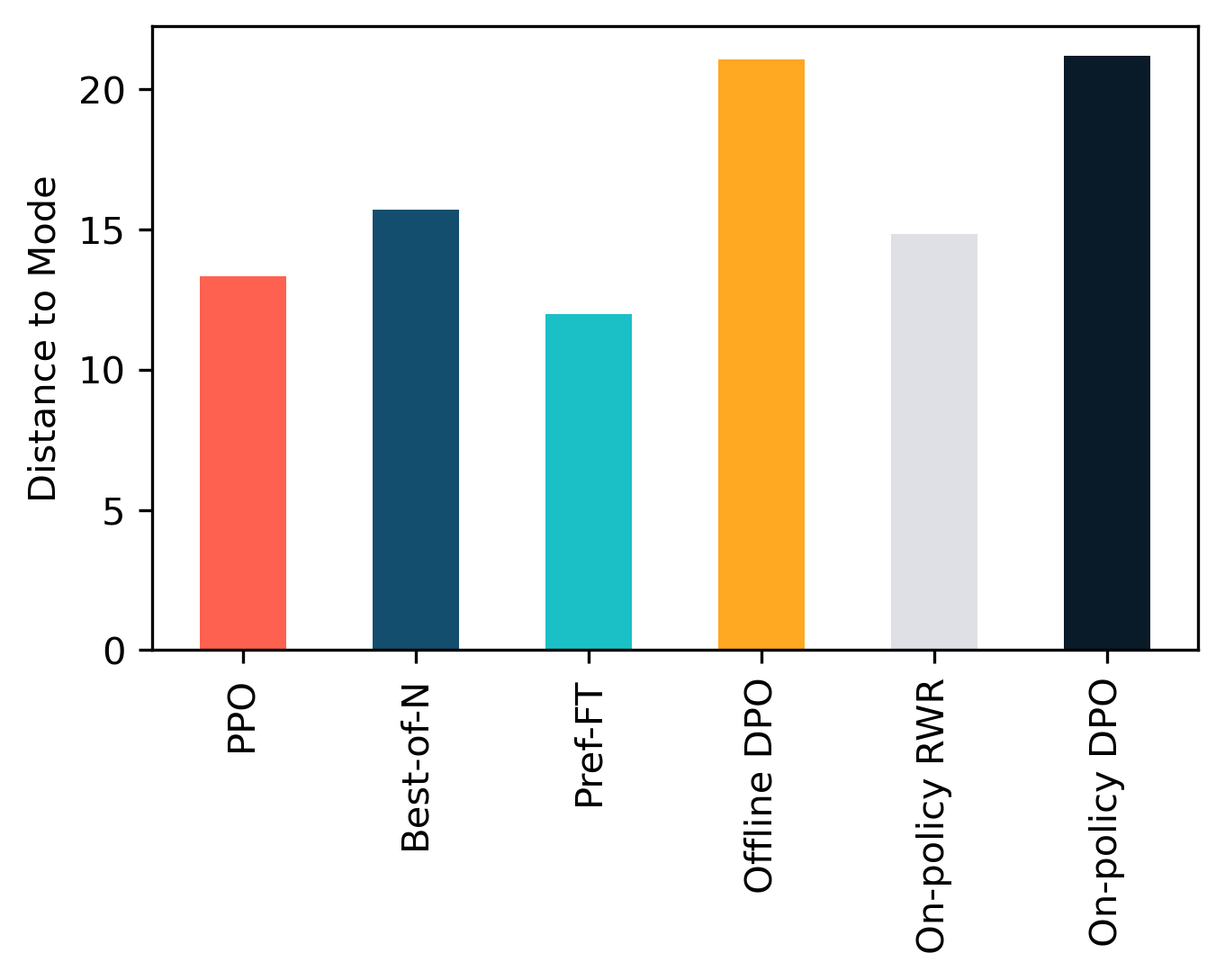}
    \vspace{-0.5cm}
    \caption{\label{fig:synthetic_llm_mode_length}\footnotesize{\textbf{Performance of various algorithms on mode length setup.} Distance to mode of the completion lengths from $\pi_\text{ref}$, 203, for different algorithms. All algorithms perform similarly, and varying degrees of on-policyness does not generally degrade performance.}}
    \vspace{-0.2cm}
\end{figure}

\subsection{Effect of On-policy Samples vs Samples from an Older Policy in Synthetic Length Settings}

\cref{fig:rwr_min_length_batch_size_on_policy,fig:rwr_skew_length_batch_size_on_policy} shows the effect of using on-policy samples vs samples from an older policy for RWR in the synthetic length experiments.

\begin{figure}[h!]
\vspace{-0.2cm}
    \centering
    \includegraphics[width=\columnwidth]{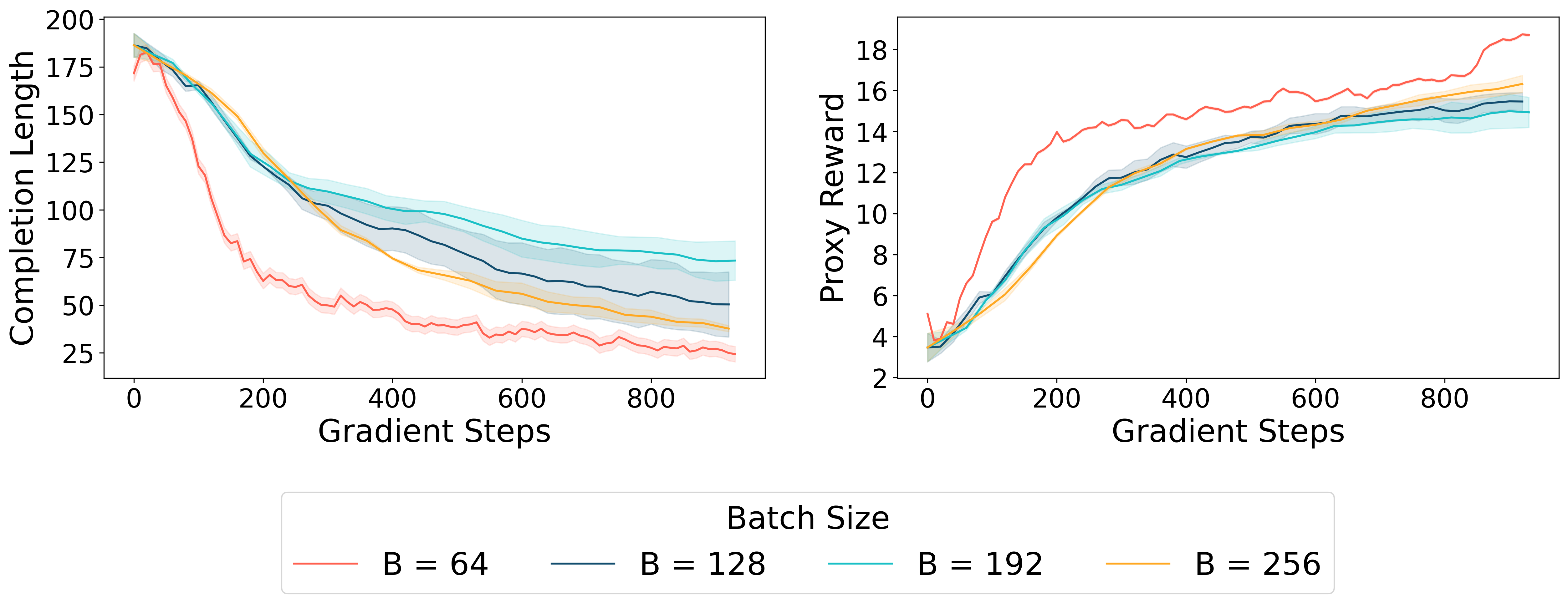}
    \vspace{-0.5cm}
    \caption{\label{fig:rwr_min_length_batch_size_on_policy}\footnotesize{\textbf{On-policy sampling on Min Length (RWR).} Effect of using on-policy samples vs samples from an older policy for RWR and the min length setup. In all experiments, the mini-batch size to calculate the gradient is fixed at 64, and we sample batch size $B$ completions from the current policy, divide it into mini-batches, and take one pass over the entire set of completions before collecting more samples. Increasing $B$ thus makes the algorithm make updates on samples from an older policy. \textbf{Left}: average completion length (lower the better), and \textbf{Right}: proxy reward vs gradient steps. Being more on-policy results in better performance.}}
    \vspace{-0.2cm}
\end{figure}

\begin{figure}[h!]
\vspace{-0.2cm}
    \centering
    \includegraphics[width=\columnwidth]{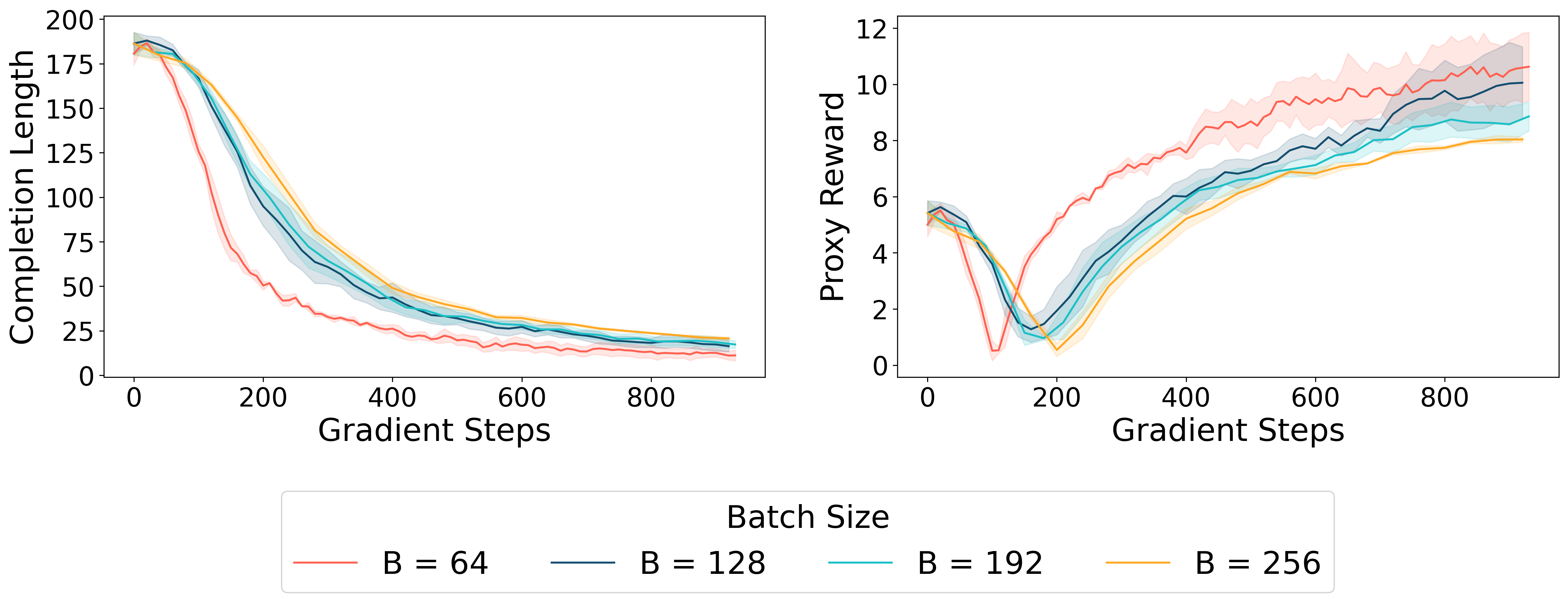}
    \vspace{-0.5cm}
    \caption{\label{fig:rwr_skew_length_batch_size_on_policy}\footnotesize{\textbf{On-policy sampling on Skew Length (RWR).} Effect of using on-policy samples vs samples from an older policy for RWR and the skew length setup. \textbf{Left}: average completion length (lower the better), and \textbf{Right}: proxy reward vs gradient steps. Being more on-policy results in better performance.}}
    \vspace{-0.2cm}
\end{figure}

\subsection{Sample Reuse in Synthetic LLM Settings}

\cref{fig:llm_length_t_ablation_appendix} shows the effect of sample reuse in the \textbf{Skew Length} setting: similar to \textbf{Min Length} (~\cref{fig:llm_length_t_ablation}), some sample reuse can improve sample efficiency. but excessive sample reuse can also hurt performance. Also, we see PPO with importance clipping is much better at sample reuse than Best-of-N.

\begin{figure}[h!]
\vspace{-0.2cm}
    \centering
    \includegraphics[width=0.99\columnwidth]{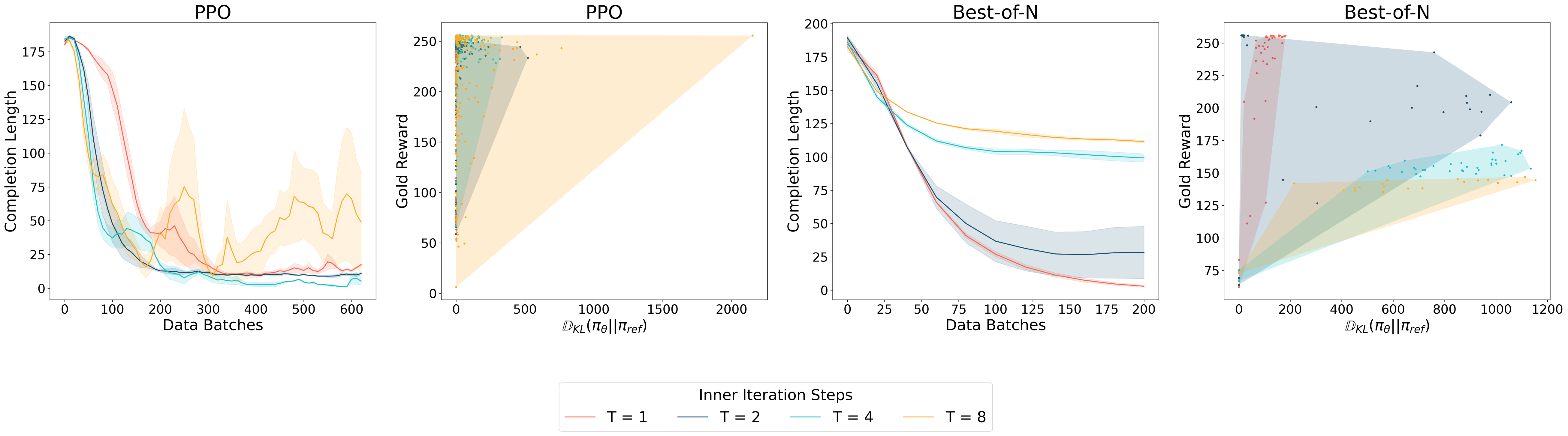}
    \vspace{-0.4cm}
\caption{\label{fig:llm_length_t_ablation_appendix}\footnotesize{\textbf{Effect of on-policy sample reuse in the Skew Length scenario.} Average completion length (i.e., the lower the better) vs gradient steps for different numbers of inner iteration steps, $T$, on the same data batch. A larger value of $T$ implies that the algorithm is more off-policy. Observe that some sample reuse can improve sample efficiency (T = 2 and T = 4 outperform T = 1), but excessive sample reuse can hurt performance (T = 8 becomes unstable for PPO). Also note that algorithms with mechanisms to control off-policy updates such as PPO with importance-weight clipping are suited to perform better in the off-policy sample reuse setting.}}
    \vspace{-0.2cm}
\end{figure}

\end{document}